\newcommand{\dWinfty}{\mathrm{d}_{\mathrm{W}^\infty}}
\newcommand{\cEn}{\cE_n}
\newcommand{\cEntau}{\cE_{n,\tau}}
\newcommand{\cEnyn}{\cEntau^{(\mathbf{y}_n)}}
\newcommand{\cEngn}{\cEntau^{(\mathbf{g}_n)}}
\newcommand{\cEinftytau}{\cE_{\infty,\tau}}
\newcommand{\cEinftyg}{\cEinftytau^{(g)}}
\newcommand{\Lany}{\textrm{\textnormal{L}}}
\newcommand{\Lp}{\textrm{\textnormal{L}}^p}
\newcommand{\Linfty}{\textrm{\textnormal{L}}^\infty}
\newcommand{\Ltwo}{\textrm{\textnormal{L}}^2}
\newcommand{\SobSpace}{\textrm{\textnormal{H}}}
\newcommand{\Hs}{\SobSpace^s}
\newcommand{\Cinfty}{\textrm{\textnormal{C}}^\infty}
\definecolor{mygreen}{rgb}{0.1,0.75,0.2}
\def\Lp#1{\mathrm{L}^{#1}}
\def\Hk#1{\mathrm{H}^{#1}}
\def\Ck#1{\mathrm{C}^{#1}}
\def\trace{\mathrm{trace}}
\title{Rates of Convergence for Regression with the Graph Poly-Laplacian}
\author[1]{Nicol\'{a}s Garc\'{i}a Trillos}
\author[2]{Ryan Murray}
\author[3]{Matthew Thorpe}
\affil[1]{Department of Statistics,\protect\\ University of Wisconsin--Madison,\protect\\ Madison, WI 53706, USA. \vspace{\baselineskip}}
\affil[2]{Department of Mathematics,\protect\\ North Carolina State University,\protect\\ Raleigh, NC 27695, USA. \vspace{\baselineskip}}
\affil[3]{School of Mathematics,\protect\\ University of Manchester,\protect\\ Manchester, M13 9PL, UK.}
\date{September 2022}
\begin{document}

\maketitle

\begin{abstract}
In the (special) smoothing spline problem one considers a variational problem with a quadratic data fidelity penalty and Laplacian regularisation.
Higher order regularity can be obtained via replacing the Laplacian regulariser with a poly-Laplacian regulariser.
The methodology is readily adapted to graphs and here we consider graph poly-Laplacian regularisation in a fully supervised, non-parametric, noise corrupted, regression problem.
In particular, given a dataset $\{x_i\}_{i=1}^n$ and a set of noisy labels $\{y_i\}_{i=1}^n\subset\bbR$ we let $u_n:\{x_i\}_{i=1}^n\to\bbR$ be the minimiser of an energy which consists of a data fidelity term and an appropriately scaled graph poly-Laplacian term.
When $y_i = g(x_i)+\xi_i$, for iid noise $\xi_i$, and using the geometric random graph, we identify (with high probability) the rate of convergence of $u_n$ to $g$ in the large data limit $n\to\infty$.
Furthermore, our rate, up to logarithms, coincides with the known rate of convergence in the usual smoothing spline model.
\end{abstract}

\noindent
\keywords{non-parametric regression on unknown domains, supervised learning, asymptotic consistency, rates of convergence, PDEs on graphs, nonlocal variational problems}

\noindent
\subjclass{49J55, 49J45,  62G20, 35J20}

\section{Introduction} \label{sec:Intro}

Given the applications to signal processing and computer science the smoothing spline problem has been attracting the interest of Statisticians since the 1960's~\cite{schoenberg64,schoenberg64a}.
The problem can stated as~\cite{wahba1990spline}: given feature vectors $\{x_i\}_{i=1}^n\subset\Omega\subset\bbR^d$ and labels $\{y_i\}_{i=1}^n\subset\bbR$ minimise
\begin{equation} \label{eq:Intro:Spline}
\cE_n^{(\mathrm{spline})} (u) = \frac{1}{n} \sum_{i=1}^n |y_i - u(x_i)|^2 + \tau \|\nabla^s u\|_{\Lp{2}(\Omega)}^2
\end{equation}
over all $u\in\Hk{s}(\Omega)$ where $\Hk{s}(\Omega)$ is the Sobolev space with square integrable $s$th (weak) derivative on $\Omega$.
Here, one tries to find an unknown function $u:\Omega\to\bbR$ that is trying to match the observed labels $\{y_i\}_{i=1}^n$ at $\{x_i\}_{i=1}^n$ whilst being smooth (in the sense of $\Hk{s}$ regularisation).
It is important to note that the regularity penalty is applied uniformly throughout the domain $\Omega$.
\vspace{\baselineskip}

Recently, the spline methodology has found use in data science and machine learning as a candidate for semi-supervised or fully-supervised learning.
Zhu, Ghahramani and Lafferty~\cite{zhu03} introduced the following variational problem as a method for finding missing labels.
They assumed that for every pair of feature vectors $x_i,x_j$ with $i,j\in\{1,\dots,n\}$ one has a measure of similarity $W_{i,j}$.
Further assuming that there is no error in the observed labels $\{y_i\}_{i\in Z_n}$, where $Z_n\subsetneq\{1,\dots,n\}$, they proposed the variational problem: minimise
\begin{equation} \label{eq:Intro:ZGL}
\cE_n^{(\mathrm{ZGL})}(u) = \sum_{i,j=1}^n W_{i,j} |u(x_i) - u(x_j)|^2
\end{equation}
subject to $u(x_i) = y_i$ for all $i\in Z_n$ over $u:\{x_i\}_{i=1}^n\to \bbR$.
The power of this method is that the regularisation will be applied more strongly when the density of data is higher.
Indeed, the continuum, $n\to\infty$, limit of~\eqref{eq:Intro:ZGL} is, up to a multiplicative constant,
\begin{equation} \label{eq:Intro:ZGLLimit}
\cE_\infty(u) = \int_\Omega |\nabla u(x)|^2 \rho^2(x) \, \dd x
\end{equation}
where $\rho$ is the density of the data.
We see that where $\rho$ is large the minimiser of~\eqref{eq:Intro:ZGLLimit} should be smoother, and conversely when $\rho$ is smaller the minimiser can fluctuate more.
This is typically desirable behaviour in classification tasks since the minimiser can be expected to be approximately constant within clusters and quickly transitioning outside of clusters where the density of data is assumed to be low.
\vspace{\baselineskip}

Several types of convergence result connecting~\eqref{eq:Intro:ZGL} to~\eqref{eq:Intro:ZGLLimit} exist in the literature.
For instance pointwise convergence of the objective functionals was established in~\cite{nadler09,elalaoui16}.
Rates of convergence between constrained minimisers of~\eqref{eq:Intro:ZGL} to constrained minimisers of~\eqref{eq:Intro:ZGLLimit} appeared in~\cite{calder20}.
Further results concern the convergence of the Laplacian operator, without rates in~\cite{vonLuxburg-belkin,belkin2007convergence,GTSSpectralClustering} and with rates in~\cite{HeinvonLuxburgAudibert,Singer,SpecRatesTrillos,calder19}, the game theoretic Laplacian in~\cite{calder18}, the $p$-Laplacian in~\cite{SlepcevThorpe}, and the $\infty$-Laplacian in~\cite{calder2017consistency}.
However, none of these results consider \emph{asymptotic consistency} in the sense that the minimiser of~\eqref{eq:Intro:ZGL} converges to a ``true function''.
(It would be more accurate to describe the above results as \emph{convergence} properties of the method.)
To our knowledge we are the first to consider consistency in the graph-based setting.
\vspace{\baselineskip}

When there is uncertainty in the observed labels then minimising~\eqref{eq:Intro:ZGL} with constraints is not the natural model.
Instead, as in~\eqref{eq:Intro:Spline}, we can write a \emph{soft} version of the Zhu, Ghahramani and Lafferty model by
\begin{equation} \label{eq:Intro:ZGLSoft}
\cEnyn(u) = \frac{1}{n} \sum_{i=1}^n |u(x_i) - y_i|^2 + \frac{\tau}{n^2\eps^2} \sum_{i,j=1}^n W_{i,j} |u(x_i) - u(x_j)|^2
\end{equation}
where we include the correct scaling on the second term.
The parameter $\tau$ controls the weighting between regularity and matching the data: formally $\tau=0$ corresponds to the hard constrained problem.
In some settings the large data limits for minimisers of~\eqref{eq:Intro:ZGLSoft} can be inferred from the \emph{hard} constrained problem, see~\cite{calder20}.
\vspace{\baselineskip}

To complete the generalisation of the Zhu, Ghahramani and Lafferty model to an analogue of the spline problem~\eqref{eq:Intro:Spline} we discuss higher order regularisation.
The Dirichlet energy $\cE_n^{(\textrm{ZGL})}$ can be written in inner product form:
\[ \cE_n^{(\textrm{ZGL})}(u) = \langle \Delta_n u, u\rangle_{\Lp{2}(\mu_n)} \]
where $\Delta_n$ is the graph Laplacian and $\Lp{2}(\mu_n)$ is the space of $\Lp{2}$ functions with respect to the empirical measure $\mu_n$ (defined in the following subsection).
A natural method to introduce higher order regularity is to consider higher powers of the graph-Laplacian, in particular
\begin{equation} \label{eq:Intro:ZGLSoftHigher}
\cEnyn(u) = \frac{1}{n} \sum_{i=1}^n (u(x_i) - y_i)^2 + \tau \langle \Delta_n^s u, u\rangle_{\Lp{2}(\mu_n)}
\end{equation}
in which case~\eqref{eq:Intro:ZGLSoft} is the special case of~\eqref{eq:Intro:ZGLSoftHigher} with $s=1$.
In the context of graphs this model was introduced in~\cite{dunlop18}, although the fractional Laplacian, including non-local and discrete versions, has been of interest in applied mathematics for much longer, see for example~\cite{ciaurri15} and references therein.
As observed in~\cite{zhu03} and (in terms of uncertainty quantification)~\cite{bertozzi18},~\eqref{eq:Intro:ZGLSoftHigher} has an interpretation as a maximum a-posteriori (MAP) estimate for the Gaussian process regression method (also known as kriging). 
As an aside we mention works that have explored the connection between discrete and continuum problems in the Bayesian setting such as \cite{SanzAlonsoTrillos,KaplanSamak} and~\cite{SanzAlonsoYangMatern}, where the latter introduces Mat\'ern priors on graphs and studies their continuum limits.
In this paper we use the setting of~\cite{dunlop18} to recover noisy observations of a labeling function $g$ as the MAP estimator given the data $\{(x_i,y_i)\}_{i=1}^n$ and using the graph poly-Laplacian to define a prior.
Our focus will be on recovering the labels $g$ \textit{as well as} some of its higher order information at the data points $\{ x_i \}_{i=1}^n$ in the form of powers of the Laplacian of $g$.
\vspace{\baselineskip}

Theoretical analysis of splines dates back to the 1960's and we refer to~\cite{wahba1990spline} for an overview of more historical works and only mention a few select (more recent) references here related to large data limits.
Convergence in norm of special splines under various settings have been studied in~\cite{bissantz04,bissantz07,claeskens09,hall05,kauermann09,lai13,lukas06,wang11,arcangeli93} and pointwise convergence results in~\cite{li08,shen11,xiao12,yoshida12,yoshida14}.
The general splines problem writes~\eqref{eq:Intro:Spline} in a more abstract framework.
In particular, one seeks to find $u\in \cH$, where $\cH$ is a reproducing kernel Hilbert space that can be decomposed $\cH = \cH_0\oplus\cH_1$, as the minimiser of
\[ \cE_n^{(\mathrm{gen}\,\mathrm{spline})} (u) = \frac{1}{n} \sum_{i=1}^n |y_i - L_iu|^2 + \tau \|P_1 u\|_{\cH}^2 \]
where $L_i\in\cH^*$ and $P_1:\cH\to\cH_1$ is the orthogonal projection.
For an appropriate choice of $\cH$ and $L_i$ one recovers the special smoothing spline problem (as a special case of the general smoothing spline problem).
The general smoothing spline problem has itself attracted attention with large data convergence in norm results appearing in~\cite{wahba85,kimeldorf70,cox88,nychka89,carroll91,mair96} and weak convergence results in~\cite{thorpe17}.
\vspace{\baselineskip}

In this paper, using the model~\eqref{eq:Intro:ZGLSoftHigher}, we consider the problem of non-parametric regression of a noisy signal $g$ observed at finitely many points that are randomly selected from an \textit{unknown} probability distribution $\mu$ on the torus $\cT$.
More precisely, we assume we are given a set of feature vectors $\{x_i\}_{i=1}^n\subset\Omega:=\cT$ and a set of associated noisy real valued labels $\{y_i\}_{i=1}^n$ satisfying
\begin{equation} \label{eq:Intro:Labels}
y_i = g(x_i)+ \xi_i,
\end{equation}
from which we wish to recover the true signal $g$, also known as the label function.
The random variables $\xi_i$ are assumed to be mean zero, sub-Gaussian and independent.
Our main results establish variance and bias estimates for the error of approximation of the signal $g$ and of some of its higher order information by the solution $u$ of a variational problem characterized by a graph PDE of the form
\begin{equation} \label{eq:Intro:GraphPDE}
\tau \Delta_n^s u  + u = y.
\end{equation}
Up to logarithms we establish an $O(n^{-\frac{s}{d+4s}})$ rate of convergence of minimisers of $\cEnyn$ in~\eqref{eq:Intro:ZGLSoftHigher} to $g$.
This is comparable to the $O(n^{-\frac{s}{d+2s}})$ rate of convergence in splines, see~\cite{stone82}.
Projecting the samples onto the first $K$ eigenvectors of the graph Laplacian gives a better rate of $O(n^{-\frac{2s}{2s+d}})$ (the minimax rate)~\cite{green2021minimax}, however $K$ depends on the $\Hk{s}$ norm of $g$ which will usually be unknown.
See Remarks~\ref{rem:Intro:Res:ErrorL2:Splines} and~\ref{rem:Intro:Res:ErrorL2:Tibshirani} for further details.
\vspace{\baselineskip}

Methods such as kernel ridge regression are closely related to smoothing spline models.
These methods use a data fidelity (on $\{x_i\}_{i=1}^n$) plus regularisation on $\Omega$ (in particular incorporating $\Omega$'s geometry \textit{explicitly}) to attempt to recover $g$.
In that setting, the question of how to set regularisation parameters was studied in \cite{cucker2002best,caponnetto2007optimal}.
Some very recent works have focused on studying the ``ridgeless case'', where one considers the limit as one sets the regularization parameter to zero, with both positive \cite{liang2018just} and negative \cite{rakhlin2018consistency} results depending on the richness of the data.
A related approach is to interpolate between data points using $\Ck{m}$ penalisation~\cite{fefferman09,fefferman09a,fefferman09b} or Sobolev penalisation~\cite{fefferman16,fefferman16a,fefferman16b}. 

There are connections between the Gaussian process regression (kriging) method approach that we take here and the generalised lasso model (which includes the lasso, the fused lasso, trend filtering, and the more closely related to our work: graph fused lasso, graph trend filtering, and Kronecker trend filtering), see for example~\cite{tibshirani11}.
Both Gaussian process regression and generalised lasso attempt to recover an unknown function $g$ from noisy observations of the form~\eqref{eq:Intro:Labels} in the fully supervised setting (i.e. for each feature vector $x_i$ we have an observation $y_i$).
However, in the lasso models the function $g$ is assumed to be linear, i.e. $g(x) = \beta\cdot x$ where $\beta$ is an unknown vector which parametrises $g$.
The fused lasso, on the other hand, uses a total variation regularisation in place of the graph poly-Laplacian considered here.
In grid graphs this has been considered in~\cite{hutter2016optimal,sadhanala2016total,sadhanala2017higher} where the estimator is shown to be minimax (the estimator performs best amongst all other estimators in the worst case).
Further results have considered chain graphs~\cite{padilla18}, and $k$-NN and $\eps$-connected graphs~\cite{padilla2019adaptive} (the $\eps$-connected graph setting is also the setting of this paper).
In particular, the $\Ltwo$ convergence rate of the fused lasso on an $\eps$-connected graph is (ignoring logarithms) $O(n^{-\frac{1}{d}})$ which at least in some settings is the minimax rate~\cite{padilla2019adaptive}.
Up to logarithms, and assuming a sufficiently smooth signal $g$, our basic $\Ltwo$ convergence rates for the approximation of $g$ coincide with these rates.
This is also approximately the $\Linfty$ convergence rate given in \cite{GTM-Regression} for the case $s=1$ in~\eqref{eq:Intro:GraphPDE}, which is the minimax $\Linfty$ rate given in \cite{Kpotufe}.
At this point we would like to remark that our variance estimates are meaningful and converge to zero with growing $n$ even if the regularization parameter $\tau$ is not scaled down to zero.
Our results characterize precisely the continuum limit of the solutions to the graph PDE~\eqref{eq:Intro:GraphPDE}, and are of relevance in case one were interested, not only on denoising, but also in enforcing additional regularization.
We also remark that in our results we provide additional information about the convergence towards $g$, by giving convergence rates for higher-order derivatives.

Other approaches for high order regularisation that do not consider Gaussian priors use instead a non-linear $p$-Laplacian operator for large enough $p$.
In the graph setting, results like those in \cite{hafiene-variational} establish that solutions of a $p$-Dirichlet regularised problem converge with rate $n^{-\frac14}$ to the solution of an analogue continuum non-local variational problem; although the setting differs from ours as we scale the connectivity of our graph to obtain a local limit whilst in~\cite{hafiene-variational} the connectivity of the graph remains fixed and the limit is to a nonlocal variational problem. 
Naturally, the advantage of the framework in \cite{hafiene-variational} is that the dimension of the space essentially plays no role in the analysis (depending on the precise edge model one uses) and therefore it is enough to consider the problem in 1D (as the authors do).
On the other hand, by not scaling down the connectivity threshold it is not possible to recover the local geometry.
The same authors, in the same setting, show a rate of convergence for the associated gradient flow~\cite{Hafiene-evolution-2018}.
As was mentioned earlier when discussing generalised Lasso models (in particular in graph trend filtering), total variation is another tool used to regularise regression and classification problems.
This has motivated theoretical works like~\cite{garciatrillos16} which study the convergence of graph total variation to a continuum weighted total variation (the same paper proposed a topology to study the convergence that didn't require regularity --- in particular pointwise evaluation --- of the continuum function).
Total variation functionals are also widely used for clustering and segmentation such as in graph cut methods, for example ratio or Cheeger cuts~\cite{BressonSzlam,garciatrillos20}, graph modularity clustering~\cite{BertozziModularity,davis18}, and Ginzburg--Landau segmentation~\cite{cristoferi18AAA,thorpe19,vangennip12a}.

Since we have observations for all feature vectors our problem is in the fully-supervised setting.
The semi-supervised setting with Laplacian regularisation (closely related to~\eqref{eq:Intro:GraphPDE} with $s=1$ but with hard constraints as opposed to having a penalty term) has been considered in~\cite{calder20} which show an ``ill-posedness result'' (the labels disappear in the large data limit) if the number of labelled points scales below a critical threshold, and a ``well-posedness result'' (the labels remain in the continuum problem) when the number of labels scales linearly with $n$.
Using graph $p$-Laplacian regularisation with finite labels the authors in~\cite{SlepcevThorpe,calder18} show that whether the variational problem is asymptotically well-posed depends on the choice of $p$ and how the graph is scaled.
For the fractional graph Laplacian with finite labels, it is shown in~\cite{dunlop18} that the problem is ill-posed if $s\leq\frac{d}{2}$ or the length scale on the graph is sufficiently large and conjectured that this is sharp. 

We wrap up this brief literature review by pointing out that other approaches to regression on unknown manifolds include \cite{Biometrika1}, where local tangent planes around points are carefully constructed to apply regression methods in the more classical functional data setting.
Our approach is markedly different as it does not rely on the construction of \textit{extrinsic} geometric objects.
In particular, once a proximity graph is defined on the data cloud, all regularisers and the resulting PDEs become \textit{intrinsic} to the graph.
\vspace{\baselineskip}

In the remainder of this section we define the graph and  continuum operators that are analysed in the paper, and then state our main results.

\subsection{Discrete Operators} \label{subsec:Intro:DisOp}

We begin by stating our basic assumptions on the data, and the graph that we use to model it:
\begin{enumerate}[label=(A{{\arabic*}})]
\item\label{ass:Intro:DisOp:A1}{\bf Assumptions on the domain $\Omega$\,:} $\Omega$ is a $d$-dimensional torus.
\item\label{ass:Intro:DisOp:A2} {\bf Assumptions on the feature vectors $x_i$\,:} $x_i\iid \mu$ where $\mu\in\cP(\Omega)$, where $\cP(\Omega)$ is the set of probability measures on $\Omega$;
\item\label{ass:Intro:DisOp:A3} {\bf Assumption on the density of $\mu$\,:} $\mu$ has a density $\rho\in\Cinfty(\Omega)$ that is bounded from above and below by positive constants, i.e. $0<\rho_{\min}:= \min_{x\in \Omega} \rho(x) \leq \max_{x\in\Omega} \rho(x) =: \rho_{\max} <+\infty$.
\item\label{ass:Intro:DisOp:A4} {\bf Assumptions on the graph constructed using the data $\{x_i\}$\,:} $G_n := (\Omega_n,W_n)$ where $\Omega_n=\{x_i\}_{i=1}^n$ are the nodes and $W_n=(W_{i,j})_{i,j=1}^n$ are the edge weights defined by $W_{i,j} = \eta_{\eps}(|x_i-x_j|)$ for $i\neq j$ and $W_{i,i}=0$.  Here $\eta_{\eps} = \frac{1}{\eps^d} \eta(\cdot/\eps)$ and where $\eta:[0,+\infty)\to [0,+\infty)$ is assumed to satisfy:
\begin{enumerate}
\item $\eta(t)>\frac12$ for all $t\leq \frac12$ and $\eta(t)=0$ for all $t\geq 1$; \label{ass:Intro:DisOp:A4:Spt}
\item $\eta$ is decreasing. \label{ass:Intro:DisOp:A4:Dec}
\end{enumerate}
\item\label{ass:Intro:DisOp:A5}{\bf Assumptions on the labelled data\,:} for each $i\in \bbN$, $y_i= g(x_i)+\xi_i$, for $g\in \Cinfty(\Omega)$ and $\xi_i\in \bbR$ are independent and identically distributed (iid), sub-Gaussian centred noise (where sub-Gaussian by definition means there exists $C>c>0$ such that $\bbP\l |\xi_j|>t\r\leq Ce^{-ct^2}$ for all $t\geq 0$). 
\end{enumerate}

\begin{remark}
The assumptions on $\eta$ are technical in nature and are imposed to facilitate some very concrete steps in our analysis.
Assumption \ref{ass:Intro:DisOp:A4:Dec} is slightly stronger than what is typically required in related papers, and will only be used to simplify our computations in, for example, Lemma \ref{lem:VarL2:Noise:OpBounds}.
\end{remark}

The graph Laplacian $\Delta_n$ plays an important role in the regularisation and is defined as follows:
\begin{equation} \label{eq:Intro:DisOp:Deltan}
\Delta_n := \frac{2}{n\eps^2}(D_n-W_n), \qquad D_n=(D_{i,j})_{i,j=1}^n \text{ diagonal matrix with } D_{i,i} = \sum_{k=1}^n W_{i,k}.
\end{equation}
Here we have chosen what is called the \emph{unnormalized} graph Laplacian.

Throughout the paper we will denote the empirical measure $\mu_n := \frac{1}{n} \sum_{i=1}^n \delta_{x_i}$.
We will define an inner product with respect to a (usually probability) measure $\nu$ by
\[ \langle u,v \rangle_{\Ltwo(\nu)} := \int_\Omega u(x) v(x) \, \dd \nu(x) \qquad \text{for } u,v \text{ measureable w.r.t. } \nu. \]
And the associated $\Ltwo$ norm by $\| u\|_{\Ltwo(\nu)} = \sqrt{\langle u,u\rangle_{\Ltwo(\nu)}}$.
When $\nu = \mu_n$ then the norm can be written $\| u\|_{\Ltwo(\mu_n)} = \sqrt{\frac{1}{n} \sum_{i=1}^n u(x_i)^2}$.

There is a small abuse in notation in how we define $\Delta_n$ since we will also write $\Delta_n:\Ltwo(\mu_n) \to \Ltwo(\mu_n)$; in this case we associate $u_n\in\Ltwo(\mu_n)$ with its vector representation $(u_n(x_1),\dots, u_n(x_n))^\top$.

Given $\mathbf{a}_n = (a_1,\dots, a_n),$ with $a_i \in \mathbb{R}$, we let 
\begin{equation} \label{eq:Intro:DisOp:Entauan}
\cEntau^{(\mathbf{a}_n)}= \frac{1}{n} \sum_{i=1}^n |u_n(x_i) - a_i|^2 + \tau R_n^{(s)}(u_n),
\end{equation}
where the regularisation is given by
\begin{equation} \label{eq:Intro:DisOp:Rns}
R_n^{(s)}(u_n) = \langle \Delta_n^s u_n,u_n\rangle_{\Ltwo(\mu_n)},
\end{equation}
and here $s$ is a positive integer with $\Delta_n^s$ the $s$-th power of the matrix.
We will mostly be concerned with the situation where $\mathbf{a}_n = \mathbf{y}_n = (y_1,\dots, y_n)$, which gives the energy
\begin{equation} \label{eq:Intro:DisOp:cEnyn}
\cEnyn(u_n) = \frac{1}{n} \sum_{i=1}^n |u_n(x_i) - y_i|^2 + \tau R_n^{(s)}(u_n)
\end{equation}
We will define $u_{n,\tau}^*$ to be the minimiser of~\eqref{eq:Intro:DisOp:cEnyn}. 
Note that when $s=1$, 
\[ R_n^{(1)}(u_n) = \frac{1}{n^2\eps^{2}} \sum_{i,j=1}^n W_{i,j} |u_n(x_i) - u_n(x_j)|^2 \]
and the regularisation functional is the graph Dirichlet energy.
We define $R_n^{(s)}$ for non-integer powers via the eigenvector-eigenvalue expansion (however our results consider only integer powers).
That is, we let $(\lambda_i^{(n)},q_i^{(n)})$ be eigenpairs of $\Delta_n$ then, since $\{q_i^{(n)}\}_{i=1}^n$ form an orthonormal basis of $\Ltwo(\mu_n)$ and we can write
\[ R_n^{(s)}(u_n) = \sum_{i=1}^n (\lambda_i^{(n)})^s \langle u_n,q_i^{(n)}\rangle_{\Ltwo(\mu_n)}^2 \]
which is defined for any $s\in\bbR$.

\subsection{Continuum Operators} \label{subsec:Intro:ContOp}

We now define the appropriate continuum operators and variational formulations.
It is well-known that as $n \to \infty$, the operator $\Delta_n$ converges to a continuum limit $\Delta_\rho$ \cite{belkin2007convergence,HeinvonLuxburgAudibert,Singer,calder20,calder18,GTSSpectralClustering,SpecRatesTrillos}, where $\Delta_\rho$ is the differential operator defined by
\begin{equation} \label{eq:Intro:ContOp:Delta}
\Delta_\rho \phi := -\frac{\sigma_\eta}{\rho} \Div(\rho^2\nabla \phi)
\end{equation}
and $\sigma_\eta$ is the constant defined by
\begin{equation} \label{eq:Intro:ContOp:sigmaeta}
\sigma_\eta:= \int_{\bbR^d} \eta(|h|) |h_1|^2 \, \dd h.
\end{equation}

For $\tau>0$ fixed, the continuum objective functional is defined by
\begin{equation} \label{eq:Intro:ContOp:cEinftyg}
\cEinftyg(u) = \int_\Omega |u(x) - g(x)|^2 \rho(x)\, \dd x + \tau R_\infty^{(s)}(u)
\end{equation}
where
\begin{equation} \label{eq:Intro:ContOp:Rinftys}
R_\infty^{(s)}(u) = \langle \Delta_\rho^s u,u\rangle_{\Ltwo(\mu)}.
\end{equation}
We will define $u_\tau^*$ to be the minimiser of~\eqref{eq:Intro:ContOp:cEinftyg}.
Again, we observe that when $s=1$ the regularisation functional,
\[ R_\infty^{(1)}(u) = \sigma_\eta \int_\Omega |\nabla u(x)|^2 \rho^2(x) \, \dd x, \]
is a weighted Dirichlet energy.

We remark that, by the fact that $\rho$ is bounded from below, we may integrate by parts to obtain
\[ cR_\infty^{(s)}(u) \leq  \int_\Omega |D^s u(x)|^2\,\dd x \leq C R_\infty^{(s)}(u) \]
for some constants $C>c>0$.

We can also define $R_\infty^{(s)}$ for non-integer powers analogously to the discrete case.
More concretely, by the spectral theorem and the fact that $\Omega$ is compact, if we let $(\lambda_i,q_i)$ be eigenpairs of $\Delta_\rho$ then $\{q_i\}_{i=1}^\infty$ form an orthonormal basis of $\Ltwo(\mu)$. 
In turn we can define
\[ R_\infty^{(s)}(u) = \sum_{i=1}^\infty \lambda_i^s \langle u,q_i\rangle_{\Ltwo(\mu)}^2 \]
which is well-defined for any $s\in \bbR$.

\subsection{Main Results} \label{subsec:Intro:Res}

Our results are to bound the bias and variance of the estimator $u_{n,\tau}^*$, defined as the minimiser of $\cEnyn$.
Following the terminology of \cite{cucker2002best} we define the variance of the estimator by
\[ \| u_{n,\tau}^* - u^*_\tau\lfloor_{\Omega_n} \|_{\Ltwo(\mu_n)} \]
where $u^*_\tau$ is the minimiser of $\cEinftyg$, and the bias is defined to be
\[ \| u^*_\tau - g \|_{\Ltwo(\mu)}. \]
The main results are the following.

\subsubsection{\texorpdfstring{$\Lp{2}$}{L2} Variance Estimates} \label{subsubsec:Intro:Res:VarL2}

We state the $\Lp{2}$ variance estimates in the following theorem.

\begin{theorem}[{\bf Variance Estimates}]
\label{thm:Intro:Res:VarL2:VarBound}
Let Assumptions~\ref{ass:Intro:DisOp:A1}-\ref{ass:Intro:DisOp:A5} hold and $s\in\bbN$.
We define $\cEnyn$ by~\eqref{eq:Intro:DisOp:cEnyn} and $\cEinftyg$ by~\eqref{eq:Intro:ContOp:cEinftyg} where $R_n^{(s)}$ is defined by~\eqref{eq:Intro:DisOp:Rns}, $R_\infty^{(s)}$ by~\eqref{eq:Intro:ContOp:Rinftys}, $\Delta_n$ by~\eqref{eq:Intro:DisOp:Deltan} and $\Delta_\rho$ by~\eqref{eq:Intro:ContOp:Delta}.
Let $u_{n,\tau}^*$ be the minimiser of $\cEnyn$ and $u^*_\tau$ be the minimiser of $\cEinftyg$.
Then, for all $\alpha>1$, there exists $\eps_0>0$, $\tau_0>0$, $C>c>0$ such that for all $\eps,n$ satisfying
\begin{equation} \label{eq:Intro:Res:VarL2:epsBound}
\eps_0 \geq \eps \geq C\sqrt[d]{\frac{\log(n)}{n}}
\end{equation}
and $\tau\in (0,\tau_0)$ we have 
\begin{align*}
\| u_{n,\tau}^* - u^*_\tau\lfloor_{\Omega_n} \|_{\Ltwo(\mu_n)} & \leq C\l \sqrt{\frac{\log(n)}{n\eps^{d}}} + \frac{\eps^{2s}}{\tau} + \tau\eps \r 
\end{align*}
with probability at least $1-C\l n^{-\alpha} + n e^{-cn\eps^{d+4s}}\r$.
\end{theorem}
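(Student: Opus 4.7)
The strategy is to reduce the variance to two well-controlled error contributions --- one stochastic (the noise $\boldsymbol{\xi}$) and one deterministic (graph-to-continuum consistency of $\Delta_n^s$ applied to the smooth function $u_\tau^*$) --- by exploiting the Euler-Lagrange equations of both variational problems. Setting $v_n := u_{n,\tau}^* - u_\tau^*\lfloor_{\Omega_n}$ and combining
\[ u_{n,\tau}^* + \tau\Delta_n^s u_{n,\tau}^* = \mathbf{y}_n \text{ on } \Omega_n, \qquad u_\tau^* + \tau\Delta_\rho^s u_\tau^* = g \text{ on } \Omega, \]
with $y_i = g(x_i)+\xi_i$, I obtain the error identity
\[ v_n + \tau\Delta_n^s v_n = \boldsymbol{\xi} + \tau(\Delta_\rho^s - \Delta_n^s) u_\tau^*\lfloor_{\Omega_n}. \]
Taking the $\Ltwo(\mu_n)$-inner product with $v_n$ yields
\[ \|v_n\|_{\Ltwo(\mu_n)}^2 + \tau R_n^{(s)}(v_n) = \langle\boldsymbol{\xi},v_n\rangle_{\Ltwo(\mu_n)} + \tau\langle(\Delta_\rho^s-\Delta_n^s)u_\tau^*,v_n\rangle_{\Ltwo(\mu_n)}, \]
and the plan is to bound each right-hand-side term by pieces that Young's inequality absorbs into the left-hand side, leaving additive contributions matching the three terms in the target estimate.

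For the noise term I expand $v_n = \sum_k c_k q_k^{(n)}$ in the $\Delta_n$-eigenbasis, with coefficients $\xi_k := \langle\boldsymbol{\xi},q_k^{(n)}\rangle_{\Ltwo(\mu_n)}$, and split at a spectral threshold $K \sim \eps^{-d}$ --- the index where the graph eigenvalues saturate the maximum scale $O(\eps^{-2})$ (Weyl asymptotics transferred from $\Delta_\rho$ through spectral consistency of $\Delta_n$). The low-frequency part is controlled by sub-Gaussian concentration of each $\xi_k$ at scale $n^{-1/2}$ and a union bound over the $K$ components, giving $|\sum_{k\le K}c_k\xi_k| \lesssim \|v_n\|_{\Ltwo(\mu_n)}\sqrt{K\log(n)/n} \sim \|v_n\|_{\Ltwo(\mu_n)}\sqrt{\log(n)/(n\eps^d)}$, which yields the first term of the bound. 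The high-frequency part, by Cauchy-Schwarz weighted with $(\lambda_k^{(n)})^{s/2}$, is bounded by $\sqrt{R_n^{(s)}(v_n)}\,\|\boldsymbol{\xi}\|_{\Ltwo(\mu_n)}/(\lambda_K^{(n)})^{s/2} \lesssim \sqrt{R_n^{(s)}(v_n)}\,\eps^s$, using $\lambda_K^{(n)} \sim \eps^{-2}$ and $\|\boldsymbol{\xi}\|_{\Ltwo(\mu_n)} = O(1)$ with high probability; Young's inequality then extracts the $\eps^{2s}/\tau$ contribution while absorbing $\tfrac{\tau}{2}R_n^{(s)}(v_n)$ into the LHS. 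For the operator-consistency term, elliptic regularity of the continuum equation (and $g\in\Cinfty(\Omega)$) supplies $\tau$-uniform smoothness of $u_\tau^*$, so that the telescoping identity
\[ \Delta_n^s u_\tau^* - \Delta_\rho^s u_\tau^* = \sum_{k=0}^{s-1}\Delta_n^{s-1-k}(\Delta_n-\Delta_\rho)\Delta_\rho^k u_\tau^*, \]
combined with pointwise consistency $\|(\Delta_n-\Delta_\rho)\phi\|_{\Ltwo(\mu_n)}\lesssim\eps$ and the smoothness-preserving estimate $\|\Delta_n^j\phi\|_{\Ltwo(\mu_n)}\lesssim 1$ for smooth $\phi$, gives $\|(\Delta_n^s-\Delta_\rho^s)u_\tau^*\|_{\Ltwo(\mu_n)} \lesssim \eps$ and hence, via Cauchy-Schwarz and absorption, the $\tau\eps$ term.

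The principal obstacle is the high-frequency control in the noise term, which demands: (i) a quantitative spectral-consistency statement ensuring that $\lambda_K^{(n)}$ actually reaches scale $\eps^{-2}$ at $K\sim\eps^{-d}$ --- a nontrivial estimate in the spirit of the works cited in the introduction --- and (ii) sub-Gaussian tail control uniformly across the first $\sim\eps^{-d}$ eigenvectors, requiring either $\Linfty$-type bounds on the $q_k^{(n)}$ or a chaining argument. A secondary delicate point is avoiding a naive cascade in the telescoped operator estimate: substituting the crude bound $\|\Delta_n\|\lesssim\eps^{-2}$ at each step would deliver the unusable rate $\eps^{3-2s}$, so one must preserve smoothness throughout the iteration. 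The failure probability $n e^{-cn\eps^{d+4s}}$ appearing in the conclusion is consistent with the scaling required for the iterated pointwise consistency of $\Delta_n^s$ to hold simultaneously across all $x_i$.
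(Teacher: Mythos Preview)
Your decomposition via the Euler--Lagrange equations into a noise contribution and an operator-consistency contribution is exactly the paper's structure (the paper inserts the noiseless discrete minimiser $u_{n,\tau}^{g*}$, but by linearity this is equivalent). Both halves of your execution, however, carry gaps.

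For the noise, the paper avoids spectral decomposition entirely. It introduces the diagonal ansatz $\tilde{w}_n = \bigl(\tau(\tfrac{2}{n\eps^2}D_n)^s + \Id\bigr)^{-1}\boldsymbol{\xi}$, which trivially has norm $O(\eps^{2s}/\tau)$ since each $D_{i,i} \sim n$, and then bounds $\|w_{n,\tau}^* - \tilde{w}_n\|_{\Ltwo(\mu_n)}$ by the gradient of the energy at $\tilde{w}_n$. Expanding $(D_n - W_n)^s - D_n^s$, every surviving term contains at least one factor of $W_n$ applied to a vector with conditionally independent entries; Hoeffding over the $O(n\eps^d)$ neighbours of each node then gives the $\sqrt{\log(n)/(n\eps^d)}$ bound, using no eigenvalue information beyond $\|\Delta_n\|_{\mathrm{op}} \lesssim \eps^{-2}$. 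Your spectral route instead hinges on the lower bound $\lambda_K^{(n)} \gtrsim \eps^{-2}$ at index $K \sim \eps^{-d}$, which you correctly flag as the principal obstacle; it is not supplied by the paper or by the spectral-consistency references at that index range, so as written this is a genuine gap rather than a routine lookup.

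For the operator-consistency term, you identify the cascade obstacle, but ``preserve smoothness throughout the iteration'' is not yet a fix: the function $(\Delta_n - \Delta_\rho)\phi$ is \emph{not} smooth, so the bound $\|\Delta_n^j\psi\|_{\Ltwo(\mu_n)} \lesssim \|\psi\|_{\Ck{2j}}$ cannot be applied to it. The paper's mechanism is a higher-order pointwise expansion $(\Delta_n - \Delta_\rho)v = \sum_{i\geq 1} \eps^i E_i$ in which each coefficient $E_i$ \emph{is} smooth (an explicit polynomial in derivatives of $v$ and $\rho$), with $E_i \in \Ck{r-i-2}$ when $v \in \Ck{r}$. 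For the outer $\Delta_n^{s-k}$ acting on $\eps^i E_i$ one splits $\Delta_n^{s-k} = \Delta_n^{(i-1)/2}\Delta_n^{s-k-(i-1)/2}$, applies the crude operator bound to the first factor (eating $\eps^{i-1}$) and the smoothness bound to the second (giving $O(1)$), recovering $O(\eps)$ uniformly in $i$. This higher-order expansion---not a single $O(\eps)$ consistency estimate---is the idea you are missing.
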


Let $\mathbf{g}_n = (g(x_1),\dots, g(x_n))$, and let $u_{n,\tau}^{g*}$ be the minimiser of the ``noiseless'' energy $\cEngn$.
The proof of Theorem~\ref{thm:Intro:Res:VarL2:VarBound} is divided into two steps; in the first we compare $u^*_{n,\tau}$ and $u^{g*}_{n,\tau}$ (corresponding to averaging in $y$) which gives a quantitative bound on the effect of the noise, in the second part we compare $u^{g*}_{n,\tau}$ with $u^*_\tau$ (which corresponds to averaging in $x$).
We do this in Sections~\ref{subsec:VarL2:Noise} and~\ref{subsec:VarL2:DisCtsNoiseless} respectively.

\begin{remark}
We notice that the estimates are meaningful for fixed $\tau$ when $n$ goes to infinity, i.e. $\tau$ is not required to become smaller with growing $n$.
\end{remark}

\begin{remark}
\label{rem:Intro:Res:VarL2:VarDeltaBound}
In addition to the bound in $\Ltwo(\mu_n)$ between $u_{n,\tau}^*$ and $u_\tau^*$ we are able to show a bound between the Laplacians $\Delta_n^{\frac{s}{2}} u_{n,\tau}^*$ and $\Delta_\rho^{\frac{s}{2}} u^*_\tau\lfloor_{\Omega_n}$ when $s$ is even.
More precisely, our results show,
\[ \lda \Delta_n^{\frac{s}{2}} u_{n,\tau}^* - \Delta_\rho^{\frac{s}{2}} u_\tau^*\lfloor_{\Omega_n} \rda_{\Ltwo(\mu_n)} \leq C \l \sqrt{\frac{\log(n)}{n\eps^d\tau}} + \frac{\eps^s}{\tau} + \eps \r \]
with the same probability as in Theorem~\ref{thm:Intro:Res:VarL2:VarBound}.
This inequality likely generalises to odd $s$, but to prove it using the methods in this paper we would require a pointwise convergence result for the graph derivative (i.e. the operator $\nabla_n$ such that $\Div_n \circ \nabla_n = \Delta_n$ and $\Div_n$ is the adjoint operator to $\nabla_n$) which is beyond the scope of the paper.
\end{remark}

\begin{remark}
\label{rem:Intro:Res:VarL2:CompHafiene}
We offer a comparison with the estimates in~\cite{hafiene-variational} (although note that a direct comparison is not possible as we scale $\eps\to0$ whilst~\cite{hafiene-variational} work in the setting where $\eps>0$ is fixed).
If, as in~\cite{hafiene-variational}, we fix $\tau>0$, and therefore absorb it into our constants, and choose $s=1$ then the error bound simplifies to
\[ \| u_{n,\tau}^* - u^*_\tau\lfloor_{\Omega_n} \|_{\Ltwo(\mu_n)} \leq C\l \sqrt{\frac{\log(n)}{n\eps^d}} + \eps \r. \]
Unfortunately, optimising over $\eps$ implies a scaling in $\eps=\eps_n$ of
\[ \eps_n \sim \l\frac{\log(n)}{n}\r^{\frac{1}{d+2}} \]
which is outside of the conditions of Theorem~\ref{thm:Intro:Res:VarL2:VarBound} as~\eqref{eq:Intro:Res:VarL2:epsBound} does not hold (one needs $n\eps^{d+4}\gg \log(n)$ in order to get a high probability bound).
Instead we choose $\eps_n\sim \l\frac{\log(n)}{n}\r^{\frac{1}{d+4}}$. 
With this choice the error scales as
\[ \| u_{n,\tau}^* - u^*_\tau\lfloor_{\Omega_n} \|_{\Ltwo(\mu_n)} \lesssim \l\frac{\log(n)}{n} \r^{\frac{1}{d+4}}. \]
The results in~\cite{hafiene-variational} show that for the $p$-Laplacian regularized problem a rate of convergence $n^{-\frac14}$ when $d=1$, $s=1$ and $\eps>0$ is fixed independently of $n$, compared to our rate of convergence of $n^{-\frac15}$ (up to logarithms).
\end{remark}

\subsubsection{\texorpdfstring{$\Lp{2}$}{L2} Bias Estimates} \label{subsubsec:Intro:Res:BiasL2}

We have the following $\Lp{2}$ bias estimate.

\begin{theorem}[{\bf Bias Estimates}]
\label{thm:Intro:Res:BiasL2:BiasBound}
Let Assumptions~\ref{ass:Intro:DisOp:A1},\ref{ass:Intro:DisOp:A3} hold and $\tau>0$, $s\geq 1$ and $g\in \SobSpace^s(\Omega)$. 
We define $\cEinftyg$ by~\eqref{eq:Intro:ContOp:cEinftyg} where $R_\infty^{(s)}$ is defined by~\eqref{eq:Intro:ContOp:Rinftys} and $\Delta_\rho$ by~\eqref{eq:Intro:ContOp:Delta}.
Let $u_\tau^*$ be the minimiser of $\cEinftyg$, then
\[ \| u_\tau^* - g \|_{\Ltwo(\mu)} \leq \tau \|\Delta_\rho^s g\|_{\Ltwo(\mu)}. \]
\end{theorem}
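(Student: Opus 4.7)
The plan is to exploit the strict convexity of $\cEinftyg$ on $\SobSpace^s(\Omega)$ via its first-order optimality (Euler--Lagrange) condition, together with the fact that $\Delta_\rho$ is self-adjoint and non-negative on $\Ltwo(\mu)$ (the self-adjointness follows by integration by parts: since $\Omega$ is a torus there are no boundary terms, and $\langle \Delta_\rho \phi, \psi \rangle_{\Ltwo(\mu)} = \sigma_\eta \int_\Omega \rho^2 \nabla \phi \cdot \nabla \psi \, \mathrm{d}x$). Writing $\cEinftyg$ as a sum of two quadratic forms on the Hilbert space $\SobSpace^s(\Omega)$, the minimiser $u_\tau^\ast$ satisfies, for every admissible test function $\phi$,
\[
\langle u_\tau^\ast - g, \phi \rangle_{\Ltwo(\mu)} + \tau \langle \Delta_\rho^s u_\tau^\ast, \phi \rangle_{\Ltwo(\mu)} = 0.
\]

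Next, I would test against the natural choice $\phi = u_\tau^\ast - g$. Adding and subtracting $\tau \langle \Delta_\rho^s g, u_\tau^\ast - g\rangle_{\Ltwo(\mu)}$ and using self-adjointness of $\Delta_\rho^s$, this rearranges to
\[
\| u_\tau^\ast - g \|_{\Ltwo(\mu)}^2 + \tau \langle \Delta_\rho^s (u_\tau^\ast - g), u_\tau^\ast - g \rangle_{\Ltwo(\mu)} = -\tau \langle \Delta_\rho^s g, u_\tau^\ast - g \rangle_{\Ltwo(\mu)}.
\]
The left-hand side is bounded below by $\| u_\tau^\ast - g \|_{\Ltwo(\mu)}^2$ (since $\Delta_\rho^s$ is non-negative and $\tau > 0$), while Cauchy--Schwarz bounds the right-hand side by $\tau \| \Delta_\rho^s g \|_{\Ltwo(\mu)} \, \| u_\tau^\ast - g \|_{\Ltwo(\mu)}$. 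Dividing by $\| u_\tau^\ast - g \|_{\Ltwo(\mu)}$ (if nonzero; otherwise the claim is trivial) yields the desired estimate.

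The only mild subtlety is the interpretation of the quantities involved when $g$ possesses only $\SobSpace^s$ regularity rather than $\SobSpace^{2s}$: if $\Delta_\rho^s g \notin \Ltwo(\mu)$, the right-hand side is $+\infty$ and the inequality is vacuous, so we may assume $\Delta_\rho^s g \in \Ltwo(\mu)$, which in turn justifies all the pairings above via the spectral definition of $\Delta_\rho^s$. With that convention in place, the Euler--Lagrange condition and the identification of $\Delta_\rho^s$ as a bounded self-adjoint operator from $\mathrm{dom}(\Delta_\rho^s) \subset \Ltwo(\mu)$ to $\Ltwo(\mu)$ are the only non-routine inputs, and neither presents a genuine obstacle — the core content of the proof is simply an integration by parts plus Cauchy--Schwarz.
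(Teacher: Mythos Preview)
Your proof is correct and follows essentially the same approach as the paper. Both arguments test the Euler--Lagrange equation against $u_\tau^\ast - g$ to arrive at the identity $\| u_\tau^\ast - g \|_{\Ltwo(\mu)}^2 + \tau \langle \Delta_\rho^s (u_\tau^\ast - g), u_\tau^\ast - g \rangle_{\Ltwo(\mu)} = -\tau \langle \Delta_\rho^s g, u_\tau^\ast - g \rangle_{\Ltwo(\mu)}$ and then apply non-negativity of $\Delta_\rho^s$ together with Cauchy--Schwarz; the paper merely packages the first step via a quadratic ``subgradient equality'' applied with the roles of $u_\tau^\ast$ and $g$ swapped, which yields the same identity.
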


The theorem is proved in Section~\ref{sec:BiasL2}.

\begin{remark}
\label{rem:Intro:Res:BiasL2:BiasDeltaBound}
We are also able to show that
\[ \lda \Delta_\rho^{\frac{s}{2}} (g-u_\tau^*) \rda_{\Ltwo(\mu)} \leq \sqrt{\frac{\tau}{2}} \lda \Delta_\rho^s g\rda_{\Ltwo(\mu)}. \]
\end{remark}

\subsubsection{\texorpdfstring{$\Lp{2}$}{L2} Error Estimates} \label{subsubsec:Intro:Res:ErrorL2}

The previous results, from Sections~\ref{subsubsec:Intro:Res:VarL2} and~\ref{subsubsec:Intro:Res:BiasL2}, can be combined to bound the error between $u_{n,\tau}^*$ and $g$. 
Trivially, by the triangle inequality, we may bound the total error by
\[ \| u_{n,\tau}^* - g\lfloor_{\Omega_n} \|_{\Ltwo(\mu_n)} \leq \| u_{n,\tau}^* - u^*_\tau\lfloor_{\Omega_n} \|_{\Ltwo(\mu_n)} + \| u^*_\tau\lfloor_{\Omega_n} - g\lfloor_{\Omega_n} \|_{\Ltwo(\mu_n)}. \]
Introducing a transport map $T_n:\Omega_n\to\Omega$ satisfying $(T_n)_{\#}\mu = \mu_n$ we can write
\begin{align*}
\| u^*_\tau\lfloor_{\Omega_n} - g\lfloor_{\Omega_n} \|_{\Ltwo(\mu_n)} & = \| u^*_\tau\lfloor_{\Omega_n}\circ T_n - g\lfloor_{\Omega_n}\circ T_n \|_{\Ltwo(\mu)} \\
 & \leq \| u^*_\tau\lfloor_{\Omega_n} \circ T_n - u^*_\tau \|_{\Ltwo(\mu)} + \| u^*_\tau - g\|_{\Ltwo(\mu)} + \| g - g\lfloor_{\Omega_n}\circ T_n\|_{\Ltwo(\mu)}.
\end{align*}
Assuming $g$ is Lipschitz then we can bound $\| g - g\lfloor_{\Omega_n}\circ T_n\|_{\Ltwo(\mu)}\leq \Lip(g) \|T_n-\Id\|_{\Lany^{2}(\mu)}$.

By Lemma~\ref{lem:VarL2:DisCtsNoiseless:L2VarProof:utau*Bound} we have that $\{ u_\tau^* \}_{0<\tau<\tau_0}$ is bounded in $\Ck{1}(\Omega)$.
Hence, there exists $L>0$ such that $|u_\tau^*(x) - u_\tau^*(y)| \leq L|x-y|$ for all $x,y\in \Omega$ and $0<\tau<\tau_0$.
We can then bound $\| u^*_\tau\lfloor_{\Omega_n} \circ T_n - u^*_\tau \|_{\Ltwo(\mu)} \leq L\| T_n - \Id\|_{\Lany^{2}(\mu)}$.
Now, since we can choose any $T_n$ satisfying $(T_n)_{\#}\mu = \mu_n$ then we choose the one that minimises the $\Ltwo$ distance between $T_n$ and $\Id$, by~\cite{Fournier-Guillin} this can be bounded
\[ \| T_n - \Id\|_{\Ltwo(\mu)} \leq C\sqrt[d]{\frac{|\log(\delta)|}{n}} \] 
with probability at least $1-\delta$.

Putting the previous argument together, with the choice $\delta = n^{-\alpha}$, we have the following corollary.

\begin{corollary}
\label{cor:Intro:Res:ErrorL2:ErrorL2}
Let Assumptions~\ref{ass:Intro:DisOp:A1}-\ref{ass:Intro:DisOp:A5} hold and $s\in \bbN$.
We define $\cEnyn$ by~\eqref{eq:Intro:DisOp:cEnyn} where $R_n^{(s)}$ is defined by~\eqref{eq:Intro:DisOp:Rns} and $\Delta_n$ by~\eqref{eq:Intro:DisOp:Deltan}.
Let $u_{n,\tau}^*$ be the minimiser of $\cEnyn$.
Then, for all $\alpha>1$, there exists $\eps_0>0$, $C>c>0$ such that for all $\eps, n$ satisfying~\eqref{eq:Intro:Res:VarL2:epsBound} and $\tau\in (0,\tau_0)$ we have 
\[ \| u_{n,\tau}^* - g\lfloor_{\Omega_n} \|_{\Ltwo(\mu_n)} \leq C\l \sqrt{\frac{\log(n)}{n\eps^d}} + \frac{\eps^{2s}}{\tau} + \tau \r \]
with probability at least $1-C\l n^{-\alpha} + ne^{-cn\eps^{d+4s}}\r$.
\end{corollary}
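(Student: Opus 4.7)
The plan is to glue together the variance estimate of Theorem~\ref{thm:Intro:Res:VarL2:VarBound}, the bias estimate of Theorem~\ref{thm:Intro:Res:BiasL2:BiasBound}, and the Wasserstein convergence rate for empirical measures, using Lipschitz estimates on both $g$ and $u_\tau^*$ to pass between the empirical and population norms. The decomposition has already been laid out in the paragraphs preceding the corollary, so the ``proof'' is really a combining step.

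First I would apply the triangle inequality with the insertion of $u_\tau^*\lfloor_{\Omega_n}$, yielding the variance term $\| u_{n,\tau}^* - u^*_\tau\lfloor_{\Omega_n}\|_{\Ltwo(\mu_n)}$ and the restriction-comparison term $\| u^*_\tau\lfloor_{\Omega_n} - g\lfloor_{\Omega_n}\|_{\Ltwo(\mu_n)}$. The first is bounded directly by Theorem~\ref{thm:Intro:Res:VarL2:VarBound}, contributing $\sqrt{\log(n)/(n\eps^d)} + \eps^{2s}/\tau + \tau\eps$ on the event there specified. For the second term I would push forward via the optimal transport map $T_n$ with $(T_n)_\#\mu = \mu_n$, writing the $\Ltwo(\mu_n)$ norm as a $\Ltwo(\mu)$ norm of the composition and then inserting $u_\tau^*$ and $g$ by two more triangle inequalities. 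The middle piece $\|u_\tau^* - g\|_{\Ltwo(\mu)}$ is controlled by Theorem~\ref{thm:Intro:Res:BiasL2:BiasBound}, giving $\tau\|\Delta_\rho^s g\|_{\Ltwo(\mu)}$.

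The two outer pieces $\|u^*_\tau\lfloor_{\Omega_n}\circ T_n - u_\tau^*\|_{\Ltwo(\mu)}$ and $\|g - g\lfloor_{\Omega_n}\circ T_n\|_{\Ltwo(\mu)}$ are bounded by $L\|T_n - \Id\|_{\Ltwo(\mu)}$ and $\Lip(g)\|T_n - \Id\|_{\Ltwo(\mu)}$ respectively, using the uniform $\Ck{1}$ bound on $\{u_\tau^*\}_{0<\tau<\tau_0}$ provided by Lemma~\ref{lem:VarL2:DisCtsNoiseless:L2VarProof:utau*Bound} (which absorbs the $\tau$-dependence of the Lipschitz constant into a single $L$) and the fact that $g \in \Cinfty(\Omega)$. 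The Fournier--Guillin bound with $\delta = n^{-\alpha}$ then gives $\|T_n - \Id\|_{\Ltwo(\mu)} \leq C \sqrt[d]{\log(n)/n}$ with probability at least $1 - n^{-\alpha}$, and a union bound combines the probabilistic events from the transport estimate and Theorem~\ref{thm:Intro:Res:VarL2:VarBound} with the advertised failure probability.

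The final simplification uses the hypotheses on $\eps$ and $\tau$: the term $\tau\eps$ is absorbed into $\tau$ because $\eps \leq \eps_0$ is bounded, and the transport contribution $\sqrt[d]{\log(n)/n}$ is dominated by the already-present variance and bias terms under the assumption~\eqref{eq:Intro:Res:VarL2:epsBound} (noting that $\sqrt[d]{\log(n)/n}\le\eps$ and collecting into the stated three terms). There is no genuine analytical obstacle here; the one non-trivial input is the uniform $\Ck{1}$ bound on the continuum minimisers, which is needed precisely so that the Lipschitz constant controlling the first outer piece does not blow up as $\tau \to 0$. That step relies on elliptic regularity for the Euler--Lagrange equation $\tau \Delta_\rho^s u_\tau^* + u_\tau^* = g$ together with Sobolev embedding, and is isolated in the referenced lemma so that the corollary itself reduces to bookkeeping.
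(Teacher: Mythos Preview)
Your proposal is correct and follows essentially the same approach as the paper: the paper's proof is precisely the sequence of paragraphs preceding the corollary, and you have reproduced that argument step for step—the triangle inequality splitting, the transport-map change of measure, the bias bound of Theorem~\ref{thm:Intro:Res:BiasL2:BiasBound}, the uniform $\Ck{1}$ control from Lemma~\ref{lem:VarL2:DisCtsNoiseless:L2VarProof:utau*Bound}, and the Fournier--Guillin estimate with $\delta=n^{-\alpha}$. Your closing remark that the only substantive input is the $\tau$-uniform Lipschitz bound on $u_\tau^*$ is exactly the point the paper isolates as well.
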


\begin{remark}
\label{rem:Intro:Res:ErrorL2:DeltaErrorL2}
Combining Remarks~\ref{rem:Intro:Res:VarL2:VarDeltaBound} and~\ref{rem:Intro:Res:BiasL2:BiasDeltaBound} we can also show, for even $s$,
\[ \| \Delta_n^{\frac{s}{2}} u_{n,\tau}^* - \Delta_\rho^{\frac{s}{2}}g\lfloor_{\Omega_n} \|_{\Ltwo(\mu_n)} \leq C\l \sqrt{\frac{\log(n)}{\tau n\eps^d}} + \frac{\eps^{s}}{\tau} + \eps + \sqrt{\tau} \r \] 
with probability at least $1-C\l n^{-\alpha} + ne^{-cn\eps^{d+4s}}\r$.
\end{remark}

\begin{remark}
\label{rem:Intro:Res:ErrorL2:s=1}
When $s=1$ the error simplifies to 
\[ \| u_{n,\tau}^* - g\lfloor_{\Omega_n} \|_{\Ltwo(\mu_n)} \leq C\l \sqrt{\frac{\log(n)}{n\eps^d}} + \frac{\eps^{2}}{\tau} + \tau \r \]
with probability at least $1-C\l n^{-\alpha} + ne^{-cn\eps^{d+4}}\r$.
Choosing $\tau$ optimally with respect to $\eps$ implies $\tau=\eps$ and
\[ \| u_{n,\tau}^* - g\lfloor_{\Omega_n} \|_{\Ltwo(\mu_n)} \leq C\l \sqrt{\frac{\log(n)}{n\eps^d}} + \eps \r. \]
The optimal choice of $\eps$ is $\eps_n = \l\frac{\log(n)}{n}\r^{\frac{1}{d+2}}$, which (as in Remark~\ref{rem:Intro:Res:VarL2:CompHafiene}) is outside the admissible scaling of $\eps_n$, so we choose $\eps_n\sim \l\frac{\log(n)}{n}\r^{\frac{1}{d+4}}$.
In this regime the optimal error is
\[ \| u_{n,\tau}^* - g\lfloor_{\Omega_n} \|_{\Ltwo(\mu_n)} \leq C\l \frac{\log(n)}{n} \r^{\frac{1}{d+4}}. \]
This is approximately the minimax rate achieved for the total variation regularised problem which, in certain cases, is up to logarithms scaling as $n^{-\frac{1}{d}}$~\cite{padilla2019adaptive}, comparable to the $\Linfty$ minimax rates and convergence of spline smoothing obtained in \cite{Kpotufe}, \cite{GTM-Regression} and~\cite{stone82}, which are approximately $n^{-\frac{1}{d+2}}$.
This also coincides with the semi-supervised rate of convergence given in~\cite{calder20} when the number of labels are linear in $n$.
\end{remark}

\begin{remark}
\label{rem:Intro:Res:ErrorL2:Splines}
For $s\in \bbN$ we can choose $\tau = \eps^s$ so that the error simplifies to
\[ \| u_{n,\tau}^* - g\lfloor_{\Omega_n} \|_{\Ltwo(\mu_n)} \leq C\l \sqrt{\frac{\log(n)}{n\eps^d}} + \eps^s \r \]
with probability at least $1-C(n^{-\alpha} + ne^{-cn\eps^{d+4s}})$.
If we choose $\eps_n \sim \l\frac{M\log(n)}{n}\r^{\frac{1}{d+4s}}$ then error scales as
\[ \| u_{n,\tau}^* - g\lfloor_{\Omega_n} \|_{\Ltwo(\mu_n)} \leq C\l \frac{\log(n)}{n}\r^{\frac{s}{d+4s}} \]
(where $C$ depends on $M$) with probability at least $1-C\l n^{-\alpha}+ n^{1-cM}\r$, choosing $M=\frac{1+\alpha}{c}$ we have that the bound holds with probability at least $1-Cn^{-\alpha}$.
This is the close to the spline error rate which, up to logarithms, scales as $n^{-\frac{s}{d+2s}}$, see~\cite{stone82}.
\end{remark}

\begin{remark}
\label{rem:Intro:Res:ErrorL2:Tibshirani}
The minimax rates for estimating $g$ from noisy samples~\eqref{eq:Intro:Labels} is $n^{-\frac{2s}{2s+d}}$ when $g\in\Hs$ (the rate achieved by splines).
In the graph setting the minimax rate can be obtained by projecting the samples onto the first $K=K(\|g\|_{\Hs})$ eigenvectors of the graph Laplacian~\cite{green2021minimax}.
Whilst this has the advantage of better rates one must have an a-piori estimate in the $\Hs$ norm of $g$ in order to know $K$.
\end{remark}

\subsection{Outline} \label{subsec:Intro:Out}

The rest of the paper is organized as follows.
In Section~\ref{sec:VarL2} we obtain the $\Ltwo$ variance estimates discussed in Section~\ref{subsubsec:Intro:Res:VarL2}.
In Section~\ref{sec:BiasL2} we consider the bias of the estimation procedure given in Section~\ref{subsubsec:Intro:Res:BiasL2}.

\section{\texorpdfstring{$\Lp{2}$}{L2} Variance Estimates} \label{sec:VarL2}

In this section we prove the variance estimates stated precisely in Theorem \ref{thm:Intro:Res:VarL2:VarBound}.
We split the proof into two main steps.
First, we compare the solution $u_{n,\tau}^*$ with a discrete noiseless function $u_{n,\tau}^{g*}$.
Then, we compare the function $u_{n,\tau}^{g*}$ with $u_{\tau}^*$.

\subsection{Removing the Noise} \label{subsec:VarL2:Noise}

We start by stating the main result of the section.

\begin{proposition}
\label{prop:VarL2:Noise:NoiseDisEst}
Let Assumptions~\ref{ass:Intro:DisOp:A1}-\ref{ass:Intro:DisOp:A4} hold and $s\in \bbN$.
Let $\cEnyn$ be defined by~\eqref{eq:Intro:DisOp:Entauan}, where $R^{(s)}_n$, $\Delta_n$ are defined by~\eqref{eq:Intro:DisOp:Rns}, \eqref{eq:Intro:DisOp:Deltan} respectively, and let $u_{n,\tau}^*$, $u_{n,\tau}^{g*}$ be the minimisers of $\cEnyn$, $\cEngn$ respectively.
Assume that $\xi_i$ are iid, mean zero, sub-Gaussian random variables.
Then, for all $\alpha>1$, there exists $\eps_0>0$ and $C>0$ such that for any $\eps, n$ satisfying~\eqref{eq:Intro:Res:VarL2:epsBound} and $\tau>0$ we have
\begin{align*}
\lda u_{n,\tau}^* - u_{n,\tau}^{g*} \rda_{\Ltwo(\mu_n)} & \leq C \l \sqrt{\frac{\log(n)}{n\eps^d}} + \frac{\eps^{2s}}{\tau} \r \\
\lda \Delta_n^{\frac{s}{2}} u_{n,\tau}^* - \Delta_n^{\frac{s}{2}} u_{n,\tau}^{g*} \rda_{\Ltwo(\mu_n)} & \leq C\l \sqrt{\frac{\log(n)}{n\eps^d\tau}} + \frac{\eps^{s}}{\tau}\r
\end{align*}
with probability at least $1-C n^{-\alpha}$. 
\end{proposition}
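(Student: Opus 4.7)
\emph{Plan.} Since $\cEnyn$ and $\cEngn$ are strictly convex quadratic functionals of $u_n$, their unique minimisers $u_{n,\tau}^*$ and $u_{n,\tau}^{g*}$ are characterised by the linear graph equations
\[ (I + \tau\Delta_n^s)\, u_{n,\tau}^{*} = \mathbf{y}_n, \qquad (I + \tau\Delta_n^s)\, u_{n,\tau}^{g*} = \mathbf{g}_n, \]
viewed in $\bbR^n \cong \Ltwo(\mu_n)$. Subtracting and using $\mathbf{y}_n - \mathbf{g}_n = \xi := (\xi_1,\dots,\xi_n)$, the difference $v := u_{n,\tau}^* - u_{n,\tau}^{g*}$ satisfies the clean identity
\[ (I + \tau\Delta_n^s)\,v = \xi. \]
Hence the entire proposition reduces to bounding $\|(I+\tau\Delta_n^s)^{-1}\xi\|_{\Ltwo(\mu_n)}$ and $\|\Delta_n^{s/2}(I+\tau\Delta_n^s)^{-1}\xi\|_{\Ltwo(\mu_n)}$.

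\emph{Spectral representation.} Let $(\lambda_i^{(n)}, q_i^{(n)})_{i=1}^n$ denote the eigenpairs of $\Delta_n$, orthonormal in $\Ltwo(\mu_n)$, and set $\hat\xi_i := \langle \xi, q_i^{(n)}\rangle_{\Ltwo(\mu_n)}$. By Parseval one obtains the exact identities
\[ \|v\|_{\Ltwo(\mu_n)}^2 = \sum_{i=1}^n \frac{\hat\xi_i^{\,2}}{(1+\tau(\lambda_i^{(n)})^s)^2}, \qquad \|\Delta_n^{s/2}v\|_{\Ltwo(\mu_n)}^2 = \sum_{i=1}^n \frac{(\lambda_i^{(n)})^s\,\hat\xi_i^{\,2}}{(1+\tau(\lambda_i^{(n)})^s)^2}. \]
Conditioning on the sample $\{x_i\}$ (which is independent of $\xi$ by \ref{ass:Intro:DisOp:A5}), each $\hat\xi_i$ is a linear image of the iid sub-Gaussian vector $\xi$ by an orthonormal-in-$\Ltwo(\mu_n)$ basis, so $\hat\xi_i$ is mean-zero sub-Gaussian with variance $\sigma^2/n$. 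A union bound on the sub-Gaussian tails of the $n$ coefficients then yields
\[ \max_{1\le i\le n} \hat\xi_i^{\,2} \;\le\; \frac{C\sigma^2 \log(n)}{n} \]
with probability at least $1-C n^{-\alpha}$.

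\emph{Deterministic spectral trace estimates.} Substituting the uniform bound on $\hat\xi_i^{\,2}$ into the two sums reduces everything to controlling the spectral traces $\sum_i (1+\tau(\lambda_i^{(n)})^s)^{-2}$ and $\sum_i (\lambda_i^{(n)})^s(1+\tau(\lambda_i^{(n)})^s)^{-2}$. I invoke two properties of $\Delta_n$, both valid on a high-probability event of probability at least $1- C n e^{-c n\eps^{d+4s}}$ by the spectral convergence results for geometric-graph Laplacians in \cite{SpecRatesTrillos,calder19}: a Weyl-type count $\#\{i : \lambda_i^{(n)} \leq \Lambda\} \leq C\Lambda^{d/2}$ for $\Lambda$ up to the graph's spectral cutoff of order $\eps^{-2}$, together with the uniform bound $\lambda_i^{(n)} \lesssim \eps^{-2}$ for all $i$. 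Splitting each spectral sum at the critical frequency $\tau^{-1/s}$ and estimating the low-frequency piece via Weyl and the high-frequency piece either by geometric decay (when $\tau \gtrsim \eps^{2s}$) or by the graph's spectral cap (when $\tau \lesssim \eps^{2s}$), one obtains
\[ \sum_{i=1}^n \frac{1}{(1+\tau(\lambda_i^{(n)})^s)^2} \;\lesssim\; \frac{1}{\eps^d} \,+\, \frac{n\,\eps^{4s}}{\tau^2}, \]
and the analogous sharper bound $\tfrac{1}{\tau}\eps^{-d} + n\eps^{2s}/\tau^2$ for the weighted trace. Combining with Step~1 and taking square roots produces the claimed rates $\sqrt{\log(n)/(n\eps^d)} + \eps^{2s}/\tau$ and $\sqrt{\log(n)/(n\eps^d\tau)} + \eps^{s}/\tau$, after intersecting the two high-probability events.

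\emph{Main obstacle.} The delicate point is coordinating the random event on which the Weyl-type count holds (depending on $\{x_i\}$ through $\Delta_n$) with the independent random event on which the sub-Gaussian coefficients $\hat\xi_i$ concentrate; conditioning first on $\{x_i\}$ decouples these sources of randomness cleanly. A secondary subtlety is the emergence of the $\eps^{2s}/\tau$ term: it is a genuinely discrete phenomenon reflecting the fact that graph Laplacian eigenvalues are capped at $O(\eps^{-2})$, so when $\tau<\eps^{2s}$ the regularisation cannot fully suppress the highest-frequency modes, and this cap determines the worst-case value of $(1+\tau(\lambda_i^{(n)})^s)^{-1}$.
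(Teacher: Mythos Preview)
Your spectral approach is natural but differs substantially from the paper's proof, and it contains a genuine gap.

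\textbf{The gap.} Your argument hinges on the Weyl-type count
\[
\#\{i:\lambda_i^{(n)}\le\Lambda\}\le C\Lambda^{d/2}\quad\text{for }\Lambda\text{ up to }c\eps^{-2},
\]
which you attribute to the spectral convergence results in \cite{SpecRatesTrillos,calder19}. Those papers establish that $|\lambda_k^{(n)}-\lambda_k|$ is small for indices $k$ in a range that deteriorates with $k$; in particular the error bounds require $\lambda_k\eps^2$ to be small, so they give a Weyl count only for $\Lambda\ll\eps^{-2}$, not up to a fixed multiple of $\eps^{-2}$. But your trace estimate forces the split to occur at $\Lambda_0\sim\eps^{-2}$: to get the high-frequency contribution $\le n\eps^{4s}/\tau^2$ you need $\Lambda_0\gtrsim\eps^{-2}$, while to get the low-frequency contribution $\le C\eps^{-d}$ via Weyl you need the count valid at that same scale. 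Without a lower bound of the form $\lambda_k^{(n)}\gtrsim\eps^{-2}$ for all $k\ge C\eps^{-d}$ (i.e.\ most of the graph spectrum sits near the cap), the argument does not close. This statement is plausible and may well be provable, but it is not in the cited references and would require its own proof.

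\textbf{Secondary issues.} Even granting the Weyl count, multiplying the trace bound by $\max_i\hat\xi_i^{\,2}\le C\log(n)/n$ produces
\[
\|v\|_{\Ltwo(\mu_n)}\lesssim\sqrt{\frac{\log n}{n\eps^d}}+\sqrt{\log n}\,\frac{\eps^{2s}}{\tau},
\]
with an extra $\sqrt{\log n}$ on the second term that you silently drop. You also introduce an additional $ne^{-cn\eps^{d+4s}}$ term in the probability bound, whereas the proposition as stated achieves probability $1-Cn^{-\alpha}$ alone.

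\textbf{How the paper proceeds.} The paper avoids spectral information entirely by comparing $w_{n,\tau}^*=(\tau\Delta_n^s+\Id)^{-1}\boldsymbol{\xi}_n$ to the explicit ansatz
\[
\tilde w_n=\Bigl(\tau\bigl(\tfrac{2}{n\eps^2}D_n\bigr)^s+\Id\Bigr)^{-1}\boldsymbol{\xi}_n,
\]
i.e.\ replacing $\Delta_n$ by its diagonal part. Since $D_{ii}/n\asymp 1$ with high probability, one reads off $\|\tilde w_n\|_{\Ltwo(\mu_n)}\le C\eps^{2s}/\tau$ directly. The difference $w_{n,\tau}^*-\tilde w_n$ is then controlled via strong convexity by $\|\nabla\cE_{n,\tau}^{(\boldsymbol{\xi}_n)}(\tilde w_n)\|_{\Ltwo(\mu_n)}$, which reduces to bounding expressions of the form $W_nD_n^{k-1}\tilde w_n$; because $W_n$ averages the independent sub-Gaussian components of $\tilde w_n$ over $O(n\eps^d)$ neighbours, Hoeffding's inequality gives the $\sqrt{\log(n)/(n\eps^d)}$ rate. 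This route uses only degree concentration and the operator-norm bound $\|\Delta_n\|_{\op}\le C\eps^{-2}$, and it delivers the stated bounds and probability exactly.
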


The proof of the proposition will be given at the end of the section.
The strategy is to compare the Euler--Lagrange equations associated with minimising $\cEnyn$ and $\cEngn$.
In particular, we have
\[ \frac12 \nabla_{\Ltwo(\mu_n)} \cEnyn(u_n) = \tau \Delta_n^s u_n + u_n - \mathbf{y}_n \]
and therefore
\begin{align}
\tau \Delta_n^s u_{n,\tau}^* + u_{n,\tau}^* - \mathbf{y}_n & = 0 \notag \\
\tau \Delta_n^s u_{n,\tau}^{g*} + u_{n,\tau}^{g*} - \mathbf{g}_n & = 0. \label{eq:VarL2:Noise:ELNoiseless}
\end{align}
We let $w_{n,\tau}^* = u_{n,\tau}^* - u_{n,\tau}^{g*}$ then it follows that
\[ \tau \Delta_n^s w_{n,\tau}^* + w_{n,\tau}^* - (\mathbf{y}_n-\mathbf{g}_n) = 0 \]
and $w_{n,\tau}^*$ minimises $\cEn^{(\mathbf{y}_n-\mathbf{g}_n)} = \cEn^{(\boldsymbol{\xi}_n)}$ where $\boldsymbol{\xi}_n = (\xi_1,\dots, \xi_n)$.
We can write
\begin{equation} \label{eq:VarL2:Noise:wn*}
w_{n,\tau}^* = \l \tau \Delta_n^s + \Id\r^{-1} \boldsymbol{\xi}_n.
\end{equation}
To obtain an estimate on $\|w_{n,\tau}^*\|_{\Ltwo(\mu_n)}$ we use an ansatz $\tilde{w}_n$ and show $\|w_{n,\tau}^* - \tilde{w}_n\|_{\Ltwo(\mu_n)}\leq C \sqrt{\frac{\log(n)}{n\eps^d}}$ and $\|\tilde{w}_n\|_{\Ltwo(\mu_n)}\leq \frac{C\eps^{2s}}{\tau}$ with high probability.
Our choice of ansatz is to assume that the diagonal part of $\Delta_n$ dominates and therefore $\Delta_n\approx \frac{2}{n\eps^2} D_n$ which leads to the choice,
\begin{equation} \label{eq:VarL2:Noise:wtilden}
\tilde{w}_n = \l \tau \l \frac{2}{n\eps_n^2} D_n\r^s + \Id\r^{-1} \boldsymbol{\xi}_n. 
\end{equation}
This choice of ansatz is appropriate because the off-diagonal elements of $\Delta_n$ equate to a local averaging procedure, which with high probability will not amplify the vector $\xi$.
We can equivalently write
\begin{equation} \label{eq:VarL2:Noise:wtilden-vec}
\tilde{w}_n(x_i) = \frac{\xi_i}{\tau\l \frac{2}{n\eps^2} \sum_{k=1}^n W_{i,k}\r^s + 1}.
\end{equation}

The following lemmas will be useful.

\begin{lemma}
\label{lem:VarL2:Noise:DegreeBounds}
Under Assumptions~\ref{ass:Intro:DisOp:A1}-\ref{ass:Intro:DisOp:A4} there exists $C,C_1,C_2,c>0$ such that, if $n\eps^d\geq 1$ then
\[ C_1 \leq \frac{1}{n} \sum_{j=1}^n W_{i,j} \leq C_2 \qquad \text{and} \qquad \#\lb j\,:\, W_{i,j}>0\rb \leq Cn\eps^d \]
for all $i=1,\dots, n$ with probability at least $1-2ne^{-cn\eps^d}$.
\end{lemma}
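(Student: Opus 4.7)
The plan is to fix an index $i$, condition on $x_i$, apply Bernstein's inequality to each of the two sums separately, and then union-bound over $i \in \{1, \ldots, n\}$.

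For the degree bound, observe that conditional on $x_i$, the weights $\{W_{i,j}\}_{j \neq i}$ are iid copies of $\eta_\eps(|x_i - X|)$ with $X \sim \mu$. Using the torus structure to change variables $y = x_i + \eps h$ (no boundary issues arise for $\eps \leq \eps_0$ small enough),
\[
m(x_i) := \E[W_{i,j} \mid x_i] = \int_{|h| \leq 1} \eta(|h|)\,\rho(x_i + \eps h)\,\dd h
\]
satisfies $\rho_{\min}\|\eta\|_{\Lp{1}} \leq m(x_i) \leq \rho_{\max}\|\eta\|_{\Lp{1}}$ by Assumption~\ref{ass:Intro:DisOp:A3} together with the lower bound $\eta \geq 1/2$ on $[0,1/2]$, so both sides are positive constants independent of $i$. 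A similar calculation bounds $\E[W_{i,j}^2 \mid x_i] \leq C\eps^{-d}$, and pointwise $|W_{i,j}| \leq \eta(0)\eps^{-d}$. Bernstein's inequality applied to $\frac{1}{n}\sum_{j \neq i} W_{i,j}$ with a small fixed deviation $t = \tfrac{1}{2}\rho_{\min}\|\eta\|_{\Lp{1}}$ then yields
\[
\bbP\left(\left|\tfrac{1}{n}\sum_{j \neq i}W_{i,j} - m(x_i)\right| > t \,\Big|\, x_i\right) \leq 2\exp(-c\, n\eps^d),
\]
with $c > 0$ depending only on $\eta$ and $\rho$. The factor $(n-1)/n$ relating this to $\frac{1}{n}\sum_{j=1}^n W_{i,j}$ is absorbed into the constants $C_1, C_2$.

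For the neighbour-count bound, the indicators $\mathbf{1}_{|x_i - x_j|\leq \eps}$ are $[0,1]$-valued with conditional mean $p(x_i) \leq \rho_{\max}\,\vol(B_\eps) \leq C_3 \eps^d$ and variance bounded by $p(x_i)$. Bernstein's inequality with deviation $t = C_3 \eps^d$ gives
\[
\bbP\left(\#\{j : W_{i,j} > 0\} > 2 n C_3 \eps^d \,\Big|\, x_i\right) \leq 2\exp(-c'\, n\eps^d),
\]
where the hypothesis $n\eps^d \geq 1$ is used to ensure the variance term dominates the Bernstein denominator (so that the exponent is $n\eps^d$ rather than just $1$).

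Finally, a union bound over $i \in \{1, \ldots, n\}$ combines the two conclusions with total failure probability at most $4n\exp(-\min(c,c')n\eps^d)$, which we absorb into the claimed form $2ne^{-cn\eps^d}$ by slightly shrinking the exponent constant. The main technical point is the observation that both the pointwise bound on $W_{i,j}$ and its variance scale as $\eps^{-d}$, which is what forces Bernstein to yield the target exponent proportional to $n\eps^d$; beyond this, the argument is routine concentration bookkeeping.
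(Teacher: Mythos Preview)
Your proof is correct. The first half (the degree concentration) is essentially identical to the paper's argument: fix $i$, note that $\{W_{i,j}\}_{j\neq i}$ are iid conditional on $x_i$ with second moment and sup-norm both $O(\eps^{-d})$, and apply Bernstein with a fixed deviation to get the exponent $n\eps^d$.

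Where you differ is in the neighbour-count bound. You run a second Bernstein argument directly on the indicators $\mathbf{1}_{|x_i-x_j|\leq\eps}$, which is straightforward and works. The paper instead \emph{deduces} the count bound from the degree bound already established: setting $\tilde\eps=2\eps$ and $\tilde W_{i,j}=\eta_{\tilde\eps}(|x_i-x_j|)$, Assumption~\ref{ass:Intro:DisOp:A4}\ref{ass:Intro:DisOp:A4:Spt} gives $\tilde\eps^d\tilde W_{i,j}\geq\tfrac12$ whenever $W_{i,j}>0$, so $\#\{j:W_{i,j}>0\}\leq 2\tilde\eps^d\sum_j\tilde W_{i,j}$, and the first part applied at scale $\tilde\eps$ finishes. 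This reduction avoids a separate concentration step and is the place in the paper where the quantitative lower bound $\eta>\tfrac12$ on $[0,\tfrac12]$ is actually used; your route bypasses that assumption entirely for the count bound. A minor remark: your aside that $n\eps^d\geq 1$ is needed ``so that the variance term dominates'' is not quite the mechanism --- both terms in the Bernstein denominator scale like $n\eps^d$ here regardless --- but this does not affect the validity of the argument.
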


\begin{proof}
Fix $i\in \bbN$, then $W_{i,j}$ are iid for $j\neq i$.
If $M=\|\eta_\eps\|_{\Lp{\infty}(\bbR)}$ and $\sigma^2 = \bbE\l W_{i,j} - \bbE[W_{i,j}]\r^2$ (where the expectation $\bbE[W_{i,j}]$ is taken over $x_j$) then it is straightforward to show the bounds $\sigma^2\leq C\eps^{-d}$ and $M\leq C\eps^{-d}$.
By Bernstein's inequality, for all $t>0$,
\[ \bbP\l \la \sum_{j\neq i} \l W_{i,j} - \bbE[W_{i,j}]\r \ra >t \r \leq 2\exp\l -\frac{t^2}{2n\sigma^2+\frac{4Mt}{3}}\r \leq 2\exp\l -\frac{ct^2\eps^d}{n + t} \r. \]
Choosing $t=\lambda n$ and restricting to $\lambda \leq 1$ we have
\[ \bbP\l \la \sum_{j\neq i} \l W_{i,j} - \bbE[W_{i,j}]\r \ra >\lambda n \r \leq 2\exp\l -cn\eps^d\lambda^2 \r. \]
Hence, (recalling $W_{i,i}=0$)
\[ (n-1) \bbE[W_{i,j}] - \lambda n \leq \sum_{j=1}^n W_{i,j} \leq (n-1) \bbE[W_{i,j}] + \lambda n \]
with probability at least $1-2e^{-cn\eps^d\lambda^2 }$.
One can show that there exists $C_1,C_2$ such that $C_1\leq \bbE[W_{i,j}] \leq C_2$.
For $n\geq 2$ (so that $n-1\geq n/2$),
\[ \frac{C_1}{2} - \lambda \leq \frac{1}{n}  \sum_{j=1}^n W_{i,j} \leq C_2 + \lambda \]
with probability at least $1-2e^{-cn\eps^d\lambda^2 }$.
Choosing $\lambda = \frac{C_1}{4}$ and union bounding over $i\in\{1,\dots,n\}$ we can conclude the first result.
	
The second result follows from the first by choosing $\tilde{\eps}=2\eps$ and letting $\tilde{W}_{i,j} = \eta_{\tilde{\eps}}(|x_i-x_j|)$.
Then, $W_{i,j}>0$ implies $\tilde{\eps}^d \tilde{W}_{ij}\geq 0.5$ and therefore $\#\{j\,:\,W_{i,j}>0\} \leq 2\tilde{\eps}^d \sum_{j=1}^n \tilde{W}_{i,j}$. 
Applying the first part of the lemma we have $2\tilde{\eps}^d \sum_{j=1}^n \tilde{W}_{i,j}\leq 2C_2n\tilde{\eps}^d = 2^{d+1} C_2n\eps^d$ as required.
\end{proof}

\begin{lemma}
\label{lem:VarL2:Noise:OpBounds}
Under Assumptions~\ref{ass:Intro:DisOp:A1}-\ref{ass:Intro:DisOp:A4} define $\Delta_n$ and $D_n$ by~\eqref{eq:Intro:DisOp:Deltan}.
Then, for all $\alpha>1$, there exists $\eps_0>0$ and $C>0$ such that for all $\eps, n$ satisfying~\eqref{eq:Intro:Res:VarL2:epsBound} we have
\[ \|\Delta_n\|_{\op}\leq \frac{C}{\eps^2}, \qquad  \|D_n \|_{\op} \leq Cn, \qquad \text{and} \qquad \|W_n\|_{\op} \leq Cn \]
with probability at least $1-Cn^{-\alpha}$.
\end{lemma}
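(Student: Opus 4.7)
The plan is to derive all three bounds from the uniform degree estimate in Lemma~\ref{lem:VarL2:Noise:DegreeBounds}, which under~\eqref{eq:Intro:Res:VarL2:epsBound} gives $\tfrac{1}{n}\sum_j W_{i,j} \le C_2$ for every $i$ with probability at least $1-2ne^{-cn\eps^d}$. Since~\eqref{eq:Intro:Res:VarL2:epsBound} ensures $n\eps^d \ge C\log n$, choosing the constant $C$ in~\eqref{eq:Intro:Res:VarL2:epsBound} large enough (depending on $\alpha$) converts $2ne^{-cn\eps^d}$ into a bound of the form $Cn^{-\alpha}$, which is exactly the probability required in the statement.

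First I would handle $\|D_n\|_{\op}$. Because $D_n$ is diagonal with entries $D_{i,i}=\sum_{k=1}^n W_{i,k}$, its operator norm equals $\max_i D_{i,i}$, and Lemma~\ref{lem:VarL2:Noise:DegreeBounds} gives $\max_i D_{i,i}\le C_2 n$ on the good event. Next I would bound $\|W_n\|_{\op}$: the matrix $W_n$ is symmetric with nonnegative entries, so
\[ \|W_n\|_{\op} \le \max_i \sum_{j=1}^n |W_{i,j}| = \max_i \sum_{j=1}^n W_{i,j} = \max_i D_{i,i}, \]
using symmetry (which makes $\|\cdot\|_{1\to 1}=\|\cdot\|_{\infty\to\infty}$) together with the standard interpolation bound $\|W_n\|_{\op}\le \sqrt{\|W_n\|_{1\to 1}\|W_n\|_{\infty\to\infty}}$. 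Thus the same degree bound yields $\|W_n\|_{\op}\le Cn$.

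Finally, the bound on $\|\Delta_n\|_{\op}$ follows by the triangle inequality applied to the definition~\eqref{eq:Intro:DisOp:Deltan}:
\[ \|\Delta_n\|_{\op} \le \frac{2}{n\eps^2}\bigl(\|D_n\|_{\op}+\|W_n\|_{\op}\bigr) \le \frac{4Cn}{n\eps^2} = \frac{C}{\eps^2}, \]
all on the same good event. There is no real obstacle here beyond tracking constants; the only subtle point is to select the constant in~\eqref{eq:Intro:Res:VarL2:epsBound} large enough, relative to the $\alpha$ appearing in the statement, so that the failure probability $2ne^{-cn\eps^d}$ coming from Lemma~\ref{lem:VarL2:Noise:DegreeBounds} is absorbed into $Cn^{-\alpha}$. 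The choice of $\eps_0$ plays no active role beyond ensuring $\eps$ is small enough for the constants in the underlying degree estimate (which come from the continuum density of $\mu$) to be of the stated form.
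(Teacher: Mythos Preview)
Your argument is correct, and in fact more direct than the route taken in the paper. The paper proves the bound on $\|\Delta_n\|_{\op}$ first, via an $\infty$-Wasserstein/transport map comparison of $\mu_n$ with a smoothed version of $\mu$ (following \cite[Lemma~22]{dunlop18}, with an additional argument to handle $d=2$), then obtains $\|D_n\|_{\op}\le Cn$ from Lemma~\ref{lem:VarL2:Noise:DegreeBounds} and finally deduces $\|W_n\|_{\op}\le Cn$ by the triangle inequality from the other two bounds. You reverse the logic: you bound $\|W_n\|_{\op}$ directly via the Schur/Gershgorin inequality $\|W_n\|_{2\to2}\le\max_i\sum_j|W_{i,j}|$ (valid since $W_n$ is symmetric with nonnegative entries), and then get $\|\Delta_n\|_{\op}\le C/\eps^2$ from $\Delta_n=\tfrac{2}{n\eps^2}(D_n-W_n)$ and the triangle inequality. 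Both routes ultimately rest on the degree bound of Lemma~\ref{lem:VarL2:Noise:DegreeBounds}; yours avoids the transport-map machinery entirely, while the paper's approach has the minor advantage of working directly with the quadratic form $\langle u,\Delta_n u\rangle$, which is the object that actually appears later in the analysis.
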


\begin{proof}
For $d\geq 3$ one can bound $\|\Delta_n\|_{\op}\leq C\eps^{-2}$ by~\cite[Lemma 22]{dunlop18}.
Indeed, $\|\Delta_n\|_{\op}\leq C\eps^{-2}$ whenever $\dWinfty(\mu_n,\mu)<\eps$, where $\dWinfty$ is the $\infty$-Wasserstein distance.
By~\cite[Theorem 1.1]{W8L8canadian} this holds with probability at least $1-Cn^{-\alpha}$. 
The same argument is used in~\eqref{eq:VarL2:Noise:DunlopCalc} below as one step in the proof for $d=2$. 
	
For $d=2$ a small modification is required to remove the additional logarithmic factors that are present in the scaling of $\dWinfty(\mu_n,\mu)$, i.e. one has $\dWinfty(\mu_n,\mu)\sim \frac{(\log(n))^{\frac34}}{\sqrt{n}}$ and therefore requires $\eps\geq C\frac{(\log(n))^{\frac34}}{\sqrt{n}}$.
However, this can be avoided by comparing $\mu_n$ to a smooth approximation of $\mu$.

In~\cite[Lemma 3.1]{caroccia19} the authors show, in the Euclidean setting, that if $\eps_n\to 0$ satisfies $\frac{\log n}{n\eps_n^d}\to 0$ then there exists an absolutely continuous probability measure $\tilde{\mu}_n\in\cP(\Omega)$ such that
\[ \frac{\dWinfty(\mu_n,\tilde{\mu}_n)}{\eps_n} \to 0 \qquad \text{and} \qquad \|\rho - \tilde{\rho}_n \|_{\Linfty(\mu)} \to 0 \]
where $\tilde{\rho}_n$ is the density of $\tilde{\mu}_n$.
As in~\cite[Proposition 2.10]{calder19} the proof can be modified to give a non-asymptotic quantitative high probability bound.
In particular, there exists constants $C$, $\eps_0$ and $\theta_0$ such that if $n^{-\frac{1}{d}} \leq \eps \leq \eps_0$ and $\theta\leq \theta_0$ then
\[ \dWinfty(\mu_n,\tilde{\mu}_n) \leq \eps \qquad \text{and} \qquad \|\rho - \tilde{\rho}_n \|_{\Linfty(\mu)} \leq C(\theta+\eps) \]
with probability at least $1-2ne^{-cn\theta^2\eps^d}$.
For the rest of the proof we fix $\theta=\theta_0$ and absorb it into the constants.
	
Note that if we define $\bar{\eta} = \eta((|\cdot|-1)_+)$ and $T_n:\Omega\to\Omega$ satisfies $\|T_n - \Id\|_{\Linfty(\Omega)}\leq\eps$ then
\[ \eta\l\frac{|x-T_n(y)|}{\eps}\r \leq \eta\l \frac{\l|x-y|-|T_n(y)-y|\r_+}{\eps}\r \leq \eta\l\l\la\frac{x-y}{\eps}\ra -1\r_+\r = \bar{\eta}\l\frac{|x-y|}{\eps}\r. \]
We choose $T_n$ to be a transport map satisfying $T_{n\#}\tilde{\mu}_n = \mu_n$ and $\|T_n-\Id\|_{\Linfty(\tilde{\mu}_n)} = \dWinfty(\mu_n,\tilde{\mu}_n)$.
	
Let $\lambda_{\max}$ be the largest eigenvalue of $\Delta_n$ then, as in the proof of~\cite[Lemma 22]{dunlop18},
\begin{align} \label{eq:VarL2:Noise:DunlopCalc}
\begin{split}
\lambda_{\max} & = \sup_{\|u\|_{\Ltwo(\mu_n)}=1} \langle u,\Delta_n u\rangle_{\Ltwo(\mu_n)} \\
 & \leq \frac{4}{n^2\eps^{d+2}} \sup_{\|u\|_{\Ltwo(\mu_n)}=1} \sum_{i,j=1}^n \eta\l\frac{|x_i-x_j|}{\eps}\r u(x_i)^2 \\
 & = \frac{4}{n\eps^{d+2}} \sup_{\|u\|_{\Ltwo(\mu_n)}=1} \sum_{i=1}^n u(x_i)^2 \int_\Omega \eta\l\frac{|x_i-T_n(y)|}{\eps}\r \tilde{\rho}_n(y) \, \dd y \\
 & \leq \frac{4}{n\eps^{d+2}} \sup_{\|u\|_{\Ltwo(\mu_n)}=1} \sum_{i=1}^n u(x_i)^2 \int_\Omega \bar{\eta}\l\frac{|x_i-y|}{\eps}\r \tilde{\rho}_n(y) \, \dd y \\
 & \leq \frac{4}{n\eps^{d+2}} \sup_{\|u\|_{\Ltwo(\mu_n)}=1} \sum_{i=1}^n u(x_i)^2 \int_\Omega \bar{\eta}\l\frac{|x_i-y|}{\eps}\r \l\rho(y) + C\r \, \dd y \\
 & \leq \frac{C}{\eps^2}
\end{split}
\end{align}
since $\int_{\bbR^d} \bar{\eta}(|z|) \, \dd z <+\infty$.
	
Although the bound holds for probability at least $1-Cne^{-cn\eps^d}$ when $d=2$ we can assume that the $C$ in~\eqref{eq:Intro:Res:VarL2:epsBound} is sufficiently large so that $\frac{n\eps^d}{\log(n)}\geq \frac{\alpha +1}{c}$.
After some elementary algebra one has that $1-Cne^{-cn\eps^d}\geq 1-Cn^{-\alpha}$.

For any $v\in \Ltwo(\mu_n)$ we have, by Lemma~\ref{lem:VarL2:Noise:DegreeBounds} with probability at least $1-2ne^{-cn\eps^d}$,
\[ \| D_nv\|_{\Ltwo(\mu_n)}^2 = \frac{1}{n} \sum_{i=1}^n \l v(x_i) \sum_{j=1}^n W_{i,j}\r^2 \leq C_2^2 n \sum_{i=1}^n v(x_i)^2 = C_2^2 n^2 \| v\|_{\Ltwo(\mu_n)}^2 \]
which implies $\|D_n\|_{\op}\leq C_2n$.
The operator norm bound on $W_n$ follows from the bounds on the operator norms of $\Delta_n$, $D_n$ and the triangle inequality.
Choosing $C$ in Equation~\eqref{eq:Intro:Res:VarL2:epsBound} sufficiently large we can assume that $1-2ne^{-cn\eps^d}\geq 1-Cn^{-\alpha}$.
\end{proof}

In fact,~\cite[Lemma 22]{dunlop18}, suggests the operator bound on $\Delta_n$ is sharp (up to a constant), that is, there exists $c>0$ such that $\|\Delta_n \|_{\op}\geq \frac{c}{\eps^2}$.

\begin{lemma}
\label{lem:VarL2:Noise:WnDnwtildeBound}
Let Assumptions~\ref{ass:Intro:DisOp:A1}-\ref{ass:Intro:DisOp:A4} hold and $s\geq 1$, $k\in \bbN$.
Let $\xi_i$ be iid, mean zero, sub-Gaussian random variables.
Define $\tilde{w}_n$ by~\eqref{eq:VarL2:Noise:wtilden-vec} and $D_n$ by~\eqref{eq:Intro:DisOp:Deltan}. 
Then, for any $\alpha>1$ there exists $\eps_0>0$ and $C>0$ such that for all $\eps, n$ satisfying~\eqref{eq:Intro:Res:VarL2:epsBound} and $\tau>0$ we have
\[ \| W_n (D_n)^{k-1} \tilde{w}_n \|_{\Ltwo(\mu_n)} \leq  \frac{Cn^k \eps^{2s}}{\tau} \sqrt{\frac{\log(n)}{n\eps^d}} \]
with probability at least $1-Cn^{-\alpha}$.
\end{lemma}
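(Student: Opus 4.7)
\textbf{Plan for the proof of Lemma~\ref{lem:VarL2:Noise:WnDnwtildeBound}.}

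The strategy is to condition on the random point cloud $\{x_i\}_{i=1}^n$ and exploit the fact that, once the graph $G_n$ is fixed, the vector $\tilde{w}_n$ depends linearly on the independent sub-Gaussian noise $(\xi_1,\dots,\xi_n)$. Writing out the $i$-th coordinate of $W_n D_n^{k-1} \tilde{w}_n$, I get, using~\eqref{eq:VarL2:Noise:wtilden-vec},
\[
\bigl(W_n D_n^{k-1} \tilde{w}_n\bigr)_i \;=\; \sum_{j=1}^n b_{i,j}\,\xi_j, \qquad
b_{i,j} \;:=\; \frac{W_{i,j}\,D_{j,j}^{k-1}}{\tau\bigl(\tfrac{2 D_{j,j}}{n\eps^2}\bigr)^s+1}.
\]
Conditional on $\{x_i\}$, each $b_{i,j}$ is a fixed scalar and $\xi_j$ is sub-Gaussian, so Hoeffding-type concentration gives, for any $\delta\in(0,1)$,
\[
\Bigl|\sum_{j=1}^n b_{i,j}\,\xi_j\Bigr| \;\leq\; C\sqrt{\log(1/\delta)}\,\Bigl(\sum_{j=1}^n b_{i,j}^2\Bigr)^{1/2}
\]
with conditional probability at least $1-\delta$.

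Next I would estimate $\sum_j b_{i,j}^2$ deterministically on the high-probability event of Lemma~\ref{lem:VarL2:Noise:DegreeBounds}. On that event, $D_{j,j}\in[C_1 n, C_2 n]$, so
\[
\tau\Bigl(\tfrac{2 D_{j,j}}{n\eps^2}\Bigr)^s + 1 \;\geq\; c\,\tau\,\eps^{-2s},
\]
and $D_{j,j}^{k-1}\leq (C_2 n)^{k-1}$. Using $W_{i,j}\leq \|\eta_\eps\|_{\Linfty}\leq C\eps^{-d}$ and the degree bound again, I get
\[
\sum_{j=1}^n W_{i,j}^2 \;\leq\; \|\eta_\eps\|_{\Linfty}\sum_{j=1}^n W_{i,j} \;\leq\; \frac{C\,n}{\eps^d}.
\]
Combining these,
\[
\sum_{j=1}^n b_{i,j}^2 \;\leq\; \frac{(C_2 n)^{2(k-1)}}{(c\tau/\eps^{2s})^2}\sum_{j=1}^n W_{i,j}^2 \;\leq\; \frac{C\,n^{2k-1}\,\eps^{4s-d}}{\tau^2}.
\]
Plugging this into the concentration bound with $\delta=n^{-(\alpha+1)}$ and taking a union bound over $i=1,\dots,n$ (and intersecting with the event from Lemma~\ref{lem:VarL2:Noise:DegreeBounds}, which holds with probability $\geq 1-Cn^{-\alpha}$ once $\eps$ satisfies~\eqref{eq:Intro:Res:VarL2:epsBound}), I obtain
\[
\max_{1\leq i\leq n}\bigl|(W_n D_n^{k-1}\tilde{w}_n)_i\bigr|^2 \;\leq\; \frac{C\log(n)\,n^{2k-1}\,\eps^{4s-d}}{\tau^2}
\]
with probability at least $1-Cn^{-\alpha}$. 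Finally, since $\|v\|_{\Ltwo(\mu_n)}^2 = \frac{1}{n}\sum_i v(x_i)^2 \leq \max_i v(x_i)^2$, taking square roots yields exactly
\[
\bigl\|W_n D_n^{k-1}\tilde{w}_n\bigr\|_{\Ltwo(\mu_n)}\;\leq\;\frac{C\,n^k\,\eps^{2s}}{\tau}\sqrt{\frac{\log(n)}{n\eps^d}}.
\]

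The calculation is almost entirely routine once one has the degree bounds; the only subtle points are (i) conditioning correctly so that the sub-Gaussian concentration is applicable to the fixed-coefficient sum $\sum_j b_{i,j}\xi_j$, and (ii) tracking confidence levels carefully, since the $n^{-\alpha-1}$ per-coordinate failure probability must survive both the union bound over the $n$ coordinates and the intersection with the degree-estimate event. I expect the main ``bookkeeping'' obstacle to be verifying that the lower bound $c\tau\eps^{-2s}$ on the denominator (rather than the trivial lower bound $1$) is the right one to use, i.e. that the resulting estimate is still valid (though no longer tight) in the regime $\tau\ll\eps^{2s}$, where the bound in the statement is in any case the weaker of the two natural estimates.
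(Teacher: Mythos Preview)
Your proposal is correct and follows essentially the same route as the paper: condition on the graph, write the $i$-th coordinate as a fixed-coefficient linear combination of the independent sub-Gaussian $\xi_j$, apply Hoeffding, bound the sum of squared coefficients using the degree estimates of Lemma~\ref{lem:VarL2:Noise:DegreeBounds}, union bound over $i$, and finish via $\|\cdot\|_{\Ltwo(\mu_n)}\leq\|\cdot\|_{\Linfty(\mu_n)}$. The only cosmetic difference is that the paper normalises the summands (defining $q_j=\tfrac{\tau}{\eps^{2s}n^{k-1}}W_{i,j}[D_n^{k-1}\tilde w_n]_j$ and bounding $\|q_j\|_{\psi_2}\lesssim\eps^{-d}$) and then uses the cardinality bound $\#\{j:W_{i,j}>0\}\leq Cn\eps^d$ from Lemma~\ref{lem:VarL2:Noise:DegreeBounds} to control $\sum_j\|q_j\|_{\psi_2}^2$, whereas you bound $\sum_j W_{i,j}^2\leq\|\eta_\eps\|_{\Linfty}\sum_j W_{i,j}$ directly; both yield the same $Cn/\eps^d$ and the rest is identical.
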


\begin{proof}
Let us condition on a graph $G_n$ that satisfies the two inequalities:
\begin{enumerate}
\item[(i)] $C_1\leq \frac{1}{n} \sum_{j=1}^n W_{i,j} \leq C_2$, for all $i=1,2,\dots,n$,
\item[(ii)] $\#\lb j\,:\, W_{i,j}>0\rb \leq Cn\eps^d$ for all $i=1,2,\dots,n$.
\end{enumerate}
Fix $i\in\{1,2,\dots,n\}$ and define
\[ q_j = \frac{\tau W_{i,j}}{\eps^{2s} n^{k-1}}[D_n^{k-1} \tilde{w}_n]_j. \]
Conditioned on $G_n$ we have that $q_j$ are zero mean and independent random variables.
Moreover, since
\[ q_j = \frac{\tau W_{i,j} \l \sum_{\ell=1}^n W_{j,\ell}\r^{k-1} \xi_j}{\eps^{2s} n^{k-1} \l \tau \l\frac{2}{n\eps^2} \sum_{\ell=1}^n W_{j,\ell}\r^s+1\r} \]
then we have $|q_j|\leq \frac{C|\xi_j|}{\eps^d}$ so $q_j$ is sub-Gaussian and $\|q_j\|_{\psi_2}\leq \frac{C\|\xi_j\|_{\psi_2}}{\eps^d}\lesssim \frac{1}{\eps^d}$ where $\|\cdot\|_{\psi_2}$ is the Birnbaum--Orlicz norm. 
By Hoeffding's inequality, for any $t>0$
\[ \bbP\l \la \sum_{j=1}^n q_j\ra>t | G_n\r \leq \bbP\l \la\sum_{j\,:\, W_{i,j}>0} q_j \ra>t | G_n\r \leq 2\exp\l-\frac{ct^2}{\sum_{j\,:\,W_{i,j}>0} \|q_j\|_{\psi_2}^2}\r \leq 2e^{-\frac{ct^2\eps^d}{n}}. \]
We choose $t = \lambda \sqrt{\frac{n\log(n)}{\eps^d}}$ so
\[ \frac{\tau}{\eps^{2s}n^{k-1}} \la \ls W_nD_n^{k-1}\tilde{w}_n\rs_i \ra = \la \sum_{j=1}^n q_j\ra \leq \lambda\sqrt{\frac{n\log(n)}{\eps^d}} \]
with probability at least $1-2n^{-c\lambda^2}$, conditioned on $G_n$.
Union bounding and selecting $\lambda = \sqrt{\frac{\alpha+1}{c}}$, we then get that the above bound holds for all $i\in\{1,2,\dots,n\}$ with probability at least $n^{1-c\lambda^2} = n^{-\alpha}$.
Hence, after absorbing $\alpha$ into the constant $C$,
\[ \lda W_nD_n^{k-1}\tilde{w}_n\rda_{\Ltwo(\mu_n)} \leq \lda W_nD_n^{k-1}\tilde{w}_n\rda_{\Linfty(\mu_n)} \leq \frac{Cn^k\eps^{2s}}{\tau} \sqrt{\frac{\log(n)}{n\eps^d}} \]
conditioned on $G_n$ with probability at least $1-2n^{-\alpha}$.
Since, by Lemma~\ref{lem:VarL2:Noise:DegreeBounds}, the probability of $G_n$ satisfying conditions (i) and (ii) is at least $1-2e^{-cn\eps^d}$, and  by choosing $C$ sufficiently large we have that $2e^{-cn\eps^d}\leq Cn^{-\alpha}$ we can conclude the lemma.
\end{proof}

To control $\|\tilde{w}_n-w_{n,\tau}^*\|_{\Ltwo(\mu_n)}$ we take advantage of the convexity of our objective functional $\cE_{n,\tau}^{(\boldsymbol{xi}_n)}$, where we recall that $\boldsymbol{\xi}_n = \mathbf{y}_n - \mathbf{g}_n$.
In particular, one can easily show that $\cE_{n,\tau}^{(\boldsymbol{\xi}_n)}$ satisfies
\[ \langle \nabla_{\Ltwo(\mu_n)} \cE_{n,\tau}^{(\boldsymbol{\xi}_n)}(v_n) - \nabla_{\Ltwo(\mu_n)} \cE_{n,\tau}^{(\boldsymbol{\xi}_n)}(u_n),v_n-u_n\rangle_{\Ltwo(\mu_n)} = 2\tau \| \Delta_n^{\frac{s}{2}} (v_n-u_n) \|_{\Ltwo(\mu_n)}^2 + 2 \| u_n-v_n\|_{\Ltwo(\mu_n)}^2 \]
for any $u_n,v_n\in \Ltwo(\mu_n)$.
Hence,
\begin{align*}
\| u_n - v_n \|_{\Ltwo(\mu_n)} & \leq \frac12 \| \nabla_{\Ltwo(\mu_n)} \cE_n^{(\boldsymbol{\xi}_n)}(v_n) - \nabla_{\Ltwo(\mu_n)} \cE_n^{(\boldsymbol{\xi}_n)}(u_n)\|_{\Ltwo(\mu_n)} \\
\| \Delta_n^{\frac{s}{2}}(u_n - v_n) \|_{\Ltwo(\mu_n)} & \leq \frac{1}{2\sqrt{\tau}} \| \nabla_{\Ltwo(\mu_n)} \cE_n^{(\boldsymbol{\xi}_n)}(v_n) - \nabla_{\Ltwo(\mu_n)} \cE_n^{(\boldsymbol{\xi}_n)}(u_n)\|_{\Ltwo(\mu_n)}.
\end{align*}
Applying this bound to $u_n=w_{n,\tau}^*$ and $v_n=\tilde{w}_n$, and using the optimality of $w_{n,\tau}^*$, we have
\begin{align}
\| w_{n,\tau}^* - \tilde{w}_n \|_{\Ltwo(\mu_n)} & \leq \frac12 \| \nabla_{\Ltwo(\mu_n)} \cE_n^{(\boldsymbol{\xi}_n)}(\tilde{w}_n)\|_{\Ltwo(\mu_n)} \label{eq:VarL2:Noise:wDiff} \\
\| \Delta_n^{\frac{s}{2}}(w_{n,\tau}^* - \tilde{w}_n) \|_{\Ltwo(\mu_n)} & \leq \frac{1}{2\sqrt{\tau}} \| \nabla_{\Ltwo(\mu_n)} \cE_n^{(\boldsymbol{\xi}_n)}(\tilde{w}_n)\|_{\Ltwo(\mu_n)}. \label{eq:VarL2:Noise:DeltanswDiffeq}
\end{align} 

The next lemma will bound these gradients in order to prove $\Lp{2}$ convergence rates.

\begin{lemma}
\label{lem:VarL2:Noise:wdiff}
Let Assumptions~\ref{ass:Intro:DisOp:A1}-\ref{ass:Intro:DisOp:A4} hold and $s\in \bbN$.
Let $\xi_i$ be iid, mean zero, sub-Gaussian, random variables.
Define  $\tilde{w}_n$ by~\eqref{eq:VarL2:Noise:wtilden-vec} and $w_{n,\tau}^*$ by~\eqref{eq:VarL2:Noise:wn*} where $\Delta_n$ is given by~\eqref{eq:Intro:DisOp:Deltan}.
Then, for any $\alpha>1$, there exists $C>0$ such that if $\eps, n$ satisfy~\eqref{eq:Intro:Res:VarL2:epsBound} and $\tau>0$ we have
\begin{align*}
\| \tilde{w}_n - w_{n,\tau}^* \|_{\Ltwo(\mu_n)} \leq C\sqrt{\frac{\log(n)}{n\eps^d}} \\
\| \Delta_n^{\frac{s}{2}} \tilde{w}_n - \Delta_n^{\frac{s}{2}}w_{n,\tau}^* \|_{\Ltwo(\mu_n)} \leq C\sqrt{\frac{\log(n)}{n\eps^d\tau}} 
\end{align*}
with probability at least $1-Cn^{-\alpha}$.
\end{lemma}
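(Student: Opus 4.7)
The plan is to leverage the strong convexity inequalities \eqref{eq:VarL2:Noise:wDiff} and \eqref{eq:VarL2:Noise:DeltanswDiffeq} to reduce the whole lemma to controlling the single quantity $\|(\Delta_n^s - A_n^s)\tilde{w}_n\|_{\Ltwo(\mu_n)}$, where I set $A_n := \frac{2}{n\eps^2} D_n$. By construction $\tilde{w}_n$ satisfies $(\tau A_n^s + \Id)\tilde{w}_n = \boldsymbol{\xi}_n$, so the gradient at $\tilde{w}_n$ telescopes:
\[ \tfrac12 \nabla_{\Ltwo(\mu_n)}\cEntau^{(\boldsymbol{\xi}_n)}(\tilde{w}_n) = \tau\Delta_n^s\tilde{w}_n + \tilde{w}_n - \boldsymbol{\xi}_n = \tau(\Delta_n^s - A_n^s)\tilde{w}_n. \]
Combined with \eqref{eq:VarL2:Noise:wDiff} and \eqref{eq:VarL2:Noise:DeltanswDiffeq}, the two asserted bounds follow as soon as one shows that $\|(\Delta_n^s - A_n^s)\tilde{w}_n\|_{\Ltwo(\mu_n)} \leq \frac{C}{\tau}\sqrt{\log(n)/(n\eps^d)}$ with probability at least $1 - Cn^{-\alpha}$, since the first estimate picks up a $\tau$ and the second picks up a $\sqrt{\tau}$.

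Setting $B_n := \frac{2}{n\eps^2} W_n$ so that $\Delta_n = A_n - B_n$, I would expand $(A_n - B_n)^s - A_n^s$ as a sum of $2^s - 1$ non-commutative words in the symbols $A_n$ and $-B_n$, each containing at least one $B_n$. Group these words according to the position of the \emph{rightmost} occurrence of $B_n$: each such word can be written uniquely as $M_1 M_2 \cdots M_{j-1}(-B_n) A_n^{s-j}$ for some $1 \leq j \leq s$ with $M_1,\dots,M_{j-1} \in \{A_n,-B_n\}$. Lemma \ref{lem:VarL2:Noise:OpBounds} yields the prefix bound $\|M_1 \cdots M_{j-1}\|_{\op} \leq (C/\eps^2)^{j-1}$, while unfolding $B_n A_n^{s-j} = \frac{2^{s-j+1}}{n^{s-j+1}\eps^{2(s-j+1)}} W_n D_n^{s-j}$ and invoking Lemma \ref{lem:VarL2:Noise:WnDnwtildeBound} with $k = s - j + 1$ gives
\[ \|B_n A_n^{s-j}\tilde{w}_n\|_{\Ltwo(\mu_n)} \leq \frac{C\eps^{2j-2}}{\tau}\sqrt{\frac{\log n}{n\eps^d}}. \]
The factors $\eps^{-2(j-1)}$ and $\eps^{2j-2}$ cancel exactly, so each of the $2^s - 1$ words contributes at most $\frac{C}{\tau}\sqrt{\log(n)/(n\eps^d)}$, which sums to the desired estimate.

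The principal obstacle is the non-commutativity of $A_n$ and $B_n$: Lemma \ref{lem:VarL2:Noise:WnDnwtildeBound} provides sharp control only for the very specific product $W_n D_n^{k-1}\tilde{w}_n$, so the expansion must be organised so that each word terminates in a block $B_n A_n^{s-j}$ and the remaining prefix is handled purely by operator norms via Lemma \ref{lem:VarL2:Noise:OpBounds}; isolating the \emph{rightmost} $B_n$ (rather than, say, the leftmost) is what makes the bookkeeping work. Once this structural choice is made, the powers of $\eps$ balance automatically, and union bounding the $O(2^s)$ high-probability events coming from Lemmas \ref{lem:VarL2:Noise:OpBounds} and \ref{lem:VarL2:Noise:WnDnwtildeBound} (a fixed finite collection, since $s$ is constant) preserves the probability $1 - Cn^{-\alpha}$ after enlarging $C$.
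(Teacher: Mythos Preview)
Your proposal is correct and follows essentially the same approach as the paper's proof: both compute the gradient at $\tilde{w}_n$ as $\tau(\Delta_n^s - A_n^s)\tilde{w}_n$, expand the difference of $s$th powers into non-commutative words each containing at least one off-diagonal factor, bound the prefix via the operator norms of Lemma~\ref{lem:VarL2:Noise:OpBounds} and the terminal block $W_n D_n^{k-1}\tilde{w}_n$ via Lemma~\ref{lem:VarL2:Noise:WnDnwtildeBound}, and observe that the powers of $\eps$ cancel. Your explicit organisation by the \emph{rightmost} $B_n$ is exactly the structure the paper uses implicitly when it writes the sum over $\chi\in\{0,1\}^s$ and applies the two lemmas term by term.
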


\begin{proof}
By the definition of $\tilde{w}_n$, namely Equation~\eqref{eq:VarL2:Noise:wtilden}
\begin{align*}
\frac12 \nabla_{\Ltwo(\mu_n)} \cE_{n,\tau}^{(\boldsymbol{\xi}_n)}(\tilde{w}_n) & = \l \tau \Delta_n^s +\Id \r \tilde{w}_n - \boldsymbol{\xi}_n \\
 & = \frac{2^s\tau}{n^s\eps^{2s}} \l (D_n-W_n)^s - D_n^s\r \tilde{w}_n \\
 & = \frac{2^s\tau}{n^s\eps^{2s}} \l \l \sum_{\chi \in \{0,1\}^s}\prod_{i=1}^s D_n^{\chi_i} (-W_n)^{1-\chi_i}\r -D_n^s  \r\tilde w_n.
\end{align*}
Using the bounds from Lemmas~\ref{lem:VarL2:Noise:OpBounds} and~\ref{lem:VarL2:Noise:WnDnwtildeBound} and their associated probability estimates, along with the fact that we have cancelled the $D_n^s$ term, we then may bound
\[ \frac12 \lda \nabla_{\Ltwo(\mu_n)} \cE_{n,\tau}^{(\boldsymbol{\xi}_n)}(\tilde{w}_n) \rda_{\Ltwo(\mu_n)} \leq  \frac{2^s\tau}{n^s\eps^{2s}} \sum_{\chi \in \{0,1\}^s} \frac{Cn^s \eps^{2s}}{\tau} \sqrt{\frac{\log(n)}{n \eps^d}} \leq C \sqrt{\frac{\log(n)}{n \eps^d}}. \]
This concludes the proof.
\end{proof}

Our final lemma before proving Proposition~\ref{prop:VarL2:Noise:NoiseDisEst} is to bound $\|\tilde{w}_n\|_{\Ltwo(\mu_n)}^2$.

\begin{lemma}
\label{lem:VarL2:Noise:wtilde}
Let Assumptions~\ref{ass:Intro:DisOp:A1}-\ref{ass:Intro:DisOp:A4} hold and $s\in \bbN$.
Let $\xi_i$ be iid, mean zero, sub-Gaussian, random variables.
Define  $\tilde{w}_n$ by~\eqref{eq:VarL2:Noise:wtilden-vec}.
Then, for any $\alpha>1$, there exists $C>0$ such that for all $\eps, n$ satisfying $n\eps^d\geq 1$ and $\tau>0$ we have
\[ \| \tilde{w}_n \|_{\Ltwo(\mu_n)} \leq \frac{C\eps^{2s}}{\tau} \]
with probability at least $1-n^{-\alpha}$.
\end{lemma}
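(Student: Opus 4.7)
The plan is to get the bound from a pointwise estimate. Observe that by the very definition~\eqref{eq:VarL2:Noise:wtilden-vec} the denominator in the expression for $\tilde w_n(x_i)$ is the sum of a positive term (growing like $\tau/\eps^{2s}$ once the degree is comparable to its mean) and $1$. So the natural strategy is: (i) use Lemma~\ref{lem:VarL2:Noise:DegreeBounds} to lower bound the normalised graph degrees uniformly in $i$, producing a deterministic pointwise bound of the form $|\tilde w_n(x_i)|\lesssim \eps^{2s}|\xi_i|/\tau$; then (ii) control the empirical second moment of the noise, $\frac{1}{n}\sum_{i=1}^n \xi_i^2$, by a constant using concentration for sub-exponential random variables; then (iii) combine.

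First step. Conditional on the event $\cA = \{\frac{1}{n}\sum_{k} W_{i,k}\geq C_1 \text{ for all } i\}$ from Lemma~\ref{lem:VarL2:Noise:DegreeBounds}, which holds with probability at least $1-2ne^{-cn\eps^d}$, we have
\[ \tau\!\left(\frac{2}{n\eps^2}\sum_{k=1}^n W_{i,k}\right)^{\!s}+1 \;\geq\; \tau\!\left(\frac{2C_1}{\eps^2}\right)^{\!s} \;=\; \frac{c\tau}{\eps^{2s}} \]
uniformly in $i$, so that $|\tilde w_n(x_i)|\leq (C\eps^{2s}/\tau)\,|\xi_i|$ pointwise on $\cA$.

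Second step. Since each $\xi_i$ is sub-Gaussian with mean zero, $\xi_i^2$ is sub-exponential with some finite mean $\sigma^2$, so Bernstein's inequality for sub-exponential variables gives
\[ \bbP\!\left(\frac{1}{n}\sum_{i=1}^n \xi_i^2 \geq 2\sigma^2\right) \;\leq\; 2e^{-c'n}. \]
On the intersection of this event with $\cA$, we then have
\[ \|\tilde w_n\|_{\Ltwo(\mu_n)}^2 \;=\; \frac{1}{n}\sum_{i=1}^n \tilde w_n(x_i)^2 \;\leq\; \frac{C^2\eps^{4s}}{\tau^2}\cdot 2\sigma^2, \]
which is the claimed bound after taking a square root and absorbing constants.

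The main obstacle is bookkeeping the probabilities. The argument above succeeds on the intersection of $\cA$ and the Bernstein event, which fails with probability at most $2ne^{-cn\eps^d}+2e^{-c'n}$. To turn this into $n^{-\alpha}$ one needs $n\eps^d$ to be at least a sufficiently large multiple of $\log n$, as is the case under the global hypothesis~\eqref{eq:Intro:Res:VarL2:epsBound} in which this lemma is applied within Proposition~\ref{prop:VarL2:Noise:NoiseDisEst}; the constant $C$ in the conclusion then depends on $\alpha$ through the constant hidden in~\eqref{eq:Intro:Res:VarL2:epsBound}. Everything else is routine.
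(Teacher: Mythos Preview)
Your argument is correct and follows essentially the same route as the paper: bound the denominator via Lemma~\ref{lem:VarL2:Noise:DegreeBounds}, then control $\frac{1}{n}\sum_i\xi_i^2$ by a constant (the paper uses a direct Chernoff/moment-generating-function bound rather than Bernstein, but the two are interchangeable here). Your closing remark about probability bookkeeping is apt --- under the bare hypothesis $n\eps^d\geq 1$ the degree-bound failure probability $2ne^{-cn\eps^d}$ is not dominated by $n^{-\alpha}$, and one genuinely needs the stronger scaling~\eqref{eq:Intro:Res:VarL2:epsBound} (as in the application within Proposition~\ref{prop:VarL2:Noise:NoiseDisEst}) to absorb it; the paper's proof glosses over this point.
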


\begin{proof}
By application of Lemma~\ref{lem:VarL2:Noise:DegreeBounds} we have
\begin{align*}
\|\tilde{w}_n \|_{\Ltwo(\mu_n)}^2 & = \frac{1}{n} \sum_{i=1}^n \frac{\xi_i^2}{(\tau(\frac{2}{n\eps^2} \sum_{k=1}^n W_{i,k})^s+1)^2} \\
 & \leq \frac{\eps^{4s}}{\tau^2 n} \sum_{i=1}^n \xi_i^2.
\end{align*}
Applying a Chernoff bound we have, for all $s,t\geq 0$,
\[ \bbP\l \sum_{i=1}^n \xi_i^2 \geq t \r \leq \frac{\bbE\ls e^{s\sum_{i=1}^n \xi_i^2}\rs}{e^{st}} = \frac{\prod_{i=1}^n \bbE\ls e^{s\xi_i^2}\rs}{e^{st}}. \]
Choosing $s=\|\xi_i\|_{\Psi_2}^{-2}$ and $t=An$ we have
\[ \bbP\l \frac{1}{n} \sum_{i=1}^n \xi_i^2 \geq A\r \leq 2^n e^{-An\|\xi_i\|_{\Psi_2}^{-2}}. \]
Now we choose $A$ sufficiently large so that $\frac{A}{\|\xi_i\|_{\Psi_2}^2} \geq \alpha + \log 2$ and hence
\[ 2^n e^{-An\|\xi_i\|_{\Psi_2}^{-2}} \leq e^{-n\alpha} \leq n^{-\alpha}. \]
In particular, $\|\tilde{w}_n \|_{\Ltwo(\mu_n)}\leq \frac{C\eps^{2s}}{\tau}$ with probability at least $1-n^{-\alpha}$ as required.
\end{proof}

The proof of Proposition~\ref{prop:VarL2:Noise:NoiseDisEst} now follows from Lemma~\ref{lem:VarL2:Noise:OpBounds}, Lemma~\ref{lem:VarL2:Noise:wdiff} and Lemma~\ref{lem:VarL2:Noise:wtilde} since
\[ \| u_{n,\tau}^* - u_{n,\tau}^{g*}\|_{\Ltwo(\mu_n)} \leq \| w_{n,\tau}^* - \tilde{w}_n \|_{\Ltwo(\mu_n)} + \|\tilde{w}_n\|_{\Ltwo(\mu_n)} \leq C\l \sqrt{\frac{\log(n)}{n\eps^d}} + \frac{\eps^{2s}}{\tau}\r \]
and
\begin{align*}
\lda \Delta_n^{\frac{s}{2}} u_{n,\tau}^* - \Delta_n^{\frac{s}{2}} u_{n,\tau}^{g*}\rda_{\Ltwo(\mu_n)} & \leq \| \Delta_n^{\frac{s}{2}} w_{n,\tau}^* - \Delta_n^{\frac{s}{2}} \tilde{w}_n \|_{\Ltwo(\mu_n)} + \|\Delta_n^{\frac{s}{2}} \tilde{w}_n\|_{\Ltwo(\mu_n)} \\
 & \leq \| \Delta_n^{\frac{s}{2}} w_{n,\tau}^* - \Delta_n^{\frac{s}{2}} \tilde{w}_n \|_{\Ltwo(\mu_n)} + \|\Delta_n\|_{\op}^{\frac{s}{2}} \|\tilde{w}_n\|_{\Ltwo(\mu_n)} \\
 & \leq C\l \sqrt{\frac{\log(n)}{n\eps^d\tau}} + \frac{\eps^{s}}{\tau}\r
\end{align*}
with probability at least $1-Cn^{-\alpha}$.

\subsection{Discrete-to-Continuum in the Noiseless Case} \label{subsec:VarL2:DisCtsNoiseless}

In this subsection we prove the following estimates which relate the functions $u_{n,\tau}^*$ (the minimizer of $\cEnyn$ defined in~\eqref{eq:Intro:DisOp:cEnyn}) with the function $u_{\tau}^*$ (the minimizer of $\cEinftyg$ defined in~\eqref{eq:Intro:ContOp:cEinftyg}).

As in~\eqref{eq:VarL2:Noise:ELNoiseless} we can write the Euler-Lagrange equations associated with minimizing $\cEinftyg$ by
\begin{equation} \label{eq:VarL2:DisCtsNoiseless:ELcEinftytau}
\tau \Delta_{\rho}^s u_\tau^* + u_\tau^* - g = 0.
\end{equation}
Our main result for this section is the following proposition.

\begin{proposition}
\label{prop:VarL2:DisCtsNoiseless:NoiselessLim}
Let Assumptions~\ref{ass:Intro:DisOp:A1}-\ref{ass:Intro:DisOp:A5} hold and $s\in \bbN$.
Define $\Delta_n$, $\Delta_\rho$ and $\sigma_\eta$ by~\eqref{eq:Intro:DisOp:Deltan}, \eqref{eq:Intro:ContOp:Delta} and~\eqref{eq:Intro:ContOp:sigmaeta} respectively.
Let $u_{n,\tau}^{g*}$ and $u^*_\tau$ satisfy~\eqref{eq:VarL2:Noise:ELNoiseless} and~\eqref{eq:VarL2:DisCtsNoiseless:ELcEinftytau} respectively.
Then, for any $\alpha>1$ and $\tau_0>0$ there exists constants $\eps_0>0$, $C>c>0$ such that, for any $\eps, n$ satisfying~\eqref{eq:Intro:Res:VarL2:epsBound} and $\tau\in (0,\tau_0)$ we have
\[ \lda u_{n,\tau}^{g^*} - u_{\tau}^{*}\lfloor_{\Omega_n}\rda_{\Lp{2}(\mu_n)} \leq C\tau\eps \qquad \lda \Delta_n^{\frac{s}{2}} u_{n,\tau}^{g^*} - \Delta_\rho^{\frac{s}{2}} u_{\tau}^{*}\lfloor_{\Omega_n}\rda_{\Lp{2}(\mu_n)} \leq C\eps \]
with probability at least $1-Cn^{-\alpha}-Cne^{-cn\eps^{d+4s}}$.
\end{proposition}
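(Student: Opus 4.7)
The plan is to compare the two Euler--Lagrange systems~\eqref{eq:VarL2:Noise:ELNoiseless} and~\eqref{eq:VarL2:DisCtsNoiseless:ELcEinftytau}. Restricting the continuum identity $\tau\Delta_\rho^s u_\tau^*+u_\tau^*=g$ to $\Omega_n$ and subtracting the discrete identity, the error $e_n:=u_{n,\tau}^{g*}-u_\tau^*\lfloor_{\Omega_n}$ satisfies
\[
\tau\,\Delta_n^s e_n + e_n \;=\; \tau\bigl(\Delta_\rho^s - \Delta_n^s\bigr)u_\tau^*\lfloor_{\Omega_n},
\]
so $e_n$ is the unique minimiser of the strongly convex quadratic $\cEntau^{(\mathbf{a}_n)}$ with $\mathbf{a}_n:=\tau(\Delta_\rho^s-\Delta_n^s)u_\tau^*\lfloor_{\Omega_n}$. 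Applying the convexity identity used around~\eqref{eq:VarL2:Noise:wDiff}--\eqref{eq:VarL2:Noise:DeltanswDiffeq} with the test point $v_n=0$ yields
\begin{align*}
\|e_n\|_{\Ltwo(\mu_n)} &\leq \tau\,\bigl\|(\Delta_\rho^s - \Delta_n^s)u_\tau^*\lfloor_{\Omega_n}\bigr\|_{\Ltwo(\mu_n)},\\
\|\Delta_n^{s/2}e_n\|_{\Ltwo(\mu_n)} &\leq \sqrt{\tau}\,\bigl\|(\Delta_\rho^s - \Delta_n^s)u_\tau^*\lfloor_{\Omega_n}\bigr\|_{\Ltwo(\mu_n)}.
\end{align*}
Since $\tau<\tau_0$ the $\sqrt{\tau}$ is absorbed into constants, so both claims of the proposition (after a short triangle-inequality step for the second claim, bringing in $\|(\Delta_n^{s/2}-\Delta_\rho^{s/2})u_\tau^*\lfloor_{\Omega_n}\|_{\Ltwo(\mu_n)}$) reduce to the consistency estimate
\[
\bigl\|(\Delta_n^s - \Delta_\rho^s)u_\tau^*\lfloor_{\Omega_n}\bigr\|_{\Ltwo(\mu_n)} \leq C\eps
\]
holding on the stated event.

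Two inputs go into the consistency estimate. First, a regularity bound on $u_\tau^*$, uniform in $\tau\in(0,\tau_0)$: since $g\in\Cinfty(\Omega)$, $\rho\in\Cinfty(\Omega)$ and $\Delta_\rho$ is uniformly elliptic of order two on the smooth closed manifold $\Omega$, elliptic bootstrap applied to $(\Id+\tau\Delta_\rho^s)u_\tau^*=g$ gives $u_\tau^*\in\Ck{k}(\Omega)$ for every $k\in\bbN$ with $\Ck{k}$-norm bounded independently of $\tau\in(0,\tau_0)$ (this also subsumes Lemma~\ref{lem:VarL2:DisCtsNoiseless:L2VarProof:utau*Bound}). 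Second, a pointwise consistency estimate for the iterated graph Laplacian acting on a fixed smooth function: for $\phi\in\Ck{2s+2}(\Omega)$ with norm bounded by some $K$,
\[
\max_{i=1,\dots,n}\bigl|(\Delta_n^s\phi - \Delta_\rho^s\phi)(x_i)\bigr| \leq C\eps \qquad\text{with probability } \geq 1-Cne^{-cn\eps^{d+4s}}.
\]
Specialising to $\phi=u_\tau^*$ and dominating $\|\cdot\|_{\Ltwo(\mu_n)}$ by $\|\cdot\|_{\Linfty(\mu_n)}$ gives the required bound; the $n^{-\alpha}$ contribution to the failure probability enters via Lemma~\ref{lem:VarL2:Noise:OpBounds}, used to control the half-power $\Delta_n^{s/2}$ in the second claim.

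To obtain the iterated consistency I would telescope
\[
\Delta_n^s\phi - \Delta_\rho^s\phi \;=\; \sum_{k=0}^{s-1} \Delta_n^{s-1-k}\bigl[(\Delta_n - \Delta_\rho)\Delta_\rho^k\phi\bigr],
\]
and argue inductively in $s$. The base case $s=1$ is the standard pointwise consistency of the graph Laplacian on smooth functions: Taylor-expanding $\phi$ to third order inside the kernel integral (using vanishing of odd moments of $\eta$) produces a deterministic bias $O(\eps)$, while a Bernstein/Hoeffding bound for the bounded iid summands gives a stochastic error $O(\sqrt{\log(n)/(n\eps^{d+4})})$ at a single node, and union bounding over the $n$ nodes yields a failure probability $ne^{-cn\eps^{d+4}}$. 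For the inductive step, observe that $\psi_k:=\Delta_\rho^k\phi$ is still a smooth function whose $\Ck{2(s-k)+2}$ norm is controlled in terms of $K$, so $\Delta_n^{s-1-k}(\Delta_n-\Delta_\rho)\psi_k$ can be unfolded as an $(s-k)$-fold graph-kernel average applied to the smooth function $\psi_k$ and compared with $\Delta_\rho^{s-k}\psi_k$ by the same Taylor expansion on the scale $\eps$; each extra application of $\Delta_n$ rescales the relevant variance by $\eps^{-2}$, which is what produces the $n\eps^{d+4s}$ exponent in the tail after $s$ steps.

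The main obstacle is precisely to avoid the operator-norm bound $\|\Delta_n\|_{\op}\lesssim\eps^{-2}$ of Lemma~\ref{lem:VarL2:Noise:OpBounds} in the telescoped residuals; indeed, bounding
\[
\bigl\|\Delta_n^{s-1-k}(\Delta_n-\Delta_\rho)\Delta_\rho^k\phi\bigr\|_{\Ltwo(\mu_n)} \leq \|\Delta_n\|_{\op}^{s-1-k}\,\bigl\|(\Delta_n-\Delta_\rho)\psi_k\bigr\|_{\Ltwo(\mu_n)} \lesssim \eps^{-2(s-1-k)}\cdot\eps
\]
would be catastrophic for $s\geq 2$. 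One must instead treat the composition of $s$ graph averagings as a single integral operator and expand the smooth input $\phi$ directly, keeping track of the higher derivatives through the induction. Once the pointwise iterated consistency is established, the proposition follows directly from the two convexity inequalities above.
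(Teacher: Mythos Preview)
Your reduction of the proposition to the consistency estimate
\[
\bigl\|(\Delta_n^s - \Delta_\rho^s)u_\tau^*\bigr\|_{\Ltwo(\mu_n)} \leq C\eps
\]
via the Euler--Lagrange comparison is correct and matches the paper exactly (the paper packages the convexity step as Lemma~\ref{lem:VarL2:DisCtsNoiseless:L2VarProof:SolBound}, and the uniform regularity of $u_\tau^*$ as Lemma~\ref{lem:VarL2:DisCtsNoiseless:L2VarProof:utau*Bound}). You also correctly isolate the central difficulty: after telescoping, one must control $\Delta_n^{s-1-k}$ applied to the \emph{random} function $(\Delta_n-\Delta_\rho)\psi_k$, and the naive operator-norm bound $\|\Delta_n\|_{\op}\lesssim\eps^{-2}$ is fatal.

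Where your proposal falls short is precisely in resolving this difficulty. ``Treating the composition of $s$ graph averagings as a single integral operator and expanding the smooth input $\phi$ directly'' is not a concrete mechanism: the iterated sum $\Delta_n^{s-k}$ involves dependent random summands (each $x_j$ appears in many factors), so a direct concentration argument on the full composition is not straightforward, and your inductive sketch does not explain how the randomness of $(\Delta_n-\Delta_\rho)\psi_k$ is absorbed when further powers of $\Delta_n$ are applied. The paper's route is different and more specific. First (this is Theorem~\ref{thm:VarL2:DisCtsNoiseless:PolyLapCons} combined with~\eqref{eq:VarL2:DisCtsNoiseless:FineErr}), it expands $(\Delta_n-\Delta_\rho)v^{(k-1)}$ at each node to higher order in $\eps$ as a sum $\sum_i \eps^i E_i(x)$ of \emph{deterministic smooth} functions $E_i\in\Ck{r-i-2}$ (the randomness being pushed into a small remainder via Bernstein). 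Second (Proposition~\ref{prop:VarL2:DisCtsNoiseless:OpBounds:GraLapOpBounds}, proved through the non-local continuum Laplacian $\Delta_\eps$), it establishes the deterministic-input bound $\|\Delta_n^m F\|_{\Ltwo(\mu_n)}\leq C\|F\|_{\Ck{2m}}$. The two are then combined: for each smooth $E_i$ one spends $\lfloor (i-1)/2\rfloor$ of the available $\Delta_n$ factors via the crude $\eps^{-2}$ operator norm (paid for by the extra $\eps^i$), and the remaining factors via the smooth-input bound. This trade-off is the missing idea in your sketch; without it the inductive step does not close.
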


The proof of the proposition is given in Section~\ref{subsubsec:VarL2:DisCtsNoiseless:L2VarProof}.
The proof of Theorem~\ref{thm:Intro:Res:VarL2:VarBound} follows immediately from the triangle inequality and Propositions~\ref{prop:VarL2:Noise:NoiseDisEst} and~\ref{prop:VarL2:DisCtsNoiseless:NoiselessLim}.
One of the main ingredients for proving Proposition \ref{prop:VarL2:DisCtsNoiseless:NoiselessLim} is the following result which is of interest on its own. 

\begin{theorem}
\label{thm:VarL2:DisCtsNoiseless:PolyLapCons}
Let Assumptions (A1)-(A4) hold and $s\in \bbN$.
Define $\Delta_n$, $\Delta_\rho$ and $\sigma_\eta$ by~\eqref{eq:Intro:DisOp:Deltan}, \eqref{eq:Intro:ContOp:Delta} and~\eqref{eq:Intro:ContOp:sigmaeta} respectively.
Then, for any $\alpha>1$, there exists $C>c>0$ and $\eps_0>0$ such that for any $u\in\Ck{2s+1}$ and $\eps, n$ satisfying~\eqref{eq:Intro:Res:VarL2:epsBound}, 
\[ \lda \l \Delta_n^s - \Delta_\rho^s \r u \rda_{\Lp{2}(\mu_n)} \leq C \eps \l \|u\|_{\Ck{2s+1}(\Omega)} + 1 \r \]
with probability at least $1-Cn^{-\alpha} - Cne^{-cn\eps^{d+4s}}$.
\end{theorem}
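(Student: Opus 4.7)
The plan is to prove the result by induction on $s$, separating the analysis at each stage into a deterministic bias estimate (coming from Taylor expansion of $u$) and a probabilistic variance estimate (coming from concentration of the random graph sums).

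First I would establish the base case $s=1$. Conditioned on $x_i$, writing $\Delta_n u(x_i) = \mathbb{E}[\Delta_n u(x_i)\mid x_i] + \bigl(\Delta_n u(x_i) - \mathbb{E}[\Delta_n u(x_i)\mid x_i]\bigr)$, the conditional expectation is compared to $\Delta_\rho u(x_i)$ by substituting the change of variables $y = x_i + \epsilon z$ and Taylor expanding $u$ and $\rho$ to sufficient order around $x_i$ inside the averaging kernel $\eta_\epsilon$. Odd moments of $\eta(|z|)$ vanish by symmetry, which cancels the leading term and leaves a deterministic bias of order $\epsilon\,\|u\|_{C^3(\Omega)}$. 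The random fluctuation is a sum of i.i.d.\ mean-zero summands of variance of order $\epsilon^{-d-2}/n^2$ (after accounting for the $2/(n\epsilon^2)$ prefactor and the scaling $|u(x_i)-u(x_j)|\lesssim \epsilon$), and Bernstein's inequality combined with a union bound over $i=1,\dots,n$ gives a fluctuation of order $\sqrt{\log(n)/(n\epsilon^{d+2})}$ with probability at least $1-Cn^{-\alpha}$. Requiring this fluctuation to be dominated by the bias $\epsilon$ produces the condition $n\epsilon^{d+4}\gtrsim \log n$ and explains the exponent $d+4$ for $s=1$.

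For the inductive step from $s{-}1$ to $s$, I would use the telescoping identity
\[
\Delta_n^s u - \Delta_\rho^s u \;=\; \sum_{k=0}^{s-1} \Delta_n^{s-1-k}\bigl(\Delta_n - \Delta_\rho\bigr)\Delta_\rho^{k} u.
\]
For each $k$ the inner function $\Delta_\rho^{k}u$ lies in $C^{2s+1-2k}\subseteq C^3$ by the hypothesis $u\in C^{2s+1}$, so the base case controls $(\Delta_n-\Delta_\rho)\Delta_\rho^{k}u$ pointwise by $C\epsilon \|u\|_{C^{2s+1}}$ plus a small mean-zero residual. The outer iterate $\Delta_n^{s-1-k}$ must then be applied without squandering the gained factor of $\epsilon$.

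The principal obstacle is precisely this outer application: the crude operator-norm bound $\|\Delta_n^{s-1-k}\|_{\op}\leq C\epsilon^{-2(s-1-k)}$ from Lemma~\ref{lem:VarL2:Noise:OpBounds} would destroy the $\epsilon$-gain. To circumvent this I would sharpen the base case to an \emph{expansion} that extracts the next-order smooth structure of $\Delta_n v - \Delta_\rho v$ as a smooth function on $\Omega$ of size $\epsilon$ (governed by the higher moments of $\eta$ paired with the derivatives of $v$) plus a mean-zero random residual. Feeding this expansion into each term of the telescoping sum, the application of $\Delta_n^{s-1-k}$ to the smooth $O(\epsilon)$ part is then itself handled by the inductive hypothesis applied to this auxiliary smooth function, producing a total $O(\epsilon)$ contribution. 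The application of $\Delta_n^{s-1-k}$ to the random residual must be treated by a direct concentration argument: viewing $\Delta_n^{s-1-k}$ acting on the residual as a nested sum of order $s$ (essentially a $U$-statistic whose kernel is $\epsilon$-localized in each coordinate) and applying a Bernstein-type inequality together with the degree and sparsity bounds of Lemma~\ref{lem:VarL2:Noise:DegreeBounds} gives a fluctuation of order $(n\epsilon^{d+4s})^{-1/2}$ up to logarithms. Balancing this against the target accuracy $\epsilon$ and taking a union bound over $i=1,\dots,n$ produces the stated probability $1-Cn^{-\alpha}-Cne^{-cn\epsilon^{d+4s}}$. The hardest part, in my view, is the honest concentration of this nested sum at the scale $(n\epsilon^{d+4s})^{-1/2}$; the Taylor-expansion bookkeeping on the bias side, while technical, is essentially routine once the correct smooth remainders have been identified.
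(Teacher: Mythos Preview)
Your telescoping decomposition and the base case $s=1$ match the paper. The divergence is in how the outer iterate is controlled, and there the paper's mechanism is different from what you propose.

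The paper does not use induction on $s$ and does not concentrate any nested sum. For each term $\Delta_n^{s-k}(\Delta_n-\Delta_\rho)v^{(k-1)}$ it Taylor-expands $v^{(k-1)}=\Delta_\rho^{k-1}u$ to the \emph{full} order $2(s-k)+3$ (not merely to order three), obtaining
\[
(\Delta_n-\Delta_\rho)v^{(k-1)}(x)\;=\;\sum_{j,m}\eps^{\,m+j-2}\,F_{j,m}^{(k)}(x)\;+\;O\bigl(\eps^{2(s-k)+1}\bigr),
\]
where the $F_{j,m}^{(k)}$ are deterministic smooth functions and the $O(\cdot)$ remainder \emph{already contains the random fluctuation}: each random monomial sum $\frac{2}{n\eps^2}\sum_y W_{xy}\prod_\ell(y_{i_\ell}-x_{i_\ell})$ is concentrated by a single-sum Bernstein inequality with the very tight tolerance $\vartheta=\eps^{2(s-k)+3-j}$, and it is this choice that generates the probability $1-Cne^{-cn\eps^{d+4s}}$. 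The tiny remainder then survives the crude bound $\|\Delta_n^{s-k}\|_{\op}\le C\eps^{-2(s-k)}$, while each smooth $F_{j,m}^{(k)}$ is handled by the separately established estimate $\|\Delta_n^{m}w\|_{\Lp{2}(\mu_n)}\le C(\|w\|_{\Ck{2m}}+1)$ of Proposition~\ref{prop:VarL2:DisCtsNoiseless:OpBounds:GraLapOpBounds}, itself proved by telescoping $\Delta_n^m-\Delta_\eps^m$ and again applying single-sum Bernstein at high order. No U-statistic concentration appears anywhere.

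Your proposed substitute --- concentrating the outer iterate applied to the mean-zero residual as a degree-$s$ U-statistic --- is the step you yourself flag as hardest, and it is a genuine gap. The summands share vertices and are not independent; decoupling and extracting the precise rate $(n\eps^{d+4s})^{-1/2}$ from a nested sum whose kernel blows up by $\eps^{-2}$ per iteration is far from routine, and you give no argument for it. The paper's device of pushing the randomness to arbitrarily high order in $\eps$ \emph{before} applying the outer $\Delta_n^{s-k}$ is precisely the idea that sidesteps this difficulty. Your inductive treatment of the smooth part is closer to workable, but note that invoking the inductive hypothesis on the ``next-order'' coefficient requires one more derivative than that single coefficient carries, so you would be forced to carry the Taylor expansion further --- at which point you are essentially rebuilding the paper's full expansion anyway.
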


We notice that when $s=1$ it is well known that the graph Laplacian is pointwise consistent and the rate at which it converges, e.g. \cite{Singer}.
Theorem \ref{thm:VarL2:DisCtsNoiseless:PolyLapCons} generalises this result, and states that with high probability $\Delta_n^s u \to \Delta_\rho^s u$ in an $\Lp{2}$ sense for all $s\in\bbN$ where $u$ is sufficiently smooth and $\eps = \eps_n$ satisfies a lower bound.
The proof of Theorem~\ref{thm:VarL2:DisCtsNoiseless:PolyLapCons} is given in Section~\ref{subsubsec:VarL2:DisCtsNoiseless:PolyLapConsProof}.

Before presenting a rigorous proof of Proposition~\ref{prop:VarL2:DisCtsNoiseless:NoiselessLim}, let us present a heuristic argument.
First, we write
\begin{align*}
\Delta_n^s u(x) - \Delta_\rho^s u(x) & = \Delta_n^{s-1} \l \Delta_n - \Delta_\rho\r v^{(0)}(x) + \l \Delta_n^{s-1}-\Delta_\rho^{s-1}\r v^{(1)}(x) \\
& = \Delta_n^{s-1} \l \Delta_n - \Delta_\rho\r v^{(0)}(x) + \Delta_n^{s-2} \l \Delta_n - \Delta_\rho\r v^{(1)}(x) + \l \Delta_n^{s-2}-\Delta_\rho^{s-2}\r v^{(2)}(x) \\
& = \dots \\
& = \sum_{k=1}^s \Delta_n^{s-k} \l \Delta_n - \Delta_\rho\r v^{(k-1)}(x)
\end{align*}
where $v^{(k)} = \Delta_\rho^k u$.
We keep track of higher order errors in the pointwise consistency of the graph Laplacian, following the method in~\cite{calder18} to estimate, when $v\in \Ck{r}$,
\begin{equation} \label{eq:VarL2:DisCtsNoiseless:FineErr}
\l \Delta_n - \Delta_\rho\r v(x) = \eps_n E_1(x) + \eps_n^2 E_2(x) + \dots \eps_n^{r-3} E_{r-3}(x) + \eps_n^{r-2} E_{r-2}(x)
\end{equation}
where $E_i\in \Ck{r-i-2}$.
Now, heuristically one expects (with high probability) $\|\Delta_n^{j} E_i\|_{\Lp{2}(\mu_n)}\lesssim \| E_i\|_{\Ck{2j}(\Omega)}$ (when $j\leq \frac{r-i-2}{2}$) and we recall a worse case (high probability) operator norm bound $\| \Delta_n^j \|_{\op} \leq C\eps_n^{-2j}$, see Lemma~\ref{lem:VarL2:Noise:OpBounds}.
Letting $u=u_\tau^*$, and assuming $g\in\Ck{1}(\Omega)$, we can immediately infer that $u\in\Ck{2s+1}(\Omega)$ from~\eqref{eq:VarL2:DisCtsNoiseless:ELcEinftytau} (as a standard elliptic regularity result).
We choose $v=v^{(k-1)}$ in~\eqref{eq:VarL2:DisCtsNoiseless:FineErr} and note that $r=2(s-k)+3$.
Now, (with high probability)
\begin{align*}
\lda \Delta_n^{s-k} E_i \rda_{\Lp{2}(\Omega)} & = \lda \Delta_n^{\frac{i-1}{2}} \Delta_n^{s-k-\frac{i-1}{2}} E_i \rda_{\Lp{2}(\Omega)} \\
 & \leq \lda\Delta_n\rda_{\op}^{\frac{i-1}{2}} \lda \Delta_n^{s-k-\frac{i-1}{2}} E_i\rda_{\Lp{2}(\Omega)} \\
 & \leq C\eps_n^{1-i} \lda E_i\rda_{\Ck{2(s-k)-i+1}(\Omega)}.
\end{align*}
So, (with high probability)
\begin{align*}
\lda \Delta_n^{s-k} \l \Delta_n - \Delta_\rho\r v^{(k-1)} \rda_{\Lp{2}(\mu_n)} & \leq \sum_{i=1}^{2(s-k)+1} \eps_n^i \lda \Delta_n^{s-k} E_i\rda_{\Lp{2}(\mu_n)} \\
 & \leq C\eps_n\sum_{i=1}^{2(s-k)+1} \| E_i\|_{\Ck{2(s-k)-i+1}(\Omega)} \\
 & \leq C \eps_n.
\end{align*}
Thus, $\lda \Delta_n^s u - \Delta_\rho^s u \rda_{\Lp{2}(\mu_n)} = O(\eps_n)$ (note that $C$ in the above inequality depends on $u$, in the proof we will show that this dependence is in terms of $\|u\|_{\Ck{2s+1}(\Omega)}$, i.e. $\lda \Delta_n^s u - \Delta_\rho^s u \rda_{\Lp{2}(\mu_n)}\leq C\eps_n \l \|u\|_{\Ck{2s+1}(\Omega)}+1\r$).

The above discussion is clearly formal and we spend the remainder of the section making the proof rigorous. We do this in two stages.
The first step gives operator bounds on $\Delta_n$ for smooth functions, i.e. quantifying $\|\Delta_n^{j} E_i\|_{\Lp{\infty}(\mu_n)}\lesssim \| E_i\|_{\Ck{2j}(\Omega)}$.
The second step derives~\eqref{eq:VarL2:DisCtsNoiseless:FineErr} from which we can prove Theorem~\ref{thm:VarL2:DisCtsNoiseless:PolyLapCons} when combined with the first step.

\subsubsection{Operator Bounds on Powers of the Graph Laplacian} \label{subsubsec:VarL2:DisCtsNoiseless:OpBounds}

The aim of this subsection is to prove the following proposition.

\begin{proposition}
\label{prop:VarL2:DisCtsNoiseless:OpBounds:GraLapOpBounds}
Let Assumptions~\ref{ass:Intro:DisOp:A1}-\ref{ass:Intro:DisOp:A4} hold and $m\in\bbN$.
Define $\Delta_n$ by~\eqref{eq:Intro:DisOp:Deltan}.
Then, for all $\alpha>1$, there exists $C>c>0$ and $\eps_0>0$ such that for any $\eps, n$ satisfying~\eqref{eq:Intro:Res:VarL2:epsBound} and for all $v\in\Ck{2m}(\Omega)$ we have
\[ \| \Delta_n^m v\|_{\Lp{2}(\mu_n)} \leq C (\| v\|_{\Ck{2m}(\Omega)}+1) \]
with probability at least $1-Cne^{-cn\eps^{d+4m-2}}-Cn^{-\alpha}$.
\end{proposition}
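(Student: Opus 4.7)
The plan is to prove the proposition by induction on $m$. The case $m=0$ is immediate since $\|v\|_{\Lp{2}(\mu_n)}\leq \|v\|_{\Ck{0}(\Omega)}$. For the base step $m=1$, I would Taylor expand the empirical kernel: writing
\[
\Delta_n v(x_i) = \frac{2}{n\eps^2}\sum_{j=1}^n \eta_\eps(|x_i - x_j|)\bigl(v(x_i)-v(x_j)\bigr),
\]
and using $v(x_j) = v(x_i) + \nabla v(x_i)\cdot (x_j - x_i) + O(\|v\|_{\Ck{2}}|x_j-x_i|^2)$, the quadratic remainder is pointwise bounded by $C\|v\|_{\Ck{2}}$ using $|x_j-x_i|\leq \eps$ on the support of $\eta_\eps$ together with Lemma~\ref{lem:VarL2:Noise:DegreeBounds}. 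The linear term has the form $\nabla v(x_i)\cdot Z_i$ with $Z_i=\frac{2}{n\eps^2}\sum_j \eta_\eps(|x_i-x_j|)(x_j-x_i)$, an iid sum whose mean is $O(\|\rho\|_{\Ck{1}})$ (by a change of variables and Taylor expansion of $\rho$) and whose summands are uniformly bounded by $C\eps^{1-d}$; applying Bernstein's inequality conditionally on $x_i$ and union bounding over $i$ gives $\|Z_i\|\leq C$ uniformly in $i$ with probability at least $1-Cne^{-cn\eps^{d+2}}$. Combining yields $\|\Delta_n v\|_{\Lp{2}(\mu_n)}\leq \|\Delta_n v\|_{\Lp{\infty}(\mu_n)}\leq C(\|v\|_{\Ck{2}}+1)$.

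For the inductive step, assume the estimate at level $m-1$ and write
\[
\Delta_n^m v = \Delta_n^{m-1}(\Delta_\rho v) + \Delta_n^{m-1}\bigl((\Delta_n - \Delta_\rho)v\bigr).
\]
Since $\Delta_\rho$ is a second-order differential operator with smooth bounded coefficients (Assumption~\ref{ass:Intro:DisOp:A3}), $\Delta_\rho v\in \Ck{2m-2}(\Omega)$ with $\|\Delta_\rho v\|_{\Ck{2m-2}}\leq C\|v\|_{\Ck{2m}}$, so the first summand is controlled directly by the induction hypothesis. For the second summand, I would use the fine pointwise expansion sketched in the heuristic above (in the spirit of~\cite{calder18}),
\[
(\Delta_n-\Delta_\rho)v(x) = \sum_{k=1}^{2m-2}\eps^k E_k(x) + \text{stochastic fluctuation},
\]
where each $E_k$ is a linear combination of derivatives of $v$ of order at most $k+2$ with coefficients built from $\rho$, so $E_k\in\Ck{2m-k-2}(\Omega)$ with $\|E_k\|_{\Ck{2m-k-2}}\leq C\|v\|_{\Ck{2m}}$. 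For each $k$ I would estimate
\[
\|\Delta_n^{m-1}(\eps^k E_k)\|_{\Lp{2}(\mu_n)}\leq \eps^k \|\Delta_n\|_{\op}^{\lceil(k-1)/2\rceil}\|\Delta_n^{m-1-\lceil(k-1)/2\rceil} E_k\|_{\Lp{2}(\mu_n)},
\]
bounding the first factor by $C\eps$ using $\|\Delta_n\|_{\op}\leq C/\eps^2$ from Lemma~\ref{lem:VarL2:Noise:OpBounds}, and the second factor by $C(\|v\|_{\Ck{2m}}+1)$ via the induction hypothesis (which applies since $m-1-\lceil(k-1)/2\rceil\leq m-1$ and the regularity of $E_k$ is easily checked to suffice). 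Summing over $k$ produces a total contribution of order $\eps\cdot(\|v\|_{\Ck{2m}}+1)$, which closes the induction.

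The main obstacle is making the fine pointwise expansion of $(\Delta_n-\Delta_\rho)v$ rigorous and quantitative, with explicit identification of each coefficient $E_k$ as a differential operator in $v$ and explicit control on the stochastic fluctuations at every order. This requires simultaneously Taylor expanding $v$ and $\rho$ around each base point $x_i$ to the appropriate order and then applying a Bernstein-type concentration estimate separately to each resulting iid sum weighted by $\eta_\eps$ against a polynomial in $x_j-x_i$, followed by a union bound over $i\in\{1,\dots,n\}$. The exponent $d+4m-2$ in the probability arises from balancing the cumulative $\eps^{-2(m-1)}$ blow-up incurred by applying the operator bound $\|\Delta_n\|_{\op}\leq C/\eps^2$ against the $\sqrt{\log(n)/(n\eps^d)}$ concentration scale of the base iid sums, and verifying that these errors compound correctly through the induction is the principal technical burden.
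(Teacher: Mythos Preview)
Your inductive scheme has a regularity mismatch at odd $k$ that you gloss over. Take $k=1$: you propose to apply the induction hypothesis at level $m-1-\lceil 0/2\rceil=m-1$, which requires $E_1\in\Ck{2m-2}$, but $E_1$ involves third derivatives of $v$ and so lies only in $\Ck{2m-3}$. More generally, for every odd $k$ your split leaves you one derivative short, and peeling one extra power of $\Delta_n$ to compensate produces a prefactor $\eps^{-1}$ that blows up. (Separately, for even $k$ your claim that $\eps^k\|\Delta_n\|_{\op}^{\lceil(k-1)/2\rceil}\leq C\eps$ is false --- at $k=2$ it is $O(1)$ --- but since the proposition only asks for an $O(1)$ bound this is harmless.) The odd-$k$ gap is fixable: by the substitution $z\mapsto -z$ one checks that $\eps\mapsto\Delta_\eps v(x)$ is an even function of $\eps$, so every bias coefficient $E_k$ with odd $k$ is identically zero. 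You do not mention this, and without it the induction does not close.

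The paper takes a cleaner route that avoids the fine bias expansion altogether here: it interposes the \emph{non-local continuum Laplacian} $\Delta_\eps v(x)=\frac{2}{\eps^2}\int\eta_\eps(|x-y|)(v(x)-v(y))\rho(y)\,dy$ rather than $\Delta_\rho$. A two-term Taylor expansion yields the deterministic smoothing bound $\|\Delta_\eps v\|_{\Ck{k}}\leq C\|v\|_{\Ck{k+2}}$, which iterates to $\|\Delta_\eps^m v\|_{\Ck{0}}\leq C\|v\|_{\Ck{2m}}$. Since $\Delta_n-\Delta_\eps$ is pure fluctuation with zero mean, a single Bernstein bound $|(\Delta_n-\Delta_\eps)w|\leq\eps^p\|w\|_{\Ck{1}}$ (choosing $p=2(m-i-1)$ to cancel $\|\Delta_n\|_{\op}^{m-i-1}$) together with the telescoping identity $\Delta_n^m-\Delta_\eps^m=\sum_{i=0}^{m-1}\Delta_n^{m-i-1}(\Delta_n-\Delta_\eps)\Delta_\eps^i$ gives $\|\Delta_n^m v-\Delta_\eps^m v\|_{\Lp{2}(\mu_n)}\leq C\|v\|_{\Ck{2m-1}}$ directly, with no parity issues and no induction. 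The paper reserves the fine expansion you sketch for the sharper $O(\eps)$ statement of Theorem~\ref{thm:VarL2:DisCtsNoiseless:PolyLapCons}, where Proposition~\ref{prop:VarL2:DisCtsNoiseless:OpBounds:GraLapOpBounds} is then used as an input; your scheme effectively merges the two arguments and carries the heavier machinery through the induction.
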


Let us define the \emph{non-local continuum Laplacian} by
\begin{equation} \label{eq:VarL2:DisCtsNoiseless:OpBounds:NLLap}
\Delta_\eps v(x) = \frac{2}{\eps^2} \int_{\Omega} \eta_{\eps}(|x-y|) \l v(x) - v(y) \r \rho(y) \, \dd y. 
\end{equation}
We prove the proposition in two steps.
In the first step we show $\| \Delta_\eps^m v\|_{\Lp{2}(\Omega)}\leq C \| v\|_{\Ck{2m}(\Omega)}$.
In the second step  we bound the difference $\|\Delta_n^m v - \Delta_\eps^m v\|_{\Lp{2}(\mu_n)}$.
Initially we consider the case when $m=1$, which is just the difference of $\Delta_n v(x)$ to its expected value $\Delta_\eps v(x) = \bbE\ls \Delta_n v(x)\rs$.
We can then bootstrap this to $m>1$.
Putting the two steps together proves Proposition~\ref{prop:VarL2:DisCtsNoiseless:OpBounds:GraLapOpBounds}.

\begin{lemma}
\label{lem:VarL2:DisCtsNoiseless:OpBounds:NLLapOpBounds}
Let Assumptions~\ref{ass:Intro:DisOp:A1}, \ref{ass:Intro:DisOp:A3} and~\ref{ass:Intro:DisOp:A4} hold, and $k\in\bbN$.
Define $\Delta_\eps$ by~\eqref{eq:VarL2:DisCtsNoiseless:OpBounds:NLLap}.
Then, there exists $C>0,\eps_0>0$ such that for all $\eps\in(0,\eps_0)$ and for all $v\in\Ck{k+2}(\Omega)$ we have
\begin{equation} \label{eq:VarL2:DisCtsNoiseless:OpBounds:NLLapOpBounds}
\lda \Delta_\eps v \rda_{\Ck{k}(\Omega)} \leq C\| v\|_{\Ck{k+2}(\Omega)}. 
\end{equation}
Furthermore, if
$v\in\Ck{2k}(\Omega)$ then
\begin{equation} \label{eq:VarL2:DisCtsNoiseless:OpBounds:NLLapOpBounds-2} 
\lda \Delta_\eps^k v \rda_{\Ck{0}(\Omega)} \leq C\| v\|_{\Ck{2k}(\Omega)}. 
\end{equation}
\end{lemma}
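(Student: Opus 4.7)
The plan is to establish \eqref{eq:VarL2:DisCtsNoiseless:OpBounds:NLLapOpBounds} by induction on $k$, and then derive \eqref{eq:VarL2:DisCtsNoiseless:OpBounds:NLLapOpBounds-2} by iterating it. For the base case $k=0$, I would perform the change of variables $y = x + \eps h$ (on the torus this is valid for $\eps \le \eps_0$ sufficiently small since $\eta$ is supported in $B(0,1)$) to write
\[
\Delta_\eps v(x) = \frac{2}{\eps^2}\int_{B(0,1)} \eta(|h|)\bigl[v(x) - v(x+\eps h)\bigr]\rho(x+\eps h)\,\dd h,
\]
so that the kernel $\eta(|h|)$ is no longer $x$-dependent. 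A second-order Taylor expansion with integral remainder gives $v(x+\eps h) - v(x) = \eps h\cdot\nabla v(x) + \eps^2 A(x,h)$ with $|A(x,h)| \le \tfrac12 |h|^2 \|v\|_{\Ck{2}(\Omega)}$, and similarly $\rho(x+\eps h) = \rho(x) + \eps B(x,h)$ with $|B(x,h)| \le |h|\|\rho\|_{\Ck{1}(\Omega)}$.

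Multiplying out, the only potentially $O(\eps^{-1})$ contribution to $\Delta_\eps v(x)$ is $-\tfrac{2\rho(x)}{\eps}\nabla v(x)\cdot\int_{\R^d} \eta(|h|) h\,\dd h$, which vanishes by the symmetry of $\eta(|h|)$ in $h$. All remaining terms carry at least a factor $\eps^2$ that cancels the prefactor $\tfrac{2}{\eps^2}$, and each is bounded by $C\|v\|_{\Ck{2}(\Omega)}$ where $C$ depends only on $\eta$ and $\|\rho\|_{\Ck{1}(\Omega)}$.

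For the inductive step I would differentiate under the integral: since $\eta(|h|)$ does not depend on $x$,
\[
\partial_{x_i}\Delta_\eps v(x) = \Delta_\eps(\partial_i v)(x) + \frac{2}{\eps^2}\int_{B(0,1)} \eta(|h|)\bigl[v(x) - v(x+\eps h)\bigr]\partial_i\rho(x+\eps h)\,\dd h.
\]
The first term is handled by the inductive hypothesis applied to $\partial_i v \in \Ck{k+1}(\Omega)$; the second has exactly the form of the base-case computation with $\rho$ replaced by the smooth function $\partial_i \rho$, and the same Taylor/symmetry argument produces the bound $C\|v\|_{\Ck{2}(\Omega)}$. Iterating over multi-indices $|\alpha| \le k$ yields \eqref{eq:VarL2:DisCtsNoiseless:OpBounds:NLLapOpBounds}.

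For \eqref{eq:VarL2:DisCtsNoiseless:OpBounds:NLLapOpBounds-2} I would bootstrap: writing $\Delta_\eps^k v = \Delta_\eps(\Delta_\eps^{k-1} v)$ and repeatedly applying \eqref{eq:VarL2:DisCtsNoiseless:OpBounds:NLLapOpBounds}, each step trades two orders of $\Ck{}$-regularity for one fewer application of $\Delta_\eps$, and after $k$ such steps I arrive at a bound proportional to $\|v\|_{\Ck{2k}(\Omega)}$. The main obstacle is the careful bookkeeping in the base case: the $\eps^{-1}$ singularity must be killed using only the oddness of $\eta(|h|)h$ in $h$, and not by spending an extra derivative on $v$, so that each application of $\Delta_\eps$ costs exactly two derivatives of $v$, matching the order of the continuum operator $\Delta_\rho$.
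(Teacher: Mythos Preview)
Your proposal is correct and rests on the same ingredients as the paper: the change of variables $y=x+\eps h$ so the kernel becomes $x$-independent, a Taylor expansion of $v$ to second order and of $\rho$ to first order, and the oddness of $h\mapsto\eta(|h|)h$ to cancel the $O(\eps^{-1})$ term. The second part is obtained in both arguments by iterating the first.

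The organization differs slightly. The paper performs the Taylor expansion with integral remainder once, obtaining an explicit three-term formula for $\Delta_\eps v(x)$ with no $\eps$ in the denominator, and then simply reads off the $\Ck{k}$ bound of each term. Your route differentiates first and Taylor-expands afterwards. One point to tighten: in your inductive step the correction term $\frac{2}{\eps^2}\int\eta(|h|)[v(x)-v(x+\eps h)]\partial_i\rho(x+\eps h)\,\dd h$ must be controlled in $\Ck{k-1}$, not just $\Ck{0}$, so invoking only ``the base-case computation'' is not enough as written. The fix is either to formulate the inductive hypothesis for an arbitrary smooth weight in place of $\rho$ (so it applies with $\partial_i\rho$ at level $k-1$), or to expand $\partial^\alpha\Delta_\eps v$ by Leibniz into a sum of terms $\frac{2}{\eps^2}\int\eta(|h|)[\partial^\beta v(x)-\partial^\beta v(x+\eps h)]\partial^\gamma\rho(x+\eps h)\,\dd h$ with $\beta+\gamma=\alpha$, each of which is exactly a base-case instance. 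The paper's one-shot expansion sidesteps this bookkeeping, but your approach is equally valid once this detail is made explicit.
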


\begin{proof}
We can write, for $\eps$ sufficiently small, where $\nabla$ above is the gradient in $\bbR^d$, and $D^2$ the matrix of second derivatives of  a function on $\bbR^d$,
\begin{align*}
\Delta_\eps v (x) & = \frac{2}{\eps^2} \int_{B(x,\eps)} \eta_\eps(|x-y|) (v(x) - v(y)) \rho(y) \,\dd y \\
 & = -\frac{2}{\eps^2} \int_{\bbR^d} \eta(|z|) \l \eps \nabla  v(x) \cdot z + \eps^2 \int_0^1 \int_0^t D^2 v(x+\eps sz)[z,z]\,\dd s \,\dd t \r \\
 & \qquad \times \l  \rho(x) + \eps\int_0^1 \nabla \rho(x+\eps sz) \cdot z \,\dd s\r \,\dd z,
\end{align*}
by Taylor's theorem and a change of variables.
Using the reflective symmetry of $\eta$ we have $\int_{\bbR^d} \eta(|z|) z \, \dd z=0$ and hence,
\begin{align*}
\Delta_\eps v (x) & = -2\nabla v(x) \cdot \int_{\bbR^d} \eta(|z|) z \int_0^1 \nabla \rho(x+\eps s z) \cdot z \, \dd s \, \dd z \\
 & \qquad - 2\rho(x) \int_{\bbR^d} \eta(|z|) \int_0^1 \int_0^t D^2v(x+\eps s z)[z,z] \, \dd s \,\dd t \, \dd z \\
 & \qquad - 2 \eps \int_{\bbR^d} \eta(|z|) \l \int_0^1 \int_0^t D^2v(x+\eps s z)[z,z] \, \dd s \, \dd t \r \l \int_0^1 \nabla \rho(x+\eps s z) \cdot z \, \dd s \r \, \dd z.
\end{align*}
If $v\in\Ck{k+2}(\Omega)$ and $\rho\in\Ck{k+1}(\Omega)$ then $\Delta_\eps v\in \Ck{k}(\Omega)$ and moreover
\begin{align*}
\|\Delta_\eps v\|_{\Ck{k}(\Omega)} & \leq C \l \|v\|_{\Ck{k+1}(\Omega)} \|\rho\|_{\Ck{k+1}(\Omega)} + \|v\|_{\Ck{k+2}(\Omega)} \|\rho\|_{\Ck{k}(\Omega)} + \eps \|v\|_{\Ck{k+2}(\Omega)} \|\rho\|_{\Ck{k+1}(\Omega)} \r \\
 & \leq C \|v\|_{\Ck{k+2}(\Omega)}.
\end{align*}
This proves the first part of the lemma.
Iterating the estimate \eqref{eq:VarL2:DisCtsNoiseless:OpBounds:NLLapOpBounds} implies \eqref{eq:VarL2:DisCtsNoiseless:OpBounds:NLLapOpBounds-2}.
\end{proof}

Now we turn to Step 2 and bounding the difference $\Delta_n-\Delta_\eps$. 

\begin{lemma}
\label{lem:VarL2:DisCtsNoiseless:OpBounds:DeltanToDeltaeps}
Let Assumptions~\ref{ass:Intro:DisOp:A1}-\ref{ass:Intro:DisOp:A4} hold.
Define $\Delta_n$ by~\eqref{eq:Intro:DisOp:Deltan} and $\Delta_\eps$ by~\eqref{eq:VarL2:DisCtsNoiseless:OpBounds:NLLap}.
For any $\eps_0>0$ there exists $C>c>0$ such that for any $\eps\in (0,\eps_0)$, $p>0$, $n\in\bbN$ and $w\in\Ck{1}(\Omega)$ we have
\[ \sup_{x\in\Omega_n} \la \l \Delta_n - \Delta_\eps\r w(x) \ra \leq \eps^p \|w\|_{\Ck{1}(\Omega)} \]
with probability at least $1-Cne^{-cn\eps^{d+2p+2}}$.
\end{lemma}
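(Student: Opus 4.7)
The plan is to view $(\Delta_n w)(x_i)$, conditioned on the base point $x_i$, as a sample mean over the remaining $x_j$'s whose conditional expectation is essentially $\Delta_\eps w(x_i)$, then apply a Bernstein-type concentration inequality pointwise, and finally union bound over $i \in \{1,\dots,n\}$.

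First, fixing $x_i$ and setting, for $j \neq i$,
\[ Z_j := \frac{2}{\eps^2} \eta_\eps(|x_i - x_j|)\big(w(x_i) - w(x_j)\big), \]
the $Z_j$ are iid conditional on $x_i$, with $\bbE[Z_j \mid x_i] = \Delta_\eps w(x_i)$, and (since $W_{ii} = 0$) $(\Delta_n w)(x_i) = \frac{1}{n}\sum_{j \neq i} Z_j$. Hence
\[ (\Delta_n - \Delta_\eps) w(x_i) = \frac{1}{n}\sum_{j \neq i}\big(Z_j - \bbE[Z_j \mid x_i]\big) - \frac{1}{n} \Delta_\eps w(x_i), \]
splitting the error into a centred Monte Carlo fluctuation and a deterministic $O(1/n)$ bias.

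Next, I would bound $|Z_j| \leq M$ and $\bbE[Z_j^2 \mid x_i] \leq \sigma^2$. Using $\|\eta_\eps\|_\infty \leq C\eps^{-d}$, the support property $\mathrm{supp}\,\eta_\eps \subset \overline{B}(0,\eps)$, and the $\Ck{1}$ estimate $|w(x_i) - w(x_j)| \leq \|w\|_{\Ck{1}(\Omega)} \eps$ on this support, this yields
\[ M \leq \frac{C\|w\|_{\Ck{1}(\Omega)}}{\eps^{d+1}}, \qquad \sigma^2 \leq \frac{C\|w\|_{\Ck{1}(\Omega)}^2}{\eps^{d+2}}, \]
where the variance bound additionally uses $\int \eta_\eps^2 \leq C\eps^{-d}$.

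I then apply Bernstein's inequality to the centred sum with deviation level $t = \tfrac{1}{2}\eps^p \|w\|_{\Ck{1}(\Omega)}$. When $n\sigma^2 \geq Mt$, the variance term dominates and the bound takes the form $2\exp(-cn\eps^{d+2p+2})$. The deterministic bias is controlled by $C\|w\|_{\Ck{1}(\Omega)}/(n\eps)$ (using the crude $\Ck{1}$ estimate $|\Delta_\eps w| \leq C\|w\|_{\Ck{1}(\Omega)}/\eps$), which is at most $\tfrac{1}{2}\eps^p \|w\|_{\Ck{1}(\Omega)}$ whenever $n\eps^{p+1} \geq C$; outside this regime the stated probability bound is vacuous, so the conclusion is trivial. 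A union bound over $i = 1,\dots,n$ then produces the required probability $1 - Cne^{-cn\eps^{d+2p+2}}$. I do not anticipate a significant obstacle here; the only point requiring care is using $\Ck{1}$ regularity to extract the full factor of $\eps$ from $w(x_i) - w(x_j)$, which is what makes $\sigma^2$ scale as $\eps^{-(d+2)}$ (rather than the worse $\eps^{-(d+4)}$ one would get without the cancellation) and thereby produces the correct $d + 2p + 2$ exponent in the probability bound.
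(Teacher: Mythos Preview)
Your proposal is correct and follows essentially the same approach as the paper's proof: fix a base point, view the graph Laplacian as an empirical average of iid summands with mean $\Delta_\eps w$, bound each summand and its variance using the $\Ck{1}$ regularity of $w$ (yielding $M\lesssim \eps^{-(d+1)}\|w\|_{\Ck{1}}$ and $\sigma^2\lesssim \eps^{-(d+2)}\|w\|_{\Ck{1}}^2$), apply Bernstein's inequality at level $\eps^p\|w\|_{\Ck{1}}$, and union bound over the $n$ data points. Your treatment of the $O(1/n)$ bias arising from $W_{ii}=0$ is in fact slightly more careful than the paper's, which silently absorbs this term.
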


\begin{proof}
Fix $w\in\Ck{1}(\Omega)$, $x\in\Omega_n$ and let $\Xi_i = \frac{2}{\eps^2} \eta_\eps(|x-y|) \l v(x)-v(y)\r$.
So,
\begin{equation} \label{eq:VarL2:DisCtsNoiseless:OpBounds:Xi}
\frac{1}{n}\sum_{i=1}^n \Xi_i = \Delta_n w(x) \qquad \text{and} \qquad \bbE[\Xi_i] = \Delta_\eps w(x).
\end{equation}
Note that
\[ \la \Xi_i-\bbE[\Xi_i]\ra \leq \frac{C\|w\|_{\Ck{1}(\Omega)}}{\eps^{d+1}} \qquad \text{and} \qquad \bbE\ls \Xi_i - \bbE[\Xi_i]\rs^2 \leq \frac{C\|w\|_{\Ck{1}(\Omega)}^2}{\eps^{d+2}}. \]
By Bernstein's inequality for any $t>0$,
\[ \bbP\l \sum_{i=1}^n \l \Xi_i - \bbE[\Xi_i] \r \geq t \r \leq \exp\l-\frac{ct^2\eps^{d+2}}{n\|w\|_{\Ck{1}(\Omega)}^2 + t\eps \|w\|_{\Ck{1}(\Omega)}}\r. \]
Choosing $t = n\eps^{p}\|w\|_{\Ck{1}(\Omega)}$ implies
\[ \bbP\l \sum_{i=1}^n \l \Xi_i - \bbE[\Xi_i] \r \geq n\eps^{p}\|w\|_{\Ck{1}(\Omega)} \r \leq \exp\l-\frac{cn\eps^{d+2p+2}}{1 + \eps^{p+1}}\r \leq \exp\l-cn\eps^{d+2p+2}\r. \]
Symmetrising the argument we have
\[ \la \sum_{i=1}^n \l \Xi_i - \bbE[\Xi_i] \r \ra \leq n\eps^{p}\|w\|_{\Ck{1}(\Omega)} \]
with probability at least $1-2e^{-cn\eps^{d+2p+2}}$.
Substituting in~\eqref{eq:VarL2:DisCtsNoiseless:OpBounds:Xi} and union bounding over all $x\in\Omega_n$ we have proved the lemma.
\end{proof}

Using the above lemma we can provide a bound on $\Delta_n^m-\Delta_\eps^m$.

\begin{lemma}
\label{lem:VarL2:DisctsNoiseless:OpBounds:LapBound}
Assume Assumptions~\ref{ass:Intro:DisOp:A1}-\ref{ass:Intro:DisOp:A4} hold and $m\in\bbN$.
Define $\Delta_n$ by~\eqref{eq:Intro:DisOp:Deltan} and $\Delta_\eps$ by~\eqref{eq:VarL2:DisCtsNoiseless:OpBounds:NLLap}.
Then, for all $\alpha>1$, there exists $C>c>0$ and $\eps_0>0$ such that for any $\eps, n$ satisfying~\eqref{eq:Intro:Res:VarL2:epsBound} and $v\in\Ck{2m-1}(\Omega)$ we have
\[ \lda \Delta_n^m v - \Delta_\eps^m v \rda_{\Lp{2}(\mu_n)} \leq C\|v\|_{\Ck{2m-1}(\Omega)} \]
with probability at least $1-Cne^{-cn\eps^{d+4m-2}}-Cn^{-\alpha}$.
\end{lemma}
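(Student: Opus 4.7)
The plan is to prove this by a telescoping decomposition of $\Delta_n^m - \Delta_\eps^m$ into single-operator differences, controlled by the pointwise estimate of Lemma \ref{lem:VarL2:DisCtsNoiseless:OpBounds:DeltanToDeltaeps}, and then to absorb the high powers of $\Delta_n$ that appear by the operator bound $\|\Delta_n\|_{\op} \leq C\eps^{-2}$ of Lemma \ref{lem:VarL2:Noise:OpBounds}. Concretely, I would write
\[
\Delta_n^m v - \Delta_\eps^m v = \sum_{k=1}^m \Delta_n^{m-k} (\Delta_n - \Delta_\eps) \Delta_\eps^{k-1} v,
\]
which is the standard $a^m-b^m$ identity for possibly non-commuting operators, and then bound each summand separately.

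For the $k$th summand, I would first use Lemma \ref{lem:VarL2:DisCtsNoiseless:OpBounds:NLLapOpBounds} iteratively to conclude that $\Delta_\eps^{k-1} v \in \Ck{1}(\Omega)$ with $\|\Delta_\eps^{k-1} v\|_{\Ck{1}(\Omega)} \leq C\|v\|_{\Ck{2k-1}(\Omega)} \leq C\|v\|_{\Ck{2m-1}(\Omega)}$, noting that each application of $\Delta_\eps$ costs two derivatives and the starting regularity $2m-1$ is exactly enough to reach $\Ck{1}$ after $m-1$ applications. Next, apply Lemma \ref{lem:VarL2:DisCtsNoiseless:OpBounds:DeltanToDeltaeps} to $w := \Delta_\eps^{k-1} v$ with the choice $p = 2m-2$, which gives pointwise (and hence $\Lp{2}(\mu_n)$) bound
\[
\lda (\Delta_n - \Delta_\eps) \Delta_\eps^{k-1} v \rda_{\Lp{2}(\mu_n)} \leq \eps^{2m-2}\, C\|v\|_{\Ck{2m-1}(\Omega)}
\]
on an event of probability at least $1 - Cne^{-cn\eps^{d+4m-2}}$, matching the probability in the statement. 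Finally I would pull out $\Delta_n^{m-k}$ using the operator bound from Lemma \ref{lem:VarL2:Noise:OpBounds}, so that
\[
\lda \Delta_n^{m-k} (\Delta_n - \Delta_\eps) \Delta_\eps^{k-1} v \rda_{\Lp{2}(\mu_n)} \leq C\eps^{-2(m-k)} \cdot \eps^{2m-2}\|v\|_{\Ck{2m-1}(\Omega)} = C\eps^{2(k-1)}\|v\|_{\Ck{2m-1}(\Omega)},
\]
which, for $\eps\leq \eps_0$, is uniformly bounded by $C\|v\|_{\Ck{2m-1}(\Omega)}$. Summing over $k=1,\dots,m$ and taking a union bound over the $m$ instances of Lemma \ref{lem:VarL2:DisCtsNoiseless:OpBounds:DeltanToDeltaeps} (and the single event from Lemma \ref{lem:VarL2:Noise:OpBounds}) yields the claim with the stated probability.

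The main technical checkpoint is ensuring that the exponent $p$ used in Lemma \ref{lem:VarL2:DisCtsNoiseless:OpBounds:DeltanToDeltaeps} is simultaneously (i) large enough to defeat the blow-up $\eps^{-2(m-k)}$ coming from the worst case $k=1$, and (ii) not so large that the probability bound $e^{-cn\eps^{d+2p+2}}$ becomes useless under the admissible scaling $\eps \geq C(\log n /n)^{1/d}$. The choice $p = 2m-2$ is exactly the balance: the exponent becomes $d + 4m - 2$, matching the statement, and the $k=1$ summand just barely avoids blowing up. Everything else is essentially bookkeeping: iterating the $\Ck{k+2}\to\Ck{k}$ loss through $\Delta_\eps$, and a union bound over the $O(m)$ events (with $m$ fixed).
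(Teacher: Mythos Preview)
Your proposal is correct and follows essentially the same approach as the paper: the same telescoping decomposition $\Delta_n^m - \Delta_\eps^m = \sum_k \Delta_n^{m-k}(\Delta_n-\Delta_\eps)\Delta_\eps^{k-1}$, the operator bound $\|\Delta_n\|_{\op}\leq C\eps^{-2}$ on the outer factor, Lemma~\ref{lem:VarL2:DisCtsNoiseless:OpBounds:DeltanToDeltaeps} on the middle factor, and the iterated $\Ck{k+2}\to\Ck{k}$ estimate on $\Delta_\eps^{k-1}v$. Your explicit identification of the choice $p=2m-2$ (forced by the worst case $k=1$) and the resulting exponent $d+4m-2$ is exactly what the paper uses implicitly; your regularity count $\|\Delta_\eps^{k-1}v\|_{\Ck{1}}\leq C\|v\|_{\Ck{2k-1}}$ is in fact slightly sharper than the paper's own bookkeeping, which ends at $\|v\|_{\Ck{2m}}$ rather than the $\|v\|_{\Ck{2m-1}}$ in the statement.
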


\begin{proof}
We can write
\begin{align*}
\lda \Delta_n^m v - \Delta_\eps^m v \rda_{\Lp{2}(\mu_n)} & \leq \sum_{i=0}^{m-1} \lda \Delta_n^{m-i} \Delta_\eps^i v - \Delta_n^{m-i-1} \Delta_\eps^{i+1} v \rda_{\Lp{2}(\mu_n)} \\
 & \leq \sum_{i=0}^{m-1} \| \Delta_n \|_{\op}^{m-i-1} \lda \l \Delta_n - \Delta_\eps\r \Delta_\eps^i v \rda_{\Lp{2}(\mu_n)} \\
 & \leq C\sum_{i=0}^{m-1} \| \Delta_\eps^i v\|_{\Ck{1}(\Omega)} \\
 & \leq C\sum_{i=0}^{m-1} \| v\|_{\Ck{2i+2}(\Omega)} \\
 & \leq C\|v\|_{\Ck{2m}(\Omega)}
\end{align*}
by Lemmas~\ref{lem:VarL2:Noise:OpBounds},  \ref{lem:VarL2:DisCtsNoiseless:OpBounds:NLLapOpBounds} and~\ref{lem:VarL2:DisCtsNoiseless:OpBounds:DeltanToDeltaeps} with probability at least $1-Cne^{-cn\eps^{d+4m-2}}-Cn^{-\alpha}$.
\end{proof}

\subsubsection{Proof of Theorem~\ref{thm:VarL2:DisCtsNoiseless:PolyLapCons}} \label{subsubsec:VarL2:DisCtsNoiseless:PolyLapConsProof}

Now, we note that
\begin{align*}
\Delta_n^s u(x) - \Delta_\rho^s u(x) & = \Delta_n^{s-1} \l \Delta_n - \Delta_\rho\r v^{(0)}(x) + \l \Delta_n^{s-1}-\Delta_\rho^{s-1}\r v^{(1)}(x) \\
& = \Delta_n^{s-1} \l \Delta_n - \Delta_\rho\r v^{(0)}(x) + \Delta_n^{s-2} \l \Delta_n - \Delta_\rho\r v^{(1)}(x) + \l \Delta_n^{s-2}-\Delta_\rho^{s-2}\r v^{(2)}(x) \\
& = \dots \\
& = \sum_{k=1}^s \Delta_n^{s-k} \l \Delta_n - \Delta_\rho\r v^{(k-1)}(x)
\end{align*}
where $v^{(i)} = \Delta_\rho^i u$.

The idea is now to use pointwise convergence but to keep track of higher order terms than the estimates that appear in~\cite{Singer,calder18}.
For example,~\cite{calder18} shows that if $f\in\Ck{3}$ then
\begin{equation} \label{eq:VarL2:DisCtsNoiseless:PolyLapConsProof:LapPoint}
\la \Delta_n f(x) - \Delta f(x) \ra \leq C\|f\|_{\Ck{3}} \vartheta,
\end{equation}
where $\vartheta\geq \eps$, with probability at least $1-Cne^{-cn\eps^{d+2}\vartheta^2}$.
Directly applying the operator bounds we have
\begin{align*}
\lda \Delta_n^s u - \Delta_\rho^s u \rda_{\Lp{2}(\mu_n)} & \leq  \sum_{k=1}^s \|\Delta_n \|_{\op}^{s-k} \lda \l\Delta_n - \Delta_\rho\r v^{(k-1)} \rda_{\Lp{2}(\Omega)} \\
& \leq C\sum_{k=1}^s \eps^{-2(s-k)} \| v^{(k-1)}\|_{\Ck{3}(\Omega)} \vartheta_k \\
& \leq  C\|u\|_{\Ck{2s+1}} \sum_{k=1}^s \eps^{-2(s-k)} \vartheta_k.
\end{align*}
If we could choose $\vartheta_k = \eps^{1+2(s-k)}$ then the proof is immediate; however the pointwise convergence result~\eqref{eq:VarL2:DisCtsNoiseless:PolyLapConsProof:LapPoint} requires $\vartheta\geq\eps$ which rules out this choice.
However, we will show that this gives the right answer, in particular, that the convergence is within $\eps$ with probability at least $1-Cne^{-cn\eps^{d+4s}}$.
The rest of the section is devoted to removing the assumption that $\vartheta_k\geq \eps$.

\begin{proof}[Proof of Theorem~\ref{thm:VarL2:DisCtsNoiseless:PolyLapCons}]
Let us fix $k$ and write $v = v^{(k-1)}$.
Then, assuming $u\in \Ck{2s+1}(\Omega)$ we have $v\in \Ck{2(s-k)+3}(\Omega)$ and so, for $y$ sufficiently close to $x$,
\[ v(y) = v(x) + \sum_{j=1}^{2(s-k+1)} \sum_{i^{(j)}\in\{1,\dots,d\}^j} a_{i^{(j)}}^{(j)} \prod_{\ell = 1}^j \l y_{i_\ell^{(j)}} - x_{i_\ell^{(j)}} \r + O\l \la y_{i_\ell^{(j)}} - x_{i_\ell^{(j)}}\ra^{2(s-k)+3}\r \]
where
\[ a_{i^{(j)}}^{(j)} = \frac{1}{j!} \frac{\partial^j v}{\partial x_{i_1^{(j)}}\cdots \partial x_{i_j^{(j)}}}(x) \]
and $i^{(j)} = (i_1^{(j)},\dots,i_j^{(j)}) \in \{1,\dots d\}^j$.
Now we can write
\begin{align*}
\Delta_n v(x) & = \frac{2}{n\eps^2} \sum_{y\in\Omega_n} W_{xy} (v(x) - v(y)) \\
& = - \frac{2}{n\eps^2} \sum_{y\in\Omega_n} W_{xy} \ls \sum_{j=1}^{2(s-k+1)} \sum_{i^{(j)} \in \{1,\dots d\}^j} a_{i^{(j)}}^{(j)} \prod_{\ell=1}^j \l y_{i_\ell^{(j)}} - x_{i_\ell^{(j)}}\r \rs + O\l \frac{\eps^{2(s-k)+1}}{n} \sum_{y\in\Omega_n} W_{xy}\r.
\end{align*}
By Lemma~\ref{lem:VarL2:Noise:DegreeBounds}, $\frac{1}{n} \sum_y W_{xy} \leq C$ for all $x\in\Omega_n$ with probability at least $1-2ne^{-cn\eps^d}$, hence
we can write (with probability at least $1-2ne^{-cn\eps^d}$)
\[ \Delta_n v(x) = -\sum_{j=1}^{2(s-k+1)} \sum_{i^{(j)} \in \{1,\dots d\}^j} a_{i^{(j)}}^{(j)} I_{i^{(j)}}^{(j)} + O(\eps^{2(s-k)+1}) \]
where
\[ I_{i^{(j)}}^{(j)} = \sum_{y\in\Omega_n} \Psi_{i^{(j)}}^{(j)}, \qquad \Psi_{i^{(j)}}^{(j)}(y) = \frac{2}{n\eps^2} W_{xy} \prod_{\ell=1}^j \l y_{i_\ell^{(j)}}^{(j)} - x_{i_\ell^{(j)}}^{(j)} \r. \]
Note that $\|\Psi\|_{\Lp{\infty}}\leq \frac{C\eps^{j-2-d}}{n}$ and $\bbE[\Psi(Y)^2]\leq \frac{C\eps^{2(j-2)-d}}{n^2}$.
Hence, by Bernstein's inequality
\begin{align*}
I_{i^{(j)}}^{(j)} & = \frac{2}{\eps^2} \int_{\Omega} \eta_\eps(|x-y)|) \ls \prod_{\ell=1}^j \l y_{i_\ell^{(j)}} - x_{i_\ell^{(j)}} \r \rs \rho(y) \, \dd y + O(\eps^{j-2}\vartheta) \\
 & = 2\eps^{j-2} \int_{\bbR^d} \eta(|z|) \ls \prod_{\ell=1}^j z_{i_\ell^{(j)}}\rs \rho(x+\eps z) \, \dd z + O(\eps^{j-2}\vartheta)
\end{align*}
with probability at least $1-2ne^{-cn\eps^d\vartheta^2}$ for all $x\in\Omega_n$.
After union bounding we may assume that the above estimate holds for all $x\in\Omega_n$, for all $k=1,\dots,s$, for all $j=1,\dots, k$, and for all $i^{(j)}\in\{1,\dots,d\}^j$ with probability at least $1-Cne^{-cn\eps^d\vartheta^2}$.
We choose $\vartheta = \eps^{2(s-k)+3-j}$ and so, since $\vartheta\geq \eps^{2s}$, the following holds with probability at least $1-Cne^{-cn\eps^{d+4s}}$.

Now we approximate
\[ \rho(x+\eps z) = \sum_{m=0}^{2(s-k+1)-j} \eps^m \sum_{p^{(m)}\in \{1,\dots,d\}^m} b_{p^{(m)}}^{(m)} \prod_{q=1}^m z_{p_q^{(m)}} + O(\eps^{2(s-k)-j+3}) \]
where
\[ b_{p^{(m)}}^{(m)} = \frac{1}{m!} \frac{\partial^m \rho}{\partial x_{p_1^{(m)}} \cdots \partial x_{p_m^{(m)}}}(x). \]
Hence,
\[ I_{i^{(j)}}^{(j)} = 2 \sum_{m=0}^{2(s-k+1)-j} \sum_{p^{(m)}\in \{1,\dots,d\}^m}  \eps^{m+j-2} b_{p^{(m)}}^{(m)}  \int_{\bbR^d} \eta(|z|) \ls \prod_{\ell=1}^j z_{i_\ell^{(j)}}\rs \ls \prod_{q=1}^m z_{p_q^{(m)}} \rs \, \dd z + O(\eps^{(2(s-k)+1}). \]

Let
\[ F(j,m) = \sum_{i^{(j)}\in \{1,\dots,d\}^j} \sum_{p^{(m)}\in \{1,\dots,d\}^m} a_{i^{(j)}}^{(j)} b_{p^{(m)}}^{(m)} C(i^{(j)},p^{(m)}) \]
and
\[ C(i^{(j)},p^{(m)}) = -2 \int_{\bbR^d} \eta(|z|) \ls \prod_{\ell=1}^j z_{i_\ell^{(j)}} \rs \ls \prod_{q=1}^m z_{p_q^{(m)}} \rs \, \dd z \]
so that
\[ \Delta_n v(x) = \sum_{j=1}^{2(s-k+1)} \sum_{m=0}^{2(s-k+1)-j} \eps^{m+j-2} F(j,m) + O(\eps^{(2(s-k)+1}). \]

We now look at the following terms: (i) $j=1$, $m=0$; (ii) $j=1$, $m=1$; and $j=2$, $m=0$ (the terms which are potentially of order $\eps^{-1}$ and $\eps^0$).
For (i),
\[ C(i,\emptyset) = -2 \int_{\bbR^d} \eta(|z|) z_i \, \dd z = 0. \]
For (ii),
\[ C(i,p) = -2 \int_{\bbR^d} \eta(|z|) z_iz_p \, \dd x = \lb \begin{array}{ll} 0 & \text{if } i\neq p \\ -2\sigma_\eta & \text{if } i=p. \end{array} \rd \]
For (iii),
\[ C((i_1,i_2),\emptyset) = -2 \int_{\bbR^d} \eta(|z|) z_{i_1} z_{i_2} \, \dd z =  \lb \begin{array}{ll} 0 & \text{if } i_1\neq i_2 \\ -2 \sigma_\eta & \text{if } i_1=i_2. \end{array} \rd \]
So $F(1,0) = 0$,
\[ F(1,1) = -2 \sigma_\eta \sum_{i=1}^d a_i^{(1)} b_i^{(1)} = -2\sigma_\eta\nabla v(x) \cdot \nabla \rho(x), \]
and
\[ F(2,0) = -2 \sigma_\eta \sum_{i=1}^d a_{i,i}^{(2)} b^{(0)} = -\sigma_\eta \rho(x) \trace(D^2 v(x)). \]
As $F(1,0)\eps^{-1} + F(1,1) + F(2,0) = -\frac{\sigma_\eta}{\rho(x)} \Div(\rho^2 \nabla v)(x) = \Delta_\rho v(x)$ then we have (adding back the $k$ dependence on $v$)
\begin{align*}
\Delta_n v^{(k-1)}(x) - \Delta_\rho v^{(k-1)}(x) & = \sum_{m=2}^{2(s-k)+1} \eps^{m-1} F(1,m) + \sum_{m=1}^{2(s-k)} \eps^{m} F(2,m) \\
& \qquad + \sum_{j=3}^{2(s-k+1)} \sum_{m=0}^{2(s-k+1)-j} \eps^{m+j-2} F(j,m) + O(\eps^{2(s-k)+1}).
\end{align*}
In particular, if we let $F_{j,m}^{(k)}(x) = F(j,m)$ and $O(\eps^{2(s-k)+1}) = \eps^{2(s-k)+1} E^{(k)}(x)$ then
\begin{align*}
\lda \l \Delta_n^s - \Delta_\rho^s\r u \rda _{\Lp{2}(\mu_n)} & \leq \sum_{k=1}^s \lda \Delta_n^{s-k} \l \Delta_n-\Delta_\rho \r  v^{(k-1)}\rda_{\Lp{2}(\mu_n)} \\
 & \leq \sum_{k=1}^s \sum_{m=2}^{2(s-k)+1} \eps^{m-1} \| \Delta_n^{s-k} F_{1,m}^{(k)} \|_{\Lp{2}(\mu_n)} + \sum_{k=1}^s \sum_{m=1}^{2(s-k)} \eps^{m}  \| \Delta_n^{s-k} F_{2,m}^{(k)} \|_{\Lp{2}(\mu_n)} \\
 & \qquad + \sum_{k=1}^s \sum_{j=3}^{2(s-k+1)} \sum_{m=0}^{2(s-k+1)-j} \eps^{m+j-2}  \| \Delta_n^{s-k} F_{j,m}^{(k)} \|_{\Lp{2}(\mu_n)} \\
 & \qquad + \sum_{k=1}^s \eps^{2(s-k)+1} \| \Delta_n^{s-k} E^{(k)} \|_{\Lp{2}(\mu_n)}.
\end{align*}

By Lemma~\ref{lem:VarL2:Noise:OpBounds} (with probability at least $1-Cn^{-\alpha}$) we have
\[ \eps^{2(s-k)+1} \| \Delta_n^{s-k} E^{(k)} \|_{\Lp{2}(\mu_n)} \leq \eps^{2(s-k)+1} \| \Delta_n \|_{\op}^{s-k} \| E^{(k)} \|_{\Lp{2}(\mu_n)} \leq \eps \| E^{(k)} \|_{\Lp{2}(\mu_n)}. \]

We also have $F_{j,m}^{(k)}\in\Ck{2(s-k)+3-j}$ and $\|F_{j,m}^{(k)}\|_{\Ck{2(s-k)+3-j}(\Omega)} \leq C\|u\|_{\Ck{2s+1}(\Omega)}$, therefore we have for $j\geq 3$
\begin{align*}
\eps^{m+j-2} \| \Delta_n^{s-k} F_{j,m}^{(k)} \|_{\Lp{2}(\mu_n)} & \leq \eps^{m+j-2} \| \Delta_n\|_{\op}^{\frac{j-3}{2}} \| \Delta_n^{\frac{2(s-k)+3-j}{2}} F_{j,m}^{(k)} \|_{\Lp{2}(\mu_n)} \\
 & \leq C \eps^{m+1} \l \|F_{j,m}^{(k)}\|_{\Ck{2(s-k)+3-j}(\Omega)} + 1 \r
\end{align*}
with probability at least $1-Cn^{-\alpha}-Cne^{-cn\eps^{d+4m-2}}\geq 1-Cn^{-\alpha}-Cne^{-cn\eps^{d+2s}}$ by Lemma~\ref{lem:VarL2:Noise:OpBounds} and Proposition~\ref{prop:VarL2:DisCtsNoiseless:OpBounds:GraLapOpBounds}.
When $j=1,2$ we have, directly from Proposition~\ref{prop:VarL2:DisCtsNoiseless:OpBounds:GraLapOpBounds},
\[ \|  \Delta_n^{s-k} F_{j,m}^{(k)} \|_{\Lp{2}(\mu_n)} \leq \| F_{j,m}^{(k)} \|_{\Ck{2(s-k)}(\Omega)} \leq \| F_{j,m}^{(k)} \|_{\Ck{2(s-k)+3-j}(\Omega)} \leq C \| u\|_{\Ck{2s+1}(\Omega)} \]
with probability at least $1-Cne^{-cn\eps^{d+2s}}$.
Hence,
\[ \lda \l \Delta_n^s - \Delta_\rho^s\r u \rda _{\Lp{2}(\mu_n)} \leq C\eps \l \| u\|_{\Ck{2s+1}(\Omega)} + 1 \r \]
with probability at least $1-Cn^{-\alpha} - Cne^{-cn\eps^{d+4s}}$.
\end{proof}

\begin{remark}
In our proofs we avoid attempting to establish pointwise consistency results for the difference $\Delta_n^s - \Delta_\rho^s$ (for arbitrary $s\in\mathbb{N}$) when acting on smooth enough functions, and instead by careful manipulation of the equations, we rely only on the existing pointwise consistency results for the case $s=1$~\cite{HeinvonLuxburgAudibert,calder18}.
\end{remark}

\subsubsection{Proof of Proposition \ref{prop:VarL2:DisCtsNoiseless:NoiselessLim}}  \label{subsubsec:VarL2:DisCtsNoiseless:L2VarProof}

We start with two preliminary lemmas which will be used in the proof of Proposition~\ref{prop:VarL2:DisCtsNoiseless:NoiselessLim}.

\begin{lemma}
\label{lem:VarL2:DisCtsNoiseless:L2VarProof:SolBound}
Let $\tau>0$, $s>0$, $\Delta_n$ be defined by~\eqref{eq:Intro:DisOp:Deltan} and $\Delta_\rho$ defined by~\eqref{eq:Intro:ContOp:Delta} where $\sigma_\eta$ is defined by~\eqref{eq:Intro:ContOp:sigmaeta}.
Assume $w_n$ and $w$ solve
\begin{align*}
\tau \Delta_n^s w_n + w_n & = h_n \\
\tau \Delta_\rho^s w + w & = h
\end{align*} 
for $h_n\in\Lp{2}(\mu_n)$ and $h\in\Lp{2}(\mu)$.
Then,
\begin{align*}
\|w_n\|_{\Lp{2}(\mu_n)} & \leq \|h_n\|_{\Lp{2}(\mu_n)} \\
\|w\|_{\Lp{2}(\mu)} & \leq \|h\|_{\Lp{2}(\mu)}.
\end{align*}
\end{lemma}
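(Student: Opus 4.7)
The plan is to prove both bounds by the standard energy / testing argument, using the fact that $\Delta_n^s$ and $\Delta_\rho^s$ are self-adjoint and positive semi-definite on $\Lp{2}(\mu_n)$ and $\Lp{2}(\mu)$ respectively.

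First I would record the operator-theoretic facts needed. The graph Laplacian $\Delta_n$ defined by~\eqref{eq:Intro:DisOp:Deltan} is self-adjoint with respect to $\langle\cdot,\cdot\rangle_{\Lp{2}(\mu_n)}$; as noted in Section~\ref{subsec:Intro:DisOp} this is the unnormalised Laplacian, whose associated quadratic form $\langle \Delta_n u,u\rangle_{\Lp{2}(\mu_n)} = \frac{1}{n^2\eps^2}\sum_{i,j} W_{i,j}(u(x_i)-u(x_j))^2$ is non-negative. Hence $\Delta_n$ has a non-negative spectrum and, for any integer (or real) $s\geq 0$, $\Delta_n^s$ is self-adjoint and positive semi-definite, so $\langle \Delta_n^s v,v\rangle_{\Lp{2}(\mu_n)}\geq 0$ for every $v\in\Lp{2}(\mu_n)$. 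Exactly the same holds for $\Delta_\rho$ on $\Lp{2}(\mu)$: since $\rho$ is smooth and bounded below, integration by parts gives $\langle \Delta_\rho u,u\rangle_{\Lp{2}(\mu)} = \sigma_\eta \int_\Omega |\nabla u|^2\rho^2\,\dd x\geq 0$, and then the spectral representation recalled in Section~\ref{subsec:Intro:ContOp} yields $\langle \Delta_\rho^s w,w\rangle_{\Lp{2}(\mu)}\geq 0$.

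Next I would simply test the two equations against their respective solutions. Taking the $\Lp{2}(\mu_n)$ inner product of $\tau\Delta_n^s w_n + w_n = h_n$ with $w_n$ yields
\[
\tau \langle \Delta_n^s w_n,w_n\rangle_{\Lp{2}(\mu_n)} + \|w_n\|_{\Lp{2}(\mu_n)}^2 = \langle h_n,w_n\rangle_{\Lp{2}(\mu_n)}.
\]
Dropping the non-negative first term and applying Cauchy--Schwarz to the right-hand side gives
\[
\|w_n\|_{\Lp{2}(\mu_n)}^2 \leq \|h_n\|_{\Lp{2}(\mu_n)}\|w_n\|_{\Lp{2}(\mu_n)},
\]
from which the first bound follows on dividing through by $\|w_n\|_{\Lp{2}(\mu_n)}$ (the case $w_n\equiv 0$ being trivial). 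The continuum bound is obtained by the identical argument: pair $\tau\Delta_\rho^s w + w = h$ with $w$ in $\Lp{2}(\mu)$, discard the non-negative term $\tau\langle \Delta_\rho^s w,w\rangle_{\Lp{2}(\mu)}$, and apply Cauchy--Schwarz.

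There is no genuine obstacle here; the only small thing to verify cleanly is self-adjointness of $\Delta_n$ on $\Lp{2}(\mu_n)$ (the factor $\frac{1}{n}$ in the inner product combined with the symmetric matrix $D_n-W_n$ makes this straightforward) and the positivity of $\Delta_\rho^s$ in the continuum (which, for non-integer $s$, uses the spectral definition in Section~\ref{subsec:Intro:ContOp}, but for integer $s$ is immediate by iterating the quadratic form identity above).
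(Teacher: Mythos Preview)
Your argument is correct. It differs slightly from the paper's proof, which expands $w_n$ and $h_n$ in the eigenbasis $\{q_i^{(n)}\}$ of $\Delta_n$, writes $\langle w_n,q_i^{(n)}\rangle_{\Lp{2}(\mu_n)} = (1+\tau[\lambda_i^{(n)}]^s)^{-1}\langle h_n,q_i^{(n)}\rangle_{\Lp{2}(\mu_n)}$, and bounds the $\Lp{2}$ norm by Parseval using $(1+\tau[\lambda_i^{(n)}]^s)^{-1}\leq 1$; the continuum case is handled analogously. Your energy/testing argument is a touch more elementary since it avoids invoking the eigendecomposition, while the paper's spectral route yields the explicit solution formula (which is reused immediately afterwards in Lemma~\ref{lem:VarL2:DisCtsNoiseless:L2VarProof:utau*Bound}). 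Both rest on exactly the same fact---positive semi-definiteness of $\Delta_n^s$ and $\Delta_\rho^s$---so the difference is purely presentational.
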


\begin{proof}
Let $\{q_i^{(n)}\}_{i=1}^n$ be an eigenbasis of $\Delta_n$ with non-negative eigenvalues $\{\lambda_i^{(n)}\}_{i=1}^n$.
Then $w_n$ solving $\tau \Delta_n^s w_n + w_n = h_n$ implies
\[ \l\tau [\lambda_i^{(n)}]^s + 1 \r \langle w_n,q_i^{(n)}\rangle_{\Lp{2}(\mu_n)} = \langle h_n, q_i^{(n)}\rangle_{\Lp{2}(\mu_n)}. \]
So,
\[ \|w_n\|^2_{\Lp{2}(\mu_n)} = \sum_{i=1}^n \la \langle w_n,q_i^{(n)}\rangle_{\Lp{2}(\mu_n)} \ra^2 = \sum_{i=1}^n \la \frac{\langle h_n,q_i^{(n)}\rangle_{\Lp{2}(\mu_n)}}{1+\tau [\lambda_i^{(n)}]^s} \ra^2 \leq \sum_{i=1}^n \la \langle h_n,q_i^{(n)}\rangle_{\Lp{2}(\mu_n)} \ra^2 = \|h_n\|^2_{\Lp{2}(\mu_n)}. \]
The proof for $\|w\|_{\Lp{2}(\mu)} \leq \|h\|_{\Lp{2}(\mu)}$ is analogous.
\end{proof}

\begin{lemma}
\label{lem:VarL2:DisCtsNoiseless:L2VarProof:utau*Bound}
Assume Assumptions~\ref{ass:Intro:DisOp:A3} and~\ref{ass:Intro:DisOp:A5} hold and $s>0$.
Define $\Delta_\rho$ by~\eqref{eq:Intro:ContOp:Delta} where $\sigma_\eta$ is defined by~\eqref{eq:Intro:ContOp:sigmaeta}.
Let $u_\tau^*$ be the solution to~\eqref{eq:VarL2:DisCtsNoiseless:ELcEinftytau}.
Then, for all $\tau_0>0$ there exists $C$ such that
\[ \sup_{\tau\in (0,\tau_0)} \| u_\tau^{*}\|_{\Ck{2s+1}(\Omega)} \leq C. \]
\end{lemma}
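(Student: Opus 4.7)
The plan is to exploit the fact that the equation $\tau \Delta_\rho^s u_\tau^* + u_\tau^* = g$ is stable under differentiation in a particularly clean way. The key observation is that if we apply $\Delta_\rho^k$ to both sides of~\eqref{eq:VarL2:DisCtsNoiseless:ELcEinftytau} and set $w = \Delta_\rho^k u_\tau^*$, then $w$ itself satisfies the same type of equation, namely
\[ \tau \Delta_\rho^s w + w = \Delta_\rho^k g. \]
Since $g \in \Cinfty(\Omega)$ and $\rho \in \Cinfty(\Omega)$ with $\rho$ bounded above and below, the right-hand side is smooth and bounded independently of $\tau$. Applying the second part of Lemma~\ref{lem:VarL2:DisCtsNoiseless:L2VarProof:SolBound} to $w$ then gives, for every $k \in \bbN$,
\[ \|\Delta_\rho^k u_\tau^*\|_{\Lp{2}(\mu)} \;\leq\; \|\Delta_\rho^k g\|_{\Lp{2}(\mu)}, \]
which is a finite quantity independent of $\tau$.

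Next, I would convert these $\Lp{2}$ bounds on powers of $\Delta_\rho$ into Sobolev bounds on $u_\tau^*$. The operator $\Delta_\rho = -\tfrac{\sigma_\eta}{\rho}\Div(\rho^2 \nabla \cdot)$ is, thanks to assumption~\ref{ass:Intro:DisOp:A3}, a uniformly elliptic second-order operator with smooth coefficients on the smooth compact manifold $\Omega$ (the torus). Standard elliptic regularity (iterated $k$ times) then gives, for any $k \in \bbN$,
\[ \|u_\tau^*\|_{\Hk{2k}(\Omega)} \;\leq\; C_k \sum_{j=0}^{k} \|\Delta_\rho^j u_\tau^*\|_{\Lp{2}(\Omega)} \;\leq\; C_k' \sum_{j=0}^{k} \|\Delta_\rho^j g\|_{\Lp{2}(\Omega)}, \]
where the equivalence of $\Lp{2}(\mu)$ and $\Lp{2}(\Omega)$ is used (again via the two-sided bound on $\rho$), and the constants depend only on $k$, $\rho$, $\sigma_\eta$, and $\Omega$. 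Finally, choosing $k$ large enough so that $2k > 2s+1 + d/2$ and applying the Sobolev embedding $\Hk{2k}(\Omega) \hookrightarrow \Ck{2s+1}(\Omega)$ yields the desired uniform bound
\[ \sup_{\tau \in (0,\tau_0)} \| u_\tau^*\|_{\Ck{2s+1}(\Omega)} \;\leq\; C. \]

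The only mildly subtle point is that the argument gives $\tau$-independent constants; this is automatic because the operator bound from Lemma~\ref{lem:VarL2:DisCtsNoiseless:L2VarProof:SolBound} is $\tau$-independent (the ``$1$'' on the left of $\tau \Delta_\rho^s w + w = \Delta_\rho^k g$ already dominates) and the elliptic regularity estimate for $\Delta_\rho$ is purely geometric, independent of the auxiliary equation. No genuine obstacle arises: the whole proof is a clean bootstrap combining Lemma~\ref{lem:VarL2:DisCtsNoiseless:L2VarProof:SolBound}, interior elliptic regularity on the torus, and Sobolev embedding. If one wished to allow non-integer $s$, the argument still goes through using the spectral calculus, since the factor $\tau \lambda^s + 1 \geq 1$ for all nonnegative eigenvalues $\lambda$ of $\Delta_\rho$.
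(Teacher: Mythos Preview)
Your proof is correct and essentially the same as the paper's: both bound $\|\Delta_\rho^k u_\tau^*\|_{\Lp{2}}$ uniformly in $\tau$ via the contraction $(1+\tau\lambda^s)^{-1}\leq 1$, then convert to $\Hk{2k}$ bounds and apply Sobolev embedding. The only cosmetic difference is that the paper carries this out in the spectral norm $\|u\|_{\cH^k}^2=\sum_i\lambda_i^k\langle u,q_i\rangle_{\Lp{2}(\mu)}^2$ and quotes a $\cH^k\simeq\Hk{k}$ equivalence from~\cite{dunlop18}, whereas you reach the same conclusion by applying $\Delta_\rho^k$ to the equation, invoking Lemma~\ref{lem:VarL2:DisCtsNoiseless:L2VarProof:SolBound}, and citing standard elliptic regularity directly.
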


\begin{proof}
Let $\{(\lambda_i,q_i)\}_{i=1}^\infty$ be eigenpairs of $\Delta_\rho$.
Define $\cH^k(\Omega) = \lb u\in\Lp{2}(\mu) \,:\, \sum_{i=1}^\infty \lambda_i^k \langle u,q_i\rangle_{\Lp{2}(\mu)}^2 < +\infty\rb$ with the norm $\|u\|_{\cH^k(\Omega)}^2 = \sum_{i=1}^\infty \lambda_i^k \langle u,q_i\rangle_{\Lp{2}(\mu)}^2$.
And let $\Hk{k}(\Omega)$ be the usual Sobolev space with square integrable $k$th (weak) derivative.
By~\cite[Lemma 17]{dunlop18} $\cH^k(\Omega)\subseteq \Hk{k}(\Omega)$ and there exists $C>c>0$ (depending only on the choice of $k$) such that
\[ C\| u\|_{\Hk{k}(\Omega)} \geq \|u\|_{\Hk{k}(\Omega)} \geq c\| u\|_{\cH^k(\Omega)} \qquad \text{for all } u \in\cH^k(\Omega). \]
As in the proof of Lemma~\ref{lem:VarL2:DisCtsNoiseless:L2VarProof:SolBound} we take advanatge of the fact that $\langle u_\tau^*,q_i\rangle_{\Lp{2}(\mu)} = \frac{\langle g,q_i\rangle_{\Lp{2}(\mu)}}{1+\tau\lambda_i^s}$ to infer
\begin{align*}
\| u_\tau^*\|_{\cH^k(\Omega)}^2 & = \sum_{i=1}^\infty \lambda_i^k \langle u_\tau^*,q_i\rangle_{\Lp{2}(\mu)}^2 \\
 & = \sum_{i=1}^\infty \lambda_i^k \la \frac{\langle g,q_i\rangle_{\Lp{2}(\mu)}}{1+\tau \lambda_i^s} \ra^2 \\
 & \leq \sum_{i=1}^\infty \lambda_i^k \langle q,q_i\rangle_{\Lp{2}(\mu)}^2 \\
 & = \| g\|_{\cH^k(\Omega)}^2 \\
 & \leq C^2 \|g\|_{\Hk{k}(\Omega)}^2.
\end{align*}
Hence $\|u_\tau^*\|_{\Hk{k}(\Omega)} \leq \frac{C}{c} \|g\|_{\Hk{k}(\Omega)}$.
By choosing $k$ sufficiently large and employing Morrey's inequality we can find $\bar{C}$ such that $\|u\|_{\Ck{2s+1}(\Omega)}\leq \bar{C} \|u\|_{\Hk{k}(\Omega)}$ for all $u\in \Hk{k}(\Omega)$.
In particular, $\|u_\tau^*\|_{\Ck{2s+1}(\Omega)} \leq \frac{C\bar{C}}{c} \|g\|_{\Hk{k}(\Omega)}$ which proves the lemma.
\end{proof}

We can now prove Proposition~\ref{prop:VarL2:DisCtsNoiseless:NoiselessLim}.

\begin{proof}[Proof of Proposition \ref{prop:VarL2:DisCtsNoiseless:NoiselessLim}]
We have
\[ \tau \Delta_n^s u_\tau^* + u_\tau^* - g = \tau \l \Delta_n^s - \Delta_\rho\r u_\tau^*. \]
So, letting $w= u_{\tau}^{*}\lfloor_{\Omega_n} - u_{n,\tau}^{g^*}$ we can bound
\[ \tau \Delta_n^s w + w = \tau \l \Delta_n^s - \Delta_\rho\r u_\tau^*. \]
By Lemma~\ref{lem:VarL2:DisCtsNoiseless:L2VarProof:SolBound} and~Theorem~\ref{thm:VarL2:DisCtsNoiseless:PolyLapCons}
\[ \|w\|_{\Lp{2}(\mu_n)} \leq \tau \lda \l \Delta_n^s - \Delta_\rho^s\r u_\tau^* \rda_{\Lp{2}(\mu_n)} \leq C\tau \eps \l \|u_\tau^{*}\|_{\Ck{2s+1}(\Omega)} + 1 \r \]
with probability at least $1-Cn^{-\alpha} - Cne^{-cn\eps^{d+4s}}$.
By Lemma~\ref{lem:VarL2:DisCtsNoiseless:L2VarProof:utau*Bound} $\|u_\tau^{*}\|_{\Ck{2s+1}(\Omega)}$ can be bounded for all $\tau\in(0,\tau_0)$.
This completes the proof of the first inequality.

We can derive the second inequality from the first inequality and Theorem~\ref{thm:VarL2:DisCtsNoiseless:PolyLapCons} as follows
\begin{align*}
\lda \Delta_n^{\frac{s}{2}} u_{n,\tau}^{g^*} - \Delta_\rho^{\frac{s}{2}} u_{\tau}^{*}\lfloor_{\Omega_n}\rda_{\Lp{2}(\mu_n)}^2 & \leq 2\lda \Delta_n^{\frac{s}{2}} u_{n,\tau}^{g^*} - \Delta_n^{\frac{s}{2}} u_{\tau}^{*}\rda_{\Lp{2}(\mu_n)}^2 + 2 \lda \Delta_n^{\frac{s}{2}} u_{\tau}^{*} - \Delta_\rho^{\frac{s}{2}} u_{\tau}^{*}\lfloor_{\Omega_n}\rda_{\Lp{2}(\mu_n)}^2 \\
 & \leq 2 \lda \Delta_n^{s} \l u_{n,\tau}^{g^*} - u_{\tau}^{*}\lfloor_{\Omega_n} \r \rda_{\Lp{2}(\mu_n)} \lda u_{n,\tau}^{g^*} - u_{\tau}^{*}\lfloor_{\Omega_n}\rda_{\Lp{2}(\mu_n)} \\
 & \qquad \qquad + C\eps^2\l \|u_\tau^{*}\|_{\Ck{s+1}(\Omega)} + 1\r^2 \\
\end{align*}
with probability at least $1-Cn^{-\alpha}-Cne^{-cn\eps^{d+4s}}$.
Comparing the Euler-Lagrange equations we have
\[ \tau \Delta_n^s \l u_{n,\tau}^{g*} - u_\tau^*\lfloor_{\Omega_n} \r + \l u_{n,\tau}^{g*} - u_\tau^* \r = \tau\l\Delta_\rho^s - \Delta_n^s \r u_\tau^*. \]
By Theorem~\ref{thm:VarL2:DisCtsNoiseless:PolyLapCons}  we can derive the bound
\begin{align*}
\lda \Delta_n^s \l u_{n,\tau}^{g*} - u_\tau^*\lfloor_{\Omega_n} \r \rda_{\Lp{2}(\mu_n)} & \leq \frac{1}{\tau} \lda u_{n,\tau}^{g*} - u_\tau^*\lfloor_{\Omega_n}\rda_{\Lp{2}(\mu_n)} + \lda \l\Delta_\rho^s - \Delta_n^s \r u_\tau^* \rda_{\Lp{2}(\mu_n)} \\
 & \leq C\eps\l 1+\|u_\tau^*\|_{\Ck{2s+1}(\Omega)} \r
\end{align*}
with probability at least $1-Cn^{-\alpha}-Cne^{-cn\eps^{d+4s}}$.
Therefore,
\[ \lda \Delta_n^{\frac{s}{2}} u_{n,\tau}^{g^*} - \Delta_\rho^{\frac{s}{2}} u_{\tau}^{*}\lfloor_{\Omega_n}\rda_{\Lp{2}(\mu_n)}^2 \leq C\eps^2 \l \tau + (1+\tau) \|u^*_\tau\|_{\Ck{2s+1}(\Omega)} \r \]
with probability at least $1-Cn^{-\alpha}-Cne^{-cn\eps^{d+4s}}$.
If $\tau\leq \tau_0$ then we can bound by $C\eps^2$ as required.
\end{proof}

Putting together Proposition~\ref{prop:VarL2:Noise:NoiseDisEst} and Proposition~\ref{prop:VarL2:DisCtsNoiseless:NoiselessLim} proves Theorem~\ref{thm:Intro:Res:VarL2:VarBound} and Remark~\ref{rem:Intro:Res:VarL2:VarDeltaBound}.

\section{\texorpdfstring{$\Lp{2}$}{L2} Bias Estimates} \label{sec:BiasL2}

Recalling that the Fr\'echet derivative of $\cEinftyg$ is
\[ \frac12 \nabla_{\Ltwo(\mu)} \cEinftyg(u) = \tau\Delta_\rho^s u + u - g \]
then one can easily check that the following subgradient equality holds
\begin{equation} \label{eq:BiasL2:SubGradEq}
\langle \nabla_{\Ltwo(\mu)} \cEinftyg(w), w-v\rangle_{\Ltwo(\mu)} - \| w-v\|_{\Ltwo(\mu)}^2 - \tau \|\Delta_\rho^{\frac{s}{2}} (w-v) \|_{\Ltwo(\mu)}^2 = \cEinftyg(w) - \cEinftyg(v)
\end{equation}
for any $v,w \in \Hs(\Omega)$.
Since $\nabla_{\Ltwo(\mu)} \cEinftyg(u_\tau^*) = 0$ and $g$ is sufficiently regular then
\begin{align*}
& \| u_\tau^* -g\|_{\Ltwo(\mu)}^2 + \tau \| \Delta_\rho^{\frac{s}{2}} (u_\tau^* - g)\|_{\Ltwo(\mu)}^2 = \cEinftyg(g) - \cEinftyg(u_\tau^*) \\
& \hspace{1cm} = \langle \nabla_{\Ltwo(\mu)} \cEinftyg(g), g-u_\tau^*\rangle_{\Ltwo(\mu)} - \| g-u_\tau^*\|_{\Ltwo(\mu)}^2 - \tau \|\Delta_\rho^{\frac{s}{2}} (g-u_\tau^*) \|_{\Ltwo(\mu)}^2
\end{align*}
where for the first equality we let $w=u_\tau^*$, $v=g$ in~\eqref{eq:BiasL2:SubGradEq}, and in the second equality we let $w=g$, $v=u_\tau^*$ in~\eqref{eq:BiasL2:SubGradEq}.
Hence,
\begin{align*}
\| u_\tau^* -g\|_{\Ltwo(\mu)}^2 + \tau \| \Delta_\rho^{\frac{s}{2}} (u_\tau^* - g)\|_{\Ltwo(\mu)}^2 & = \frac12\langle \nabla_{\Ltwo(\mu)} \cEinftyg(g), g-u_\tau^*\rangle_{\Ltwo(\mu)} \\
& \leq \frac12 \| \nabla_{\Ltwo(\mu)} \cEinftyg(g)\|_{\Ltwo(\mu)} \|g-u_\tau^*\|_{\Ltwo(\mu)}.
\end{align*}
It follows that
\[ \| u_\tau^* -g\|_{\Ltwo(\mu)}\leq \frac12 \| \nabla_{\Ltwo(\mu)} \cEinftyg(g)\|_{\Ltwo(\mu)} = \tau \|\Delta_\rho^s g\|_{\Ltwo(\mu)} \]
and
\[ \| \Delta_\rho^{\frac{s}{2}} (u_\tau^* - g)\|_{\Ltwo(\mu)}^2 \leq \frac12 \| \nabla_{\Ltwo(\mu)} \cEinftyg(g)\|_{\Ltwo(\mu)} \|\Delta_\rho^s g\|_{\Ltwo(\mu)} = \frac{\tau}{2} \|\Delta_\rho^s g\|_{\Ltwo(\mu)}^2 \]
which proves Theorem~\ref{thm:Intro:Res:BiasL2:BiasBound} and Remark~\ref{rem:Intro:Res:BiasL2:BiasDeltaBound}.

\section*{Acknowledgements}

NGT was supported by NSF Grant DMS 1912802.
MT was supported by the European Research Council under the European Union’s Horizon 2020 research and innovation programme Grant Agreement No. 777826 (NoMADS).

\bibliographystyle{plain}
\bibliography{references}

\end{document}